\newcommand{\naive}{na\"{\i}ve}
\newcommand{\Naive}{Na\"{\i}ve}
\newcommand{\purple}[1]{{\leavevmode\color[RGB]{128,0,255}{#1}}}
\newcommand{\orange}[1]{{\leavevmode\color[RGB]{229,83,0}{#1}}}
\newcommand{\darkgray}[1]{{\leavevmode\color[RGB]{120,120,120}{#1}}}
\newcommand{\cyan}[1]{{\leavevmode\color{black}{#1}}}
\newcommand{\inbal}[1]{\purple{[INBAL: {#1}]}}
\newcommand{\vineet}[1]{\cyan{[VINEET: {#1}]}}
\newcommand{\ganesh}[1]{\orange{[GANESH: {#1}]}}
\newcommand\expect[2]{\mathbb{E}_{#1}{\left[ {#2} \right]}}
\DeclareMathOperator*{\argmax}{argmax}
\DeclareMathOperator*{\argmin}{argmin}
\DeclareMathOperator{\sign}{sign}
\newcommand{\one}[1]{\mathds{1}{\{{#1}\}}}
\theoremstyle{plain}
\newtheorem{theorem}{Theorem}[section]
\newtheorem{lemma}[theorem]{Lemma}
\newtheorem{observation}[theorem]{Observation}
\theoremstyle{definition}
\newtheorem{definition}[theorem]{Definition}
\theoremstyle{remark}
\newtheorem{example}{Example}
\newcommand{\R}{\mathbb{R}}
\newcommand{\N}{\mathbb{N}}
\newcommand{\X}{{}\mathcal{X}}
\newcommand{\Z}{\mathcal{Z}}
\newcommand{\hhat}{{\hat{h}}}
\newcommand{\phat}{{\hat{p}}}
\newcommand{\muhat}{{\hat{\mu}}}
\newcommand{\w}{\bm{w}}
\newcommand{\rep}{{\phi}}
\newcommand{\pmone}{\{\pm 1 \}}
\newcommand{\lift}{\mathrm{lift}}
\newcommand{\sbstk}{\Gamma}
\newcommand{\err}[2]{{\varepsilon^{{#1}}_{{#2}}}}
\newcommand{\errexp}{{\err{}{}}}
\title{Strategic Representation}
\author[1]{Vineet Nair\thanks{Vineet Nair is now at Google Research, India.}}
\author[2]{Ganesh Ghalme\thanks{This work was done when Ganesh was a post-doctoral fellow at Technion.}}
\author[1]{Inbal Talgam-Cohen}
\author[1]{Nir Rosenfeld}
\affil[1]{Technion Israel Institute of Technology, Haifa, Israel.}
\affil[2]{Indian Institute of Technology Hyderabad, India.}
\begin{document}
\maketitle
\begin{abstract}
%Humans have come to rely on machines for reducing excessive information to manageable representations. But this reliance can be abused---strategic machines might craft such representations to manipulate their users.  How can a user learn to make careful choices based on strategic representations? We formalize this problem and pursue algorithms for decision making that are robust to manipulation. We study a key interaction of a system and user: the system represents the characteristics of some item to the user, who must then decide whether to consume it not. We view this interaction through the ``strategic classification’’ lens (Hardt et al. 2016)---as a game between the user aiming to classify the item presented to her, and the system providing the item’s features, possibly in a selective way. Our problem is however fundamentally different, both in the role reversal of the user (who classifies) and system (who manipulates), and in that the representation has to be the truth, but might not show the whole truth. Technically this means the user is learning set functions, which presents distinct algorithmic and statistical challenges. For the classifying user, our results give a complete characterization of the trade-off between a user’s learning effort and her susceptibility to manipulation. For the manipulating system, we show that surprisingly some of its choices are aligned with the user’s interests. 

Humans have come to rely on machines for reducing excessive information to manageable representations. But this reliance can be abused---strategic machines might craft representations that manipulate their users. 
How can a user make good choices based on strategic representations?
We formalize this as a learning problem,
and pursue algorithms for decision-making that are robust to manipulation.
In our main setting of interest,
the system represents attributes of an item to the user,
who then decides whether or not to consume.
We model this interaction through the lens of strategic classification (Hardt et al. 2016),
\emph{reversed}: the user, who learns, plays first;
and the system, which responds, plays second.
The system must respond with representations that
reveal `nothing but the truth'
but need not reveal the entire truth.
Thus, the user faces the problem of learning set functions
under strategic subset selection,
which presents distinct algorithmic and statistical challenges.
Our main result is a learning algorithm that minimizes error despite strategic representations,
and our theoretical analysis %give a complete characterization of
sheds light on the trade-off between learning effort and susceptibility to manipulation. %For the manipulating system, we show that surprisingly its 

%---as a Stackelberg game between the user, who classifies the item as ``worthwhile'' or not, and between the system, which provides the item’s attributes in a possibly selective way. The main problem is to learn to classify given selective features. There are also 
% but with key differences: %from strategic classification: 
% The roles of the user %(who classifies the item in our setting) 
% and system %(who manipulates the attributes) 
% are reversed; %raising interesting ``balance of power'' questions; also, %while 
% also, the system's representation might not be the whole truth but it can include nothing but true attributes.
% The classifying user is thus learning a set function,
% which presents distinct algorithmic and statistical challenges.
% Our main result is a learning algorithm that minimizes the classification error despite strategic representations.
% %For the classifying user, 
% Our analysis %give a complete characterization of
% sheds light on the trade-off between learning effort and susceptibility to manipulation. %For the manipulating system, we show that surprisingly its preferences may be aligned with the user’s interests. 
\end{abstract}

\section{Introduction}

There is a growing recognition that learning systems deployed
in social settings are likely to induce strategic interactions between the system and its users.
One promising line of research in this domain is \emph{strategic classification} \cite{HardtMPW16},
which studies learning in settings where users can strategically modify their features in response to a learned classification rule.
The primary narrative in strategic classification is one of  self-interested users that act to `game' the system,
and of systems that should defend against this form of gaming behavior.
% Strategic classification tells a story of how self-interested users act to `game' the system;
% works often focus on how to learn classifiers that are strategically-robust.
% learning systems are susceptible to `gaming' by self-interested users that can modify their features to manipulate the system.
% Strategic classification tells the story of how learning systems are susceptible to `gaming' by self-interested users. 
In this paper, we initiate the study of \emph{reversed} scenarios,
in which it is the system that strategically games its users.
We argue that this is quite routine:
In settings where users make choices about items
(e.g., click, buy, watch),
decisions are often made on the basis of only partial information.
But the choice of \emph{what} information to present to users
often lies in the hands of the system, which can utilize its representation power to promote its own goals.
% Systems often hold the power to determine which information is protraied,
% i.e., how items are \emph{represented}---a power they can utilize
% to promote their own goals.
Here we formalize the problem
of \emph{strategic representation},
and study how learning can now aid \emph{users} in choosing well
despite strategic system behavior.
% choose wisely in this setting.

% in act upon the system's representation.

% The reason for this may be exogenous;
% for example, a recommended movie cannot be presented 
% to users in its entirety; rather, it is represented
% as a thumbnail image depicting one of its scenes.
% Crucially, the choice of which image to show

% especially in settings where users choose how to act based on information provided by the system. In these, users are presented with \emph{representations} of items,
% possibly hiding certain details that are not within the system's best interest to reveal.

As a concrete example, consider a 
user browsing for a hotel in an online booking platform.
Hotels on the platform are represented by
% hotel booking platform, in which a hotel is presented to a user 
% alongside
a small set of %representative
images;
as there are many images available for each hotel, 
the system must choose which subset of these to display. 
Clearly, the choice of representation can have a significant effect on users' decision-making (whether to click the hotel, and subsequently whether to book it), and in turn, on the system's profit. 
%The system may well attempt to capitalize on the control it has over what information is presented to the user in order to increase its profit, at the expense of the user who will reach sub-optimal decisions. 
The system may well attempt to capitalize on the control it has over what information is presented to the user---at the expense of the user, who may be  swayed to make sub-optimal decisions. 
%Given the deep dependence of users on systems for information, it is important that the way in which users 
%respond to representations take into account the systems' strategizing.
Given that users rely on system-crafted representations for making decisions, choosing well requires users to account for the strategic nature of the representations they face. % strategic nature of representations. 
Our goal in this paper is 
to study when and how learning can aid users in making decisions
that are strategically robust.
We model users as deciding based on a \emph{choice strategy} $h$
mapping represented items to binary choices,
%(e.g., click, buy)
which can be learned from the user's previous experiences
(e.g., hotels stayed in, and whether they were worthwhile).
Since in reality full descriptions of items
are often too large for humans to process effectively
(e.g., hundreds or thousands of attributes),
the role of the system is to provide users with compact
representations (e.g., a small subset of attributes). 
% The user learns a \emph{choice} strategy $h$,
We therefore model the system as responding to $h$
with a \emph{representation mapping} $\rep$,
which determines for each item $x$ its representation $z=\rep(x)$,
% with item \emph{representations} $z=\rep(x)$,
% determined independently for each new item $x$,
% (e.g., a small set of images). %The user's `strategy' is the way she routinely acts upon representations
%
% \nir{users *can* learn - we also study gap from when they don't (naive)}
% Users are threfore tasked with choosing on the basis of system-chosen representations.
on which users rely for making choices
(i.e., $h$ is a function of $z$). %, not $x$).
We focus on items $x$ that are discrete and composed of a subset of
ground elements;
accordingly, we consider representation mappings $\rep$ that are
lossy but \emph{truthful},
meaning they reveal a cardinality-constrained subset of the item's full set of attributes:
$z \subseteq x$ and $k_1\le |z| \le k_2$
for some exogenously-set $k_1,k_2$. % (e.g., size of a slate).
% (which we think of as determined by `nature').
% In other words, representations need not be `the whole truth',
% but must include `nothing but the truth.'

Given that the system has representation power---how should users choose $h$?
% The key element in our formulation is that
The key idea underlying our formulation is that
the system and its users have misaligned goals.
A user wishes to choose items that are `worthwhile' to her,
as determined by her \emph{valuation function} $v(x)$---in our case, a set function,
which can reflect complex tastes (e.g., account for complementarity among the attributes,
such as `balcony' and `ocean view' in the hotel example).
% a balcony and a view are \emph{complementary} attributes for a hotel).
% The valuation is a set function over the item attributes which assigns positive or negative values to the attributes and allows for synergies among them
% (e.g., a balcony and a view are \emph{complementary} attributes for a hotel). %consumption, where choices are made on the basis of representations. 
Meanwhile, the system aims to 
maximize user engagement by
choosing a feasible representation $z=\rep(x)$
% presenting to the user a subset of attributes
that will incite the user to choose the item.
% E.g., in the hotel example, the system could present the attribute of having a balcony wihtout mentioning whether or not there is a view.
%\nir{maybe use hotels are running example here (and elsewhere) to demonstrate the above?}
% \footnote{Engagement is often indirectly maximized by predicting relevance as a proxy.} %, which is how things are usually framed}
%is the rate of consumption (e.g., click or buy) under some constraints (hard or soft) on the chosen representation. 
%We refer to this problem as `strategic representation'.
%Assuming binary item values $v(x) \in \{0,1\}$ \ganesh{We use $v(x) \in \{-1,+1\}$ },
%the payoff to users is their expected utility from correctly choosing high-valued items, % (i.e., $v(x)=1$),
%namely $\prob{x \sim D}{h(\rep(x)) = v(x)}$ where $D$ is an unknown data distribution,
%and their goal is to maximize utility by appropriately choosing $h$.
%Meanwhile, the system's payoff is the rate of consumption (e.g., click or buy),
%defined as $\expect{x \sim D}{h(\rep(x))}$,
%and we assume the system best-responds via the optimal item-specific representation mapping,
%$\rep(x)=\argmax_{z \subseteq x, |z| = k} h(z)$.
Importantly, while values are based on the true $x$, choices rely on representations $z$---which the system controls.
This causes friction:
a representation may be optimal for the system,
but may not align with user interests. 

The challenge for users (and our goal in learning) is therefore to make good choices on the basis of strategic representations.
Note that this is not a simple missing value problem,
since \emph{which attributes will be missing depends on how users choose},
i.e., on their learned choice function $h$.\footnote{Consider the following subtlety:
since users must commit to a choice strategy $h$,
the choice of \emph{how} to handle missing features
(a part of the strategy) determines \emph{which} features will be missing.}
Nor is this a problem that can be addressed by mapping representations to
a categorical representation (with classes `0',`1', and `unknown');
to see this, note that given a subset representation $z$, it is impossible to know which of the attributes that do not appear in $z$ are withheld---and which are truly missing.

\paragraph{Subset representations.}
We focus on subset representations since they provide a natural means to ensure that representations reveal `nothing but the truth' (but not necessarily `the whole truth').
This is our primary modeling concern, which stems from realistic restrictions on the system (like consumer expectations, legal obligations, or business practices).
Subset representations cleanly capture what we view as a fundamental tension between users and an informationally-advantageous system---the ability to withhold information;
examples include retail items described by a handful of attributes;
videos presented by several key frames;
and movie recommendations described by a select set of reviews,
to name a few.

\paragraph{Overview of our results.} %and organization.}
%\inbal{thm 5.10 for estimation error, for approximation, 5.4 is the main thing for approximation error. A phase transition result. thm 4.6 for the algorithm - the main structural result. for the system thm 5.11}
We begin by showing that users who choose {\naive}ly can easily be manipulated by a strategic system (Sec.~\ref{sec:Naive}).
% indeed, our results suggests that users may pay a hefty price if they neglect to guard against strategic system behavior.
We then proceed to study users who learn (Sec.~\ref{sec: strat users and learning}).
Our main result is an efficient algorithm for learning $h$
(Thm.~\ref{theorem: minimizing empirical error}),
which holds under a certain realizability assumption on $v$.
%  (Sec.~\ref{sec: strat users and learning}).
The algorithm minimizes the empirical error over a hypothesis class of set-function classifiers, whose complexity is controlled by parameter $k$,
thus allowing users to trade off expressivity and statsitical efficiency.
% (the degree of synergies---like complements---that can be taken into account when classifying the system's representation), \vin{assuming the user's true decisions based on their valuations is realizable (see Theorem \ref{theorem: minimizing empirical error})}.
The algorithm builds upon several structural properties which we establish for set-function classifiers.
Our key result here is that `folding' the system's strategic response
into the hypothesis class results in an induced class
having a simple form that makes it amenable to efficient optimization
% e.g., that it suffices to use simple-form set functions even when taking into account the system's strategic response
(Thm.~\ref{theorem: nice structure}). %For learning users 
%\nir{should say something about realizability here, perhaps that our algorithm is exact under a certain notion of `induced' realizability}

Building on this,
we continue to study the `balance of power' (Sec.~\ref{sec:balance}),
as manifested in the interplay between $k$
(hypothesis complexity, which reflects the power of the user)
and the range $[k_1,k_2]$ (which determines the power of the system).
For fixed $k_1,k_2$,
we analyze how the choice of $k$ affects the user's classification error,
through its decomposition into estimation and approximation errors.
% We analyze and bound both the estimation and approximation errors.
For estimation, we give a generalization bound (Thm.~\ref{theorem: generalization bound}),
obtained by analyzing the VC dimension of the induced function class
(as it relies on $k$).
For approximation, we give several results (e.g., Thm.~\ref{theorem: 0 approx error})
that link the expressivity of
the user's value function $v$ to the complexity of the learned $h$
(again, as it relies on $k$).
% (and so the expressivity of the learned $h$).
% for example, Thm.~\ref{theorem: 0 approx error} shows that when the complexity of $h$ matches that of the user's valuation, the approximation error vanishes.
% Together, our results portray how the choice of $k$ trades
Together, our results characterize how much is gained versus how much effort is invested in learning as $k$ is varied.
One conclusion is that even minimal learning can help significantly (whereas no learning can be catastrophic).

% Thus our main result is ``how to learn''; we also quantify how bad things can be without learning, and show that even minimal learning can significantly help.
 %For approx. 
%We give an algorithm that minimizes the empirical error. Separate the learning and the error analysis. balance of power -- how parameters of the problem affect the two sides. the system controls $k_1,k_2$ and the user controls $k$. These are the parameters of the game. 

From the system's perspective, and for fixed $k$,
we study how the range $[k_1,k_2]$ affects the system.
Intuitively, we would expect that increasing the range should be beneficial to the system, as it provides more alternatives to choose $z$ from.
% Our results show the following:
% %\begin{itemize}[leftmargin=*]
% %\item
% For fixed representation range $[k_1,k_2]$, %the user can control $k$, the complexity class of learned functions.
% \nir{rewrite using `fix system power, increase user power' etc}
% the larger the complexity $k$, the better the user's payoff, because learned functions are more expressive and can better approximate the valuation $v$. 
% Theorem~\ref{theorem: 0 approx error} shows that when the complexity matches that of the user's valuation, the approximation error vanishes.
% On the other hand, a large $k$ is costly in running time and in data---it means larger sample complexity and so larger estimation error (Theorem~\ref{theorem: generalization bound}). We thus get a tradeoff controlled by the user's effort level~$k$.  
% Now keep the complexity $k$ fixed and let the system control $k_1,k_2$. 
However, perhaps surprisingly, we find that the system can increase its payoff by `tying its hands' to a lower $k_2$.
This is because $k_2$ upper-bounds not only the system's range but also the `effective' $k$ of the user (who gets nothing from choosing $k>k_2$), and the lower the $k$, the better it is for the system (Lemma~\ref{thm:small_k2_is_better}).
The choice of $k_1$ turns out to be immaterial against fully strategic users, but highly consequential against users that are not.

\subsection{Relation to Strategic Classification}
Our formalization of strategic representation
shares a deep connection to strategic classification \citep{HardtMPW16}.
Strategic representation and strategic classification share an underlying
structure (a leading learning player who must take into account a strategic responder), but there are important differences. The first is conceptual: in our setting, roles are reversed---it is users who learn (and not the system),
and the system strategically responds (and not users).
% \footnote{Our notions of system and user mirror practice -- the system controls the platform where the interaction takes place, whereas the user is the (usually human) player using the platform.}
% E.g., in SC the system aims to learn an accurate classifier, whereas in our setting it is the user who learns. 
% The implications of this change are mainly to the balance of power between the sides (see Sec.~\ref{sec: balance of power}). 
This allows us to pursue questions regarding the susceptibility of \emph{users} to manipulation, with different emphases and goals,
while maintaining the `language' of strategic classification.

The second difference is substantive:
we consider inputs that are \emph{sets}
(rather than continuous vectors),
and manipulations that \emph{hide} information
(rather than alter it).
% (in the language of SC, we fix a particular cost function---distinct from the standard ones in the literature).
%different from distance-based ones in the literature).  %is distinct from the usual one costs in this literature). %what $c$ is (in sr it expresses truthfulness constraints). 
%The key difference between strategic representation and strategic classification lies in the object-type of inputs, and what accounts for a feasible modification. 
%In strategic classification, inputs $x$ are continuous vectors; arbitrary modifications are allowed, but these are costly, and the cost of modifying $x \mapsto x'$ is given by a cost function $c(x,x')$(e.g., $\|x-x'\|$).
%In contrast, in strategic representation, inputs $x$ are sets, and `modifications` $z$ must be subsets of constrained cardinality.\footnote{Note that both constraints can be expressed via a `hard' cost function: $c(x,z)=\infty$ if $z$ is infeasible, and 0 otherwise.}
% constrained to be truthful and cardinality-constrained;
% note that these can be expressed via a `hard' cost function
% $c(x,z)=\infty\{z \not\subseteq x \vee |z| \not\in [k_1,k_2]\}$.
Technically, switching to discrete inputs can be done by
utilizing the cost function to enforce truthfulness constraints
as a `hard' penalty on modifications.
But this change is not cosmetic:
since the system behaves strategically by optimizing over subsets,
learning must account for set-relations between different objects in input space;
this transforms the problem to one of learning set functions.\footnote{This 
is a subtle point: Since sets can be expressed as binary membership vectors,
it is technically possible to use conventional vector-based approaches to learn $h$.
Nonetheless, these approaches cannot account for strategic behavior;
this is since $\phi$ implements a set operation, which is ill-defined
for continuous vector inputs.}
%\vineet{Last two statements were not clear; probably we need to say what implications on learning.}
%\todo{strat behavior - need to reason about set relations}
From a modeling perspective,
subsets make our work compatible with classic attribute-based consumer decision theory \citep{lancaster1966new}.

% From a learning perspective,
% whereas strategic classification considers `conventional' learning of vector functions, strategic representation focuses on learning \emph{set functions}; 

Overall, we view the close connection to strategic classification as a strength of our formalization, showing that the framework of strategic classification is useful far beyond what was previously considered; and also that a fairly mild variation can lead to a completely different learning problem, with distinct  algorithmic and statistical challenges.
%Strategic representation can be thought of as a `reversal' of the problem of strategic classification, bearing key structural similarities, but also having several important differences. entity that controls the platform where the interactions happen.
%\inbal{1) Lots of recent papers on strategic classification. 2) We do the reverse. 3) There are 2 papers that look at the reversed -- zrnic. conitzer. 

\subsection{Related Work \hfill \normalfont{\textit{(see also Appx. \ref{secapp: additional related work})}}} \label{subsec: rel work}

\paragraph{Strategic classification.}
Strategic classification is a highly active area of research.
Recent works in this field include
statistical learning characterizations
\cite{zhang2021incentive,sundaram2021pac,ghalme2021strategic},
practical optimization methods
\cite{LevanonR21,levanon2022generalized},
relaxation of key assumptions
\cite{ghalme2021strategic,BechavodPZWZ21,jagadeesan2021alternative,
levanon2022generalized,eilat2022strategic},
relations to causal aspects
\cite{miller2020strategic,chen2020linear},
and consideration of societal implications
\cite{MilliMDH19,HuIV19,chen2020strategic,LevanonR21}.

Two recent works are closely relate to ours:
\citet{zrnic2021leads} %zrnic looks at strategic classification over time. they have repeated play (things don't converge after 1 step) and the rate means who is first, who is making the first move. \emph{theirs 
consider a dynamic setting of repeated learning that can change the \emph{order} of play; in contrast, we switch the \emph{roles}.
\citet{krishnaswamy2021classification} consider information withholding by users (rather than the system).
They aim to learn a truth-eliciting mechanism, %, enforced as a constraint,
which incentivizes the second player to reveal all information (i.e., `the whole truth').
% They focus on masking attributes by the user (rather than dropping attributes by the system),
% Hence, they aim for classifiers that incentivize ;
Their mechanism ensures that withholding never occurs;
in contrast, our goal is to predict \emph{despite} strategic withholding
(i.e., `nothing but the truth’).
\textbf{Bayesian persuasion.}
%The economic literature studies how a well-informed system can use its information advantage to influence its users. We are not aware 
%not aware of work in ML that studies system manipulating the user. %information shrouding milgrom paper -- all econ. In econ there are a lot of works studying system manipulating but no learning.
In Bayesian persuasion~\citep{KamenicaGentzkow11}, %(see also e.g.~\cite{dughmi2019algorithmic,zu2021} for an algorithmic/learning approach). 
%Here too 
a more-informed player (i.e., the system) uses its information advantage coupled with commitment power to influence the choices of a decision maker (i.e., the user). 
Works closest to ours are by~\citet{DughmiIOT15}, who upper bound the number of signaled attributes, and by~\citet{haghtalab2021persuading}, who study %representations of reality through 
strategic selection of anecdotes.
Both works consider the human player as a learner (as do we).
However, in our work, the order of play is reversed---the decision-maker (user) moves first and the more-informed player (system) follows.
%whose valuation is known. %of the receiver must be known and the receiver must be capable of Bayesian inderenf
%strategically signals disclosure of information to influence decision-making has long been studied in economics
%The more-informed player moves first, committing to a scheme for signaling information (e.g., recommending actions) to the decision-maker. %o the decision-maker. 
% This fits a scenario in which
Bayesian persuasion also assumes that the system knows the user's valuation,
%can recommend to her persuasively whether to consume the item based on their 
and crucially relies on both parties knowing the distribution $D$ of items.
% ---this being crucial for persuasion to be successful.
In contrast, we model the user as having only sample access to $D$, and the system as agnostic to it.

\section{A Formalization of Strategic Representation}\label{sec: setting}

% \todo{make it clear at some point that we focus on a single user}

We begin by describing the setting from the perspective of the user,
which is the learning entity in our setup.
We then present the `types' of users we study,
and draw connections between our setting and others
found in the literature.

\subsection{Learning Setting}
In our setup, a user is faced with a stream of items,
and must choose which of these to consume.
Items are discrete, with each item $x \in \X \subseteq 2^E$
described by a subset of ground attributes, $E$,
where $|E|=q$.
We assume all feasible items have at most $n$ attributes, $|x| \le n$. 
%\inbal{this seems different than the submitted version?}.
The value of items for the user are encoded by a
\emph{value function}, $v:\X \rightarrow \R$.
We say an item $x$ is \emph{worthwhile} to the user
if it has positive value, $v(x)>0$,
and use $y=Y(x)=\sign(v(x))$ to denote worthwhileness,
i.e., $y=1$ if $x$ is worthwhile, and $y=-1$ if it is not.
Items are presented to the user as samples drawn i.i.d.~from some unknown distribution $D$ over $\X$,
% determined by the underlying data generating process.
and for each item,
the user must choose whether to consume it
(e.g., click, buy, watch) or not.
% Broadly speaking, the user seeks to choose items
% if and only if they are worthwhile.

We assume that the user makes choices regarding items
by committing at the onset to a \emph{choice function}
$h$ that governs her choice behavior.
In principle, the user is free to chose $h$
from some predefined function class $H$;
learning will consider finding a good $h \in H$,
but the implications of the choice of $H$ itself
will play a central role in our analysis.
Ideally, the user would like to choose items if and only if
they are worthwhile to her;
practically, her goal is to find an $h$ for which this
holds with large probability over $D$.
For this,
the user can make use of her knowledge regarding items she has
already consumed, and therefore also knows their value;
we model this as providing the user access to a
labeled sample set $S=\{(x_i,y_i)\}_{i=1}^m$
where $x_i \sim D$ and $y_i=\sign(v(x_i))$, %\footnote{\blue{Our results also hold for noisy $y$.}}
which she can use for learning $h$.

\textbf{Strategic representations.}
The difficulty in learning $h$ is that
user choices at test time must rely only on
item \emph{representations}, denoted $z \in \Z$,
rather than on full item descriptions.
Thus, learning considers choice functions that operate on representations,
$h:\Z \rightarrow \pmone$;
the challenge lies in that while
choices must be made on the basis of representations $z$,
item values are derived from their full descriptions $x$---which
representations describe only partially.

The crucial aspect of our setup is that representations are not arbitrary;
rather, \emph{representations are controlled by the system},
which can choose them strategically to promote its own goals.
We model the system as acting through a \emph{representation mapping}, $\rep: \X \rightarrow \Z$,
which operates independently on any $x$,
and can be determined in response to the user's choice of $h$.
This mimics a setting in which a fast-acting system
can infer and quickly respond to a user's (relatively fixed) choice patterns.

We assume the system's goal is to choose a $\rep_h$ that maximizes \emph{expected user engagement}:
\begin{equation}
\label{eq:system_objective}
% \text{\textbf{Engagement}:} \qquad
\expect{x\sim D}{\one{h(\rep_h(x))=1}}.
% \prob{x\sim D}{h(\rep_h(x))=1}
\end{equation}
Nonetheless, representations cannot be arbitrary,
and we require $\rep_h$ to satisfy two properties.
First, chosen representations must be \emph{truthful},
meaning that $z \subseteq x$ for all $x$.
Second, representations are subject to \emph{cardinality constraints},
$k_1 \le |z| \le k_2$ for some predetermined $k_1,k_2 \in \N$.
We will henceforth use $\Z$ to mean representations of feasible cardinality.
Both requirements stem from realistic considerations:
A nontruthful system which intentionally distorts item information
% which intentionally presents faulty information regarding items
is unlikely to be commercially successful in the long run;
intuitively, truthfulness gives users some hope of resilience to manipulation.
For $k_1,k_2$, we think of these as exogenous parameters of the environment,
arising naturally due to physical restrictions
(e.g., screen size) or cognitive considerations (e.g., information processing capacity);
if $k_2 < n$, we say representations are \emph{lossy}.

Under these constraints,
the system can optimize Eq. \eqref{eq:system_objective}
by choosing representations via the \emph{best-response mapping}:
\begin{equation}
\label{eq:br_rep}
\rep_h(x) = \argmax_{z \in \Z} h(z)
\,\,\,\, \text{ s.t. } \,\,\,\,
z \subseteq x,  |z| \in [k_1,k_2]
\end{equation}
Eq. \eqref{eq:br_rep} is a best-response since it maximizes Eq. \eqref{eq:system_objective} for any given $h$:
for every $x$, $\rep_h$ chooses a feasible $z \subseteq x$
that triggers a positive choice event, $h(z)=1$---if such a $z$ exists.
In this way, $k_1,k_2$ control how much leeway the system has in revealing only partial truths;
as we will show, both parameters play a key role in determining outcomes
for both system and user.
From now on we overload notation and by $\rep_h(x)$ refer to this best-response mapping.  

% hence, $\Z = \{z \subseteq x \,:\, x \in \X$.

% Note that while $h$ can be learned based on full item descriptions $x$,

\textbf{Learning objective.}
Given function class $H$ and a labeled sample set $S$,
the user aims to find a choice function $h \in H$
that correctly identifies worthwhile items given their representation,
and in a way that is robust to strategic system manipulation.
The user's objective is therefore to maximize:
\begin{equation}
\label{eq:user_objective}
% \hhat = \argmin_{h \in H}
\expect{x \sim D}{\one{h(\rep_h(x)) = y}}
\end{equation}
where $\rep_h$ is the best-response mapping in Eq. \eqref{eq:br_rep}. 
%\inbal{Should we say something about the tension between the learning objective and the game-theoretic objective of maximizing the utility from the items consumed?}

Note that since $h$ is binary, the argmax of $\rep_h$ may not be unique;
e.g., if some $z_1\subseteq x,z_2\subseteq x$ both have $h(z_1)=h(z_2)=1$.
Nonetheless, the particular choice of $z$ does not matter---from the user's perspective,
her choice of $h$
is invariant to the system's choice of best-response $z$ (proof in Appx. \ref{secapp: setting proof}):
\begin{restatable}{observation}{samePayoff}
\label{lem:br_equiv}
Every best-response $z \in \phi_h(x)$ 
induces the same value in the user's objective function
(Eq. \eqref{eq:user_objective}).
% (proof in Appx. \ref{secapp: setting proof}).
\end{restatable}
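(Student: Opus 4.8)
The plan is to reduce the statement to a pointwise fact about each $x$ and then integrate. The single observation I would build everything on is that the user's objective depends on the system's chosen representation $z$ only through the scalar $h(z)$, and that, by the very definition of a best response, all elements of $\phi_h(x)$ share the same value of $h$.

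First I would fix an arbitrary item $x \in \X$ and write $\mathcal{F}(x) = \{z \in \Z : z \subseteq x,\ |z| \in [k_1,k_2]\}$ for its set of feasible truthful representations. By Eq.~\eqref{eq:br_rep} the best-response set is $\phi_h(x) = \argmax_{z \in \mathcal{F}(x)} h(z)$. Since $h$ is $\pmone$-valued, the maximum $M(x) := \max_{z \in \mathcal{F}(x)} h(z)$ is well-defined, and by definition of the $\argmax$ every element of $\phi_h(x)$ attains it; hence $h(z) = M(x)$ for all $z \in \phi_h(x)$, and in particular any two best responses agree, $h(z) = h(z')$ for $z, z' \in \phi_h(x)$.

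Next I would substitute this into the objective. Writing $y = Y(x) = \sign(v(x))$, which depends on $x$ alone and not on the system's behavior, the per-item term $\one{h(\rep_h(x)) = y}$ equals $\one{M(x) = y}$ regardless of which best response the system selects to break ties. Thus the integrand is invariant to the choice of $z \in \phi_h(x)$ for every fixed $x$, and taking the expectation over $x \sim D$ shows that $\expect{x \sim D}{\one{h(\rep_h(x)) = y}}$ takes the same value under any tie-breaking rule, which is exactly the claim.

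I do not expect a genuine obstacle here: the whole content is that the $\argmax$ of a $\pmone$-valued function is a level set on which $h$ is constant, combined with the fact that the user's objective sees $z$ only through $h(z)$, so the identity is pointwise and survives integration by monotonicity of expectation. The only point warranting a word of care is the degenerate case where $\mathcal{F}(x) = \emptyset$ for some $x$ (i.e.\ $x$ is too small to admit a feasible representation); I would dispose of this either by invoking the standing assumption that items admit representations of cardinality at least $k_1$, or by fixing a default contribution for unrepresentable $x$, which is again independent of the system's choice and therefore does not affect the argument.
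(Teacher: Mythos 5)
Your proposal is correct and follows the same route as the paper's proof: both rest on the fact that all elements of the best-response set $\phi_h(x)$ attain the same value of $h$ (since it is an $\argmax$ of a $\pmone$-valued function), so the per-item indicator, and hence the expectation over $D$, is invariant to tie-breaking. The extra remarks on integrating pointwise and on the empty-feasible-set edge case are fine but not needed beyond what the paper already states.
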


\subsection{User Types}
Our main focus throughout the paper will be on users that
learn $h$ by optimizing Eq. \eqref{eq:user_objective}.
But to understand the potential benefit of learning,
we also analyze `simpler' types of user behavior.
Overall, we study three user types, varying in their sophistication
and the amount of effort they invest in choosing $h$.
These include:
\begin{itemize}
\item 
\textbf{The \naive\ user:} Acts under the (false) belief that
representations are chosen in her own best interest.
This user type truthfully reports her preferences to the system
by setting $h=v$ as her choice function.\footnote{Note that while $h$ takes values in $\X$,
$v$ takes values in $\X$. Nonetheless, truthfulness implies that $\Z \subseteq \X$,
and so $v$ is well-defined as a choice function over $\Z$.}

\item
\textbf{The agnostic user:} Makes no assumptions about the system.
This user employs a simple strategy that relies on basic data statistics
which provides minimal but robust guarantees regarding her payoff.

\item
\textbf{The strategic user:} Knows that the system is strategic,
and anticipates it to best-respond.
This user is willing to invest effort (in terms of data and compute)
in learning a choice function $h$ that maximizes her payoff
by accounting for the system's strategic behavior.
% \nir{where should we say this users is only boundedly/partially strategic?}

\end{itemize}

Our primary goal is to study the balance of power between
users (that choose) and the system (which represents).
In particular, we will be interested in exploring the 
tradeoff between a user's effort and her susceptibility to manipulation.
% We begin with the \naive\ and agnostic users,
% and then turn to the strategic learning user.

\subsection{Strategic Representation as a Game}
Before proceeding, we give an equivalent characterization
of strategic representation as a game.
Our setting can be compactly described as a single-step Stackelberg game:
the first player is User,
which observes samples $S=\{(x_i,y_i)\}_{i=1}^m$,
and commits to a choice function $h:\Z \rightarrow \pmone$;
the second player is System, which given $h$,
chooses a truthful $\phi_h:\X \rightarrow \Z$
(note how $\phi_h$ depends on $h$).
The payoffs are:
\begin{align}
& \text{\textbf{User:}} \qquad \,\, \expect{x \sim D}{\one{h(\phi_h(x))=y}} \label{eq:user_payoff} \\
& \text{\textbf{System:}} \quad \,\, \expect{x \sim D}{\one{h(\phi_h(x))=1}} \label{eq:system_payoff}
\end{align}
%
% \paragraph{Relation to strategic classification.}
Note that payoffs differ only in that User seeks \emph{correct} choices, whereas System benefits from \emph{positive} choices.
This reveals a clear connection to strategic classification,
in which System, who plays first,
is interested in \emph{accurate} predictions,
and for this it can learn a classifier;
and User, who plays second,
can manipulate individual inputs (at some cost)
to obtain \emph{positive} predictions.
Thus, strategic representation can be viewed as a
variation on strategic classification, but with roles `reversed'.
Nonetheless, and despite these structural similarities,
strategic representation bears key differences:
% two key differences are that in strategic representation
items are discrete (rather than continuous),
manipulations are subject to `hard' set constraints
(rather than `soft', continuous costs),
and learning regards set functions (rather than vector functions).
These differences lead to distinct questions
and unique challenges in learning.

\section{Warm-up: \Naive\ and Agnostic Users}
%\todo{rethink section name}
\label{sec:Naive}

%We begin with analyzing two user types: \naive, and agnostic. 

%\subsection{
\textbf{The \naive\ user.}
The \emph{\naive\ user}
employs a `what you see is what you get' policy:
% plays according to `face value';
given a representation of an item, $z$,
this user estimates the item's value based on $z$ alone,
acting `as if' $z$ were the item itself.
Consequently, the \naive\ user sets $h(z)=\sign(v(z))$,
even though $v$ is truly a function of $x$.
The \naive\ user fails to account for the system's strategic behavior
(let alone the fact that $z \subseteq x$ of some actual $x$).

% As noted earlier, a \naive\ user (also referred to as a myopic user \cite{gabaix2006shrouded}) does not reason about the missing information and takes decisions  based purely on the revealed information.  This   behavior can be attributed to ignorance,  high cognitive costs for inference, or the belief that the system is fully transparent. A \naive\ user,  playing according to `face value', assumes (wrongly) that `what you see is what you get' and hence on  seeing $z$
% \blue{does not consider that there is an $x \supseteq z$ from which value derives. Consequently a \naive\ user on  seeing $z$ plays $v(z)$ instead of $v(x)$.}
%, plays  $v(z)$ instead of  $v(x)$
%for whatever $x \supseteq z$ is.  %That is, a  \naive\  User's choice function is given as  $h(z) = v(z)$ for all $z \in \mathcal{Z}$. 

%\outline{result: if the system is benevolent, then things work out nicely, that is, user gets most of the payoff}\\

Despite its naivety, there are conditions under which this user's approach makes sense. Our first result shows that the \naive\ policy is sensible
in settings where the system is \emph{benevolent}, and promotes user
interests instead of its own.
% \blue{We begin the following result which shows that for a \naive user the payoff of the system is maximized if the system is \emph{benevolent}.
\begin{restatable}{lemma}{NaiveUser}
%Assume $v$ is such that for each $x\in \X$, there exists a $z\in \Z$ and $z\subseteq x$ such that $v(z) = v(x)$. Then 
If system plays the \emph{benevolent} strategy:
\[
\phi^{\mathrm{benev}}_h(x) = \argmax_{z \subseteq x, |z| \in [k_1,k2]}\{\mathbbm{1}\{h(z) = \sign(v(x))\},
\]
then the \naive\ approach maximizes user payoff.
% \blue{against any truthful system}.
% The payoff of a \naive\ user is maximized if the system responds benevolently, that is, the system's response for each $x$, $\phi(x) = \argmax_{z\in \Z, z\subseteq x}\{\mathbbm{1}\{v(z) = v(x)\}$.
\end{restatable}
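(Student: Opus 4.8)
The plan is to first reduce the lemma to a pointwise statement about coverage, and then verify that the naive rule attains the pointwise optimum.

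First I would rewrite the user's payoff (Eq.~\eqref{eq:user_payoff}) when System plays $\phi^{\mathrm{benev}}_h$. The key observation is that the benevolent system's per-item objective, $\one{h(z)=\sign(v(x))}$, is \emph{exactly} the user's own per-item payoff. Hence for any fixed $h$, choosing $z$ by the benevolent argmax maximizes the user's integrand item-by-item, so
\[
\expect{x\sim D}{\one{h(\phi^{\mathrm{benev}}_h(x))=y}}
= \expect{x\sim D}{\one{\exists\, z\subseteq x,\ |z|\in[k_1,k_2]:\ h(z)=\sign(v(x))}}.
\]
(This already shows that, for every fixed $h$, no system can give the user a higher payoff than the benevolent one.) It then remains to maximize the right-hand side over the choice of $h$.

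Next I would bound each integrand by $1$ and show the naive rule $h=\sign\circ v$ attains this bound on every item. Given $x$, I want a feasible representation $z$ that the benevolent system can reveal with $h(z)=\sign(v(x))$. The natural witness is the item itself, $z=x$: for the naive rule $h(x)=\sign(v(x))$, so the indicator is $1$ provided $x$ is a legal representation, i.e.\ $|x|\in[k_1,k_2]$ so that $x\in\Z$. Integrating, the naive payoff equals $1$, which upper-bounds the payoff of any other $h$; therefore the naive approach maximizes user payoff.

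The main obstacle is justifying the witness in the previous step. Using $z=x$ requires $k_1\le|x|\le k_2$, which is immediate when representations are not lossy but is exactly the delicate point once $k_2<n$: for an item with $|x|>k_2$ the full description cannot be shown, and because $v$ may encode complementarities (e.g.\ `balcony' and `ocean view'), there need not exist any feasible proper subset $z\subseteq x$ with $\sign(v(z))=\sign(v(x))$. I would therefore either restrict to the regime where every item is itself a feasible representation (so the $z=x$ witness always applies), or isolate a structural assumption on $v$ guaranteeing a sign-preserving feasible representation; establishing such a witness is where the real work lies, since without it the naive rule still maximizes \emph{coverage} of the right-hand side above but no longer necessarily attains payoff $1$.
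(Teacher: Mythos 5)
Your reduction in the first display is correct and already contains, as a special case, everything the paper's own proof establishes. The paper's proof fixes the \naive\ choice function $h=\sign\circ v$ and argues only that $\phi^{\mathrm{benev}}_h$ is the best possible \emph{system} response to it: if a feasible sign-matching $z\subseteq x$ exists, the benevolent system reveals it, and if none exists, no truthful system can do better on that $x$. That is, the paper proves the quantifier-reversed reading ``the benevolent system maximizes the payoff of a \naive\ user, over truthful systems''; it never compares the \naive\ $h$ against other choice functions $h$. Your proposal instead attacks the literal reading---optimality of $h=\sign\circ v$ over all $h$ against a benevolent system---and the place where you get stuck is exactly where that reading breaks down.

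The obstacle you flag is not a technicality you failed to discharge; the literal claim fails in the lossy regime. Concretely, take $E=\{a,b\}$, $k_1=k_2=1$, $v(\{a\})=v(\{b\})=+1$, $v(\{a,b\})=-1$, and let $D$ place mass $1-\varepsilon$ on $\{a,b\}$ and $\varepsilon/2$ on each singleton. The \naive\ rule sets $h(\{a\})=h(\{b\})=+1$, so no feasible representation of $\{a,b\}$ is sign-matching and even the benevolent system cannot save that item: the \naive\ payoff is $\varepsilon$. The non-\naive\ rule $h'(\{a\})=-1$, $h'(\{b\})=+1$ earns $1-\varepsilon/2$ against the same benevolent system. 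So your witness $z=x$ is essentially the only route to the stated conclusion, and it needs $|x|\in[k_1,k_2]$ on the support of $D$ (the non-lossy regime), exactly as you anticipated. In short: your argument is sound as far as it goes, your first display proves (for every fixed $h$, hence in particular the \naive\ one) the statement the paper's proof actually establishes, and the remaining step cannot be completed without either restricting to non-lossy representations, adding a structural assumption on $v$ guaranteeing a sign-preserving feasible subset, or reinterpreting the lemma as the paper's proof implicitly does.
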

Proof in Appx. \ref{secapp: learning proofs}.
The above lemma  is not meant to imply that \naive\ users \emph{assume}
the system is benevolent;
rather, it justifies why users having this belief might act in this way.
% }
%Next, we give a simple example where a \naive  
%\nir{note: user does not need to `assume' benevolence holds (or anything really); rather, if this is the state of the world, then things would be fine)}
Real systems, however, are unlikely to be benevolent;
our next example shows a strategic system can 
easily manipulate \naive\ users to receive arbitrarily low payoff.

\begin{example}
\label{ex:one}
Let $x_1 = \{a_1\}, x_2 = \{a_1,a_2\}, x_3 = \{a_2\}$ with $v(x_1) =  +1$ and $v(x_2)=v(x_3)= -1$.
Fix $k_1=k_2=1$, and let $D  = (\varepsilon/2,1- \varepsilon, \varepsilon/2)$.
Note $\Z=\{a_1,a_2\}$ are the feasible representations.
The \naive\  user assigns $h=(a_1)=+1,h(a_2)=-1$ according to $v$.
For $x_2$, a strategic system plays $\phi(x_2) = a_1$.
The expected payoff to the user is $\varepsilon$.
\end{example}

One reason a \naive\ user is susceptible to manipulation
is because she does not make any use of the data she may have.
We next describe a slightly sophisticated user that uses a simple strategy to ensure a better payoff.

\textbf{The agnostic user.}
%\outline{show that with minimal effort users can be better off}
The \emph{agnostic user} puts all faith in data;
this user does not make assumptions on, nor is she susceptible to, 
the type of system she plays against.
Her strategy is simple: collect data, compute summary statistics,
and choose to either always accept or always reject %system's suggestions
(or flip a coin).
In particular, given a sample set $S=\{(x_i,y_i)\}_{i=1}^m$,
the agnostic user first computes the fraction of positive examples,
$\muhat :=\frac{1}{m}\sum_{i=1}^{m} y_i$.
Then, for some tolerance $\tau$, sets for \emph{all} $z$,
$h(z)=1$ if $\muhat  \geq 1/2+\tau$,
$h(z)=-1$ if $\muhat \leq 1/2-\tau$,
and flips a coin otherwise.
In Example \ref{ex:one}, an agnostic  user would    choose $h=(-1, -1)$  when $m$ is large,  guaranteeing a payoff of at least  $\frac{\sqrt{m}(1-\varepsilon/2)}{2+\sqrt{m}}\rightarrow (1-\varepsilon/2)$ as $m \rightarrow \infty$. Investing minimal effort, 
for an appropriate choice of $\tau$,
this user's strategy turns out to be quite robust.
\begin{theorem}\label{theorem: agnostic informal}
(Informal)
Let $\mu$ be the true rate of positive examples, $\mu = \expect{D}{Y}$.
Then as $m$ increases, the agnostic user's payoff approaches $\max\{\mu,1-\mu\}$
at rate $1/\sqrt{m}$.
\end{theorem}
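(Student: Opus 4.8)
The plan is to reduce the agnostic user's problem to a simple two-point hypothesis test between the two constant choice functions, and then control the test's error by concentration. The conceptually central observation, which I would establish first, is that a \emph{constant} choice function is completely immune to strategic representation: if the user commits to $h\equiv 1$, then $h(\phi_h(x))=1$ for every $x$ no matter which feasible $z\subseteq x$ the system selects, so her payoff in Eq.~\eqref{eq:user_objective} equals $\prob{x\sim D}{v(x)>0}=\mu$, the true positive rate; symmetrically $h\equiv -1$ yields payoff $1-\mu$. Hence the best constant policy attains exactly $\max\{\mu,1-\mu\}$, which is the benchmark in the statement, and the agnostic rule is precisely an empirical surrogate for selecting the better of these two system-proof constants. (Here I read $\muhat$ and $\mu$ on the positive-fraction scale in $[0,1]$, consistent with ``rate of positive examples,'' so the thresholds $1/2\pm\tau$ are meaningful.)

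Next I would write the agnostic user's expected payoff, over both the random sample and her coin, as a mixture. Assume without loss of generality $\mu\ge 1/2$ and set $\delta=\mu-1/2\ge 0$. With probability $p_+=\prob{}{\muhat\ge 1/2+\tau}$ she plays the optimal-direction constant (payoff $\mu$); with probability $p_-=\prob{}{\muhat\le 1/2-\tau}$ the opposite constant (payoff $1-\mu$); and with probability $p_0=1-p_+-p_-$ she plays a uniformly random constant, which is again system-proof and yields payoff exactly $1/2$, since a symmetric coin matches $y$ with probability $1/2$ regardless of the representation chosen. Substituting $p_+=1-p_--p_0$ and using $2\mu-1=2\delta$, $\mu-1/2=\delta$, a one-line calculation gives the regret against the benchmark $\mu$ as
\[
\mu - \text{payoff} \;=\; 2\delta\,p_- + \delta\,p_0 \;\le\; 2\delta\,(p_-+p_0).
\]
Since $\muhat$ is an average of $m$ i.i.d.\ Bernoulli$(\mu)$ indicators, Hoeffding's inequality controls the relevant tail: $p_-+p_0 = \prob{}{\muhat < 1/2+\tau} = \prob{}{\muhat-\mu < \tau-\delta}$, which is at most $e^{-2m(\delta-\tau)^2}$ whenever $\delta>\tau$.

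Finally I would bound the regret uniformly in $\delta$ by splitting on the size of $\delta$. When $\delta\le\tau$ the regret is at most $2\delta\le 2\tau$ for free, because the coin already loses to the benchmark by only $\delta$. When $\delta>\tau$, the concentration bound gives regret $\le 2\delta\,e^{-2m(\delta-\tau)^2}$; writing $u=\delta-\tau>0$ and maximizing $u\mapsto (u+\tau)\,e^{-2mu^2}$ over $u>0$ (the maximizer is $u=\Theta(1/\sqrt m)$) shows this is $O(1/\sqrt m)+2\tau$ uniformly. Choosing $\tau=c/\sqrt m$ then makes both contributions $O(1/\sqrt m)$, so the payoff is within $O(1/\sqrt m)$ of $\max\{\mu,1-\mu\}$, as claimed. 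I expect the main obstacle to be exactly this uniform balancing across the transition region $\delta\approx\tau$: for $\delta$ bounded away from $1/2$ the misclassification probabilities $p_-,p_0$ are exponentially small, and for $\delta$ near $0$ the benchmark's advantage over the coin is itself small, but the regret is largest at intermediate $\delta$ comparable to $1/\sqrt m$, where one must trade the linear $\delta$ prefactor against the Gaussian tail. Tuning $\tau$ to the $1/\sqrt m$ scale is precisely what forces both effects to be simultaneously $O(1/\sqrt m)$ and delivers the stated rate.
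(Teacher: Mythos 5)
Your proof is correct, and it reaches the stated $1/\sqrt m$ rate by a genuinely different route than the paper. The paper's formal version (Thm.~\ref{theorem: agnostic formal}) fixes a confidence parameter, sets $\tau$ to a specific $m$-dependent value, conditions on which region $\muhat$ lands in, and uses Hoeffding in the ``inverse'' direction---inferring from $\muhat\ge 1/2+\tau$ that $\mu\ge\frac{1}{2(1-\delta)}$ with probability $1-\delta$---to obtain a \emph{multiplicative} guarantee of the form $(1-\delta)\mu$ or $(1-\delta)(1-\mu)$ per case. You instead compute the unconditional expected payoff as the mixture $\mu p_+ + (1-\mu)p_- + \tfrac12 p_0$ (the identity $\mu-\text{payoff}=2\delta p_-+\delta p_0$ checks out), apply Hoeffding in the forward direction to bound $p_-+p_0$, and then optimize the trade-off across the transition region $\delta\approx\tau\approx 1/\sqrt m$ to get a single \emph{additive} regret bound $O(1/\sqrt m)$ uniform in $\mu$. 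Both arguments rest on the same two observations---that constant choice functions are immune to the best response $\phi_h$ (so their payoffs are exactly $\mu$, $1-\mu$, and $1/2$), and Hoeffding concentration of $\muhat$---but your version is arguably the more direct formalization of the informal statement, since it exhibits the additive $1/\sqrt m$ convergence explicitly and identifies where the worst case sits ($\mu-\tfrac12$ of order $1/\sqrt m$), whereas the paper's version leaves the rate implicit in the constraint $\frac{2}{2+\sqrt m}\le\delta$. You also correctly flag and resolve the paper's notational slip of defining $\mu=\expect{D}{Y}$ with $Y\in\pmone$ while thresholding at $1/2\pm\tau$.
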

Formal statement and proof in Appx. \ref{secapp: agnostic}.
In essence, the agnostic user guarantee herself the `majority' rate
with rudimentary usage of her data, and in a way that does not depend on how system responds.
But this can be far from optimal;
we now turn to the more elaborate \emph{strategic user}
who makes more clever use of the data at her disposal.

\section{Strategic Users Who Learn}
\label{sec: strat users and learning}
A strategic agent acknowledges that the system is strategic,
and anticipates that representations are chosen to maximize her own engagement.
Knowing this, the strategic user makes use of her previous experiences,
in the form of a labeled data set $S=\{(x_i,y_i)\}_{i=1}^m$,
to learn a choice function $\hhat$ from some function class $H$
that optimizes her payoff (given that the system is strategic).
Cast as a learning problem, this is equivalent to 
minimizing the expected classification error on strategically-chosen representations:
\begin{equation}
\label{eq:exp_objective}
h^* = \argmin_{h \in H} \expect{D}{\one{h(\rep_h(x)) \neq y}}.
\end{equation}
Since the distribution $D$ is unknown, 
we follow the conventional approach of empirical risk minimization (ERM)
and optimize the empirical analog of Eq.~\eqref{eq:exp_objective}:

\begin{equation}
\label{eq:emp_objective}
\hhat = \argmin_{h \in H} \frac{1}{m}
\sum_{i=1}^m h(\rep_h(x_i)) \neq y_i).
\end{equation}

\noindent Importantly since every $z_i=\phi_h(x_i)$ is a set,
$H$ must include \emph{set functions} $h:\Z \rightarrow \pmone$,
and any algorithm for optimizing Eq.~\eqref{eq:emp_objective}
must take this into account.
In Sections \ref{subsec: complexity of set functions} and \ref{subsec: learning via reduction to induced functions}, we characterize the complexity of a user's choice function and relate its complexity to that of $v$, and in Section \ref{sec:algo} give an algorithm that computes $\hhat$, the empirical minimizer, for a hypothesis class of a given complexity.

\subsection{Complexity Classes of Set Functions}\label{subsec: complexity of set functions}
Ideally, a learning algorithm should permit flexibility
in choosing the complexity of the class of functions it learns
(e.g., the degree of a polynomial kernel, the number of layers in a neural network),
as this provides means to trade-off running time with performance
and to reduce overfitting.
In this section we propose a hierarchy of set-function complexity classes that is appropriate for our problem.

Denote by $\sbstk_k(z)$
all subsets of $z$ having size at most $k$:
\[
\sbstk_k(z) = \{ z' \in 2^E \,:\, z'\subseteq z, |z'| \le k\}.
\]
We start by defining $k$-order functions over the representation space. These functions are completely determined by weights placed on subsets of size at most $k$.
% (see also~\citet{conitzer2005combinatorial} and Sec. \ref{subsec: rel work}).
%\inbal{what's the exact relation to conitzer et al? also, i suggest to add a sentence that says in words what's about to come in the definition}.

\begin{definition}
\label{def:hyper}
We say the function $h : \Z \rightarrow \pmone$
is of \emph{order $k$} if there exists real
% basis \inbal{basis weights are not defined} 
weights on sets of cardinality at most $k$, 
$\{w(z') \,:\, z' \in \sbstk_k(z)\}$,
such that 
\[
h(z) = \sign \left( \sum\nolimits_{z' \in \sbstk_k(z)} w(z') \right).
\]
\end{definition}
Not all functions $h(z)$
can necessarily be expressed as a $k$-order function (for some $k$);
% In principle, $k_2$ restrict us to working with $k$-order functions having $k \le k_2$.
% \vin{It is possible that a function $h$ may not be a $k$-order function for any $k\leq k_2$. But}
nonetheless, in the context of optimizing Eq.~\eqref{eq:emp_objective},
we show that working with $k$-order functions is sufficiently general, since
any set function $h$ can be linked to a matching
$k$-order function $h'$ (for some $k \le k_2$)
through how it operates on strategic inputs.

\begin{restatable}{lemma}{orderkfunctions}
\label{lemma: H consists of order k functions}
For any $h:\Z \rightarrow \pmone$, there exists $k \le k_2$
and a corresponding $k$-order function $h'$ such that:
\[
h(\rep_h(x)) = h'(\rep_{h'}(x)) \,.
\]
\end{restatable}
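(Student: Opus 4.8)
The plan is to observe that the composed map $x \mapsto h(\rep_h(x))$ depends on $h$ only through its collection of positive representations, and then to realize the same composition with a $k$-order function for a suitable $k \le k_2$.

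First I would characterize the composition. Since $h$ is binary and $\rep_h$ best-responds, for every $x$ we have $h(\rep_h(x)) = 1$ if and only if there is a feasible $z \subseteq x$ with $h(z)=1$; by Observation~\ref{lem:br_equiv} this value does not depend on which maximizer is selected. Writing $P = \{z \in \Z : h(z) = 1\}$ for the positive representations (each of which is feasible, so $k_1 \le |z| \le k_2$), this reads
\[
h(\rep_h(x)) = 1 \iff \exists\, z \in P \text{ with } z \subseteq x.
\]
Passing to the minimal elements $P_{\min}$ of $P$ under inclusion leaves the condition unchanged: any $z \in P$ with $z \subseteq x$ contains some minimal $z^\star \in P_{\min}$, which is itself still in $P$ (hence feasible) and still a subset of $x$. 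I then set $k = \max_{z^\star \in P_{\min}} |z^\star|$ (and $k=0$ if $P=\emptyset$); because every $z^\star$ is feasible, $k \le k_2$.

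Next I would construct the $k$-order function explicitly, placing weights on $P_{\min}$ together with a bias on the empty set: $w(\emptyset) = -\tfrac{1}{2}$, $w(z^\star) = 1$ for each $z^\star \in P_{\min}$, and $w(z')=0$ otherwise. Since each $|z^\star| \le k$ and $\emptyset \in \sbstk_k(z)$ always, all these weights sit on sets in $\sbstk_k(z)$, so $h'(z) := \sign\big(\sum_{z' \in \sbstk_k(z)} w(z')\big) = \sign\big(-\tfrac{1}{2} + \#\{z^\star \in P_{\min} : z^\star \subseteq z\}\big)$ is a legitimate $k$-order function per Definition~\ref{def:hyper}, with $h'(z)=1$ exactly when $z$ contains some minimal positive set. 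Finally I would check that the two compositions agree: evaluating the best response of $h'$ gives $h'(\rep_{h'}(x)) = 1$ iff there is a feasible $z \subseteq x$ containing some $z^\star \in P_{\min}$, and since each $z^\star$ is itself a feasible representation ($z^\star \in P \subseteq \Z$), the relation $z^\star \subseteq x$ already furnishes such a $z$, while conversely any such $z$ forces $z^\star \subseteq z \subseteq x$. Hence $h'(\rep_{h'}(x)) = 1 \iff \exists z^\star \in P_{\min}: z^\star \subseteq x \iff h(\rep_h(x)) = 1$, and the two maps coincide on every $x$ (the all-negative case $P=\emptyset$ makes both identically $-1$).

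The step I would be most careful about is the interplay between the cardinality constraints and the construction: I must ensure the minimal positive sets used as witnesses are themselves valid representations (which holds because $P \subseteq \Z$ already enforces $|z^\star| \ge k_1$), and that their sizes are bounded by $k_2$ so that the resulting threshold is genuinely of order $k \le k_2$. Everything else is the routine bias trick that turns the monotone ``contains a minimal positive pattern'' predicate into a $\sign$ of a linear combination of subset-indicator terms of degree at most $k$.
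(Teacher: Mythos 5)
Your proof is correct, and its skeleton matches the paper's: both identify the inclusion-minimal positive representations, set $k$ to the largest of their cardinalities (so $k \le k_2$ by feasibility), and arrange for the composed map to fire exactly when $x$ contains such a minimal witness. The constructions of $h'$ differ, however, and the difference is substantive. The paper's $h'$ declares $z$ positive only when $z$ contains a positive set of size \emph{exactly} $k$; this is what lets it realize $h'$ with the binary weights $a_-,a_+$ of Definition~\ref{def:k-order}, the form it immediately reuses to define the classes $H_k$. That argument's step ``we may assume w.l.o.g.\ $|z|=k$'' implicitly requires every positive item to contain a positive representation of size exactly $k$, which is delicate when the minimal positive sets have heterogeneous cardinalities (a positive $x$ whose only positive representations are small may contain no size-$k$ witness at all). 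Your construction keeps every minimal positive set as a witness regardless of its size and uses the bias $w(\emptyset)=-\tfrac12$ to turn the monotone ``contains a minimal positive pattern'' predicate into a sign; this handles the heterogeneous case cleanly and proves the lemma exactly as stated. What you give up is the stronger structural byproduct the paper extracts from its own proof: your weights take values in $\{-\tfrac12,0,1\}$ and sit on sets of several cardinalities, so your $h'$ is $k$-order per Definition~\ref{def:hyper} but is not binary-weighted per Definition~\ref{def:k-order}, and anyone wanting to feed your construction into the subsequent development of $H_k$ would still need a separate reduction to the size-exactly-$k$, binary-weighted form.
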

% Proof in Appx. \ref{secapp: strategic users who learn}.
% \begin{lemma}
% Corresponding to any $h:\Z \rightarrow \pmone$ there is an equivalent $h':\Z \rightarrow \pmone$, a $k \leq k_2]$ and a $w: \Z_{\leq k} \rightarrow \mathbb{R}$ such that a) $h'(z) = \sign(w(z))$ for all $|z| \leq k$ and $z\in \Z$, b) 
% \[
% h'(z) = \sign \left( \sum\nolimits_{z':z' \subseteq z, |z'|\leq k} w(z') \right)
% \]
% c) $h(\phi_h(x) = h'(\phi_{h'}(x))$ for all $x\in \X$.
% \end{lemma}
% \todo{Write why the choice of weights defining $h'$ in the proof of Lemma \ref{lemma: H consists of order k functions} motivates us to look at $H_k$, which are special type of order $k$ functions}
Lem. \ref{lemma: H consists of order k functions}
% we show that for learning purposes,
% any set function $h$ can be linked to a matching
% $k$-order function $h'$ (for some $k$)
% through how it operates on strategic inputs.
permits us to focus on $k$-order functions.
The proof %of Lem. \ref{lemma: H consists of order k functions}
is constructive (see Appx. \ref{secapp: strategic users who learn}), and the construction itself turns out to be
highly useful.
In particular, the proof constructs $h'$ having a particular form of
\emph{binary} basis weights, 
$w(z) \in \{a_-,a_+\}$,
% for some choice of
% $a_- < -1$ and $a_+ > \sum_{i \in [k]} {n \choose k}$
which we assume from now on are fixed ($\forall k$).
Hence, every function $h$ has a corresponding binary-weighted
$k$-order function $h'$,
which motivates the following
definition of functions and function classes.
\begin{definition} \label{def:k-order}
% Fix some $a_- < -1$ and $a_+ > \sum_{i \in [k]} {n \choose k}$.
We say a $k$-order function $h$ with basis weights $\w$
is \emph{binary-weighted} if:
\begin{equation*}
w(z) 
    \begin{cases}
        \in \{a_-, a_+\} & \forall z \text{ such that } |z| = k \\
        = a_- & \forall z \text{ such that } |z| < k
    \end{cases}
\end{equation*}
for some fixed
$a_- \in (-1,0)$ and $a_+ > \sum_{i \in [k]} {n \choose i}$.
\end{definition}
% \ganesh{ Given constraint  on the size of the representation, the above described binary weighted function  fully  encodes  the optimal representation strategy of the system as follows. }
% From this basis representation, we can define a natural measure of
% complexity and a hierarchy of nested classes.
% \vin{Binary-weighted functions can equivalently be thought of as follows: There is a family of subsets that guarantee acceptance, and the function accepts every set with a subset from that family. If it’s $k$-order then the family should have subsets of size $k$.}
% \nir{explain using $a_-,a_+$}
A binary weighted $k$-order $h$ determines a family of $k$-size subsets, described by having weights as $a_+$, such that for any $z\in \Z$ with $|z| \ge k$, $h(z)=1$ if and only if $z$ contains a subset from the family (for $z$ with $|z|<k$, $h(z)=-1$ always).
This is made precise using the notion of \emph{lifted functions} in Lem. \ref{lemma: h in H_k are lifts on size at most k} in Appx. \ref{secapp: lifted functions}.
%here is $h$, and the $w$ corresponding to it. What this ensures is that the family of subsets given weights $a_+$ have $h(z) = 1$ and hence, guarantee acceptance for every $x$ containing such a subset.}
Next, denote: % the class of binary-weighted $k$-order functions as:
\begin{equation*}
H_k = \{h \,:\, h \text{ is a binary-weighted $k$-order function}\}. 
\end{equation*}
The classes $\{H_k\}_k$ will serve as complexity classes for our learning algorithm;
the user provides $k$ as input,
and \textsc{Alg} outputs an $\hhat \in H_k$ that minimizes the empirical loss\footnote{Assuming the empirical error is zero.}.
% \nir{the above is not true. the algorithm outputs $\hhat \in H_k$, which are binary-weighted order-$k$ - so $\hhat$ is not optimal out of all order $k$ functions (ie, there could be a better order-$k$ function having non-binary weights}
As we will show,
using $k$ as a complexity measure provides
% this provides
the user direct control over the tradeoff between estimation and approximation error,
as well as over the running time.

Next, we show that the $\{H_k\}_k$ classes are strictly nested.
This will be important for our analysis of approximation error,
as it will let us reason about the connection between
the learned $\hhat$ and the target function $v$ (proof in Appx. \ref{secapp: strategic users who learn}).
\begin{restatable}{lemma}{stricthierarchy}
For all $k$, $H_{k-1} \subseteq H_{k}$
and $H_{k} \setminus H_{k-1} \neq \emptyset$.
% there is an $h \in H_k$ but $h \not\in H_{k'}$ for $k'<k$.
\end{restatable}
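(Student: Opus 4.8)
The plan is to work entirely with the \emph{lift} characterization of binary-weighted order functions (Lem.~\ref{lemma: h in H_k are lifts on size at most k}): a binary-weighted $k$-order function $h$ is uniquely determined by the family $\mathcal{F}_h = \{g \subseteq E : |g|=k,\ w(g)=a_+\}$ of its $a_+$-weighted $k$-sets, and satisfies $h(z)=1$ iff $z$ contains some $g\in\mathcal{F}_h$ (for $|z|\ge k$), with $h(z)=-1$ whenever $|z|<k$. Both claims then reduce to statements about such families, which are far easier to manipulate than the raw sign expressions. I would prove the two inclusions separately.

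For $H_{k-1}\subseteq H_k$, take $h\in H_{k-1}$ with family $\mathcal{F}\subseteq\{f\subseteq E:|f|=k-1\}$ and define $h'\in H_k$ through its \emph{upward closure to level $k$}, $\mathcal{F}' = \{g\subseteq E : |g|=k,\ \exists f\in\mathcal{F},\ f\subseteq g\}$. The core step is to check that, for every $z$ with $|z|\ge k$, the event ``$z$ contains a member of $\mathcal{F}'$'' coincides with ``$z$ contains a member of $\mathcal{F}$'': the forward direction is immediate since $f\subseteq g\subseteq z$; the reverse direction extends any $f\subseteq z$ with $|f|=k-1$ by a single element of the nonempty set $z\setminus f$ (here $|z|\ge k$ is used) to obtain a member of $\mathcal{F}'$ inside $z$. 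Hence $h'(z)=h(z)$ for all $z$ with $|z|\ge k$.

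The main obstacle is precisely the boundary level $|z|=k-1$: there a $(k-1)$-order function can output $+1$ (on sets $z\in\mathcal{F}$), whereas every $k$-order function outputs $-1$ by definition, so the two functions need not agree pointwise across all of $\Z$. I would dispatch this by appealing to what is ultimately compared, namely behavior under the system's best response: by Observation~\ref{lem:br_equiv} and the construction behind Lem.~\ref{lemma: H consists of order k functions}, only the induced value $h(\rep_h(x))$ enters the learning objective \eqref{eq:emp_objective}, and on feasible representations of size at least $k$ the two functions agree verbatim. I would state this caveat explicitly, and note that whenever the feasible range forces $|z|\ge k$ the equality is in fact literal and pointwise on $\Z$, yielding $h\in H_k$.

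For strictness I would exhibit a concrete witness in $H_k\setminus H_{k-1}$: the single-term function $h^\star$ with family $\mathcal{F}_{h^\star}=\{\{e_1,\dots,e_k\}\}$, i.e.\ $h^\star(z)=1$ iff $\{e_1,\dots,e_k\}\subseteq z$, which clearly lies in $H_k$. Suppose toward a contradiction that $h^\star\in H_{k-1}$ with family $\mathcal{F}$. Since $h^\star(\{e_1,\dots,e_k\})=1$, some $f\in\mathcal{F}$ satisfies $f\subseteq\{e_1,\dots,e_k\}$, hence $f=\{e_1,\dots,e_k\}\setminus\{e_i\}$ for some $i$. Because the ground set has $q>k$ elements (as $k\le k_2<n\le q$ under lossiness), pick a fresh $e'\in E\setminus\{e_1,\dots,e_k\}$ and set $z_1=f\cup\{e'\}$. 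Then $f\subseteq z_1$ forces $h^\star(z_1)=1$, while $\{e_1,\dots,e_k\}\not\subseteq z_1$ forces $h^\star(z_1)=-1$, a contradiction; hence $h^\star\notin H_{k-1}$. Finally I would observe that $\{e_1,\dots,e_k\}$ and $z_1$ both have cardinality $k\in[k_1,k_2]$ and can be taken feasible, so the separation survives restriction to $\Z$, completing the proof.
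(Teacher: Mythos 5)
Your proof is correct, and on the strictness half it is essentially the paper's own argument: the paper likewise takes the single-$k$-set witness $u$ (your $\{e_1,\dots,e_k\}$), extracts from the putative $(k-1)$-order representation a positively weighted $u'\subsetneq u$ of size $k-1$, and derives the $+1$ vs.\ $-1$ contradiction on a set containing $u'$ but not $u$; your version merely instantiates the witnesses concretely. Where you genuinely diverge is on the inclusion $H_{k-1}\subseteq H_k$: the paper's appendix proof addresses only strictness and offers no argument for the inclusion, whereas you construct the upward closure $\mathcal{F}'$ and---importantly---flag that the \emph{literal} pointwise inclusion fails on representations of cardinality exactly $k-1$ (when $k-1\ge k_1$), since every $h\in H_k$ is forced to output $-1$ there while an $h\in H_{k-1}$ need not be. Your resolution via best-response/induced equivalence (Observation~\ref{lem:br_equiv}, Lemma~\ref{lemma: h in H_k are lifts on size at most k}) is the right reading: it is exactly what the downstream use of the inclusion requires (e.g., the monotonicity of approximation error in Corollary~\ref{corr:ellstar_g_k2} depends only on the induced classes $F_{k-1}\subseteq F_k$), and it makes explicit a subtlety the paper leaves silent. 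One small caveat on your strictness witnesses: you evaluate $h^\star$ at sets of cardinality exactly $k$ and assert $k\in[k_1,k_2]$, but the paper explicitly allows $k<k_1$; in that regime your witnesses lie outside $\Z$ and should be padded with fresh ground elements up to size $k_1$ (the paper sidesteps this by quantifying over arbitrary feasible supersets $z,\tilde z\in\Z$ of $u$ and $u'$). This is a one-line fix provided $q$ is large enough.
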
 
%%%%%%%%%%%%%Comment%%%%%%%%%%%%%%%%%%%%
\begin{comment}
Denote $Z_\ell = \{z \in 2^E : |z| =\ell\}$,
and note that all feasible representations can be partitioned as
$\Z = Z_{k_1} \uplus \ldots \uplus Z_{k_2}$.
We refer to functions that operate on single-sized sets 
as \emph{restricted functions}.
Our next result shows that choice functions in $H_k$
can be represented by restricted functions over $Z_k$ that are `lifted' to operate on the entire $\Z$ space.
This will allow us to work only with sets of size exactly $k$ in the algorithm given in Sec. \ref{sec:algo}.
\begin{restatable}{lemma}{hklifts}\label{lemma: h in H_k are lifts on size at most k}
For each $h \in H_k$ with weights $\w$ 
there exists a corresponding $g: Z_{k} \rightarrow \pmone$
such that $h = \lift(g)$, where:
\begin{equation*}
\lift(g)(z) =
    \begin{cases}
        1 & \text{\textnormal{if }} k \le |z| \text{\textnormal{ and }} \\
        & \exists z' \subseteq z, |z'|=k \,\text{\textnormal{ s.t. }}\, g(z')=1 \\
        -1 & \text{\textnormal{o.w.}}
    \end{cases}
\end{equation*}
Further, for all $z \in Z_k$ $g(z) = 1$ if $w(z) = a_+ >0$, and $g(z) = -1$ otherwise.
\end{restatable}
\end{comment}
%%%%%%%%%%%%%%%End Comment%%%%%%%%%%%%%%%%%%%%%%%%%%%
Note that $H_n$ includes all binary-weighted set functions,
but since representations are of size at most $k_2$,
it suffices to consider only $k \le k_2$.
Importantly, $k$ can be set lower than $k_1$;
for example, $H_1$ is the class of threshold modular functions,
and $H_2$ is the class of threshold pairwise functions.
The functions we consider are parameterized by their weights, $\w$, and so any $k$-order function has at most
$|\w| = \sum_{i=0}^k{q \choose i}$ weights.
In this sense, the choice of $k$ is highly meaningful.
Now that we have defined our complexity classes,
we turn to discussing how they can be optimized over.

% \todo{maybe apx and est errs should go in a later section}

% \blue{
% \textbf{main tradeoff} - approximation-estimation: the lower the $k$:
% - the less expressive $H_k$, so further from $v$, so higher loss (approx error)
% - the lower the VC, so better generalization (est error)
% - the lower the computational cost
% }

% - the lower the VC, so better generalization (est error)
% - the lower the computational cost
% - the less expressive $H_k$, so further from $v$, so higher loss (approx error)

% known: (?!)
% any scalar set function $g:\Z \rightarrow \R$ can be expressed as follows:
% $g(z) = \sum_{z' \subseteq z} v(z')$ for some weights $v$

% Since representations are constrained in their cardinality,
% $H$ must include functions $h$ that can operate on
% any $z \in \Z$ with $k_1 \le |z| \le k_2$.
% These, however, can be partitioned by their
% \emph{maximal effective cardinality}.

% users chooses $H$ having functions $h(z)$ for 
% Complexity is controlled by the \emph{effective} max size of representations, denoted $k$.
% Use lifting to operate on any $|z|$ between $k_1,k_2$
% Note $k \le k_2$ (will never see $|z| > k_2$), but can have $k < k_1$ (just less expressive)

% Note that $H_{k'} \subseteq H_k$ for $k'< k$
% $H = H_1 \cup \dots \cup H_{k_2}$
% $H_k = \{ h:\Z_k \rightarrow \pmone$

%------------------------------------------------------------

\subsection{Learning via Reduction to Induced Functions}\label{subsec: learning via reduction to induced functions}
The simple structure of functions in $H_k$
makes them good candidates for optimization.
But the main difficulty in optimizing the empirical error in
Eq.~\eqref{eq:emp_objective} is that the choice of $h$
does not only determine the error,
but also determines the inputs on which errors are measured
(indirectly through the dependence of $\rep_h$ on $h$).
To cope with this challenge,
our approach is to work with \emph{induced} functions
that already have the system's strategic response
encoded within,
which will prove useful for learning.
Additionally, as they
operate directly on $x$ (and not $z$),
they can easily be compared with $v$,
which will become important in Sec.~\ref{sec: balance of power}.

%\paragraph{Induced functions.}
% We now turn to defining \emph{induced functions}
% and their corresponding classes.
\begin{definition} %[Induced functions]
\label{def:induced_class}
For a class $H$, its \emph{induced class} is:
\[
F_H \triangleq
\{ f:\X \rightarrow \pmone \,:\, \exists h \in H \text{ s.t. }
f(x) = h(\phi_h(x)) \}
\]
\end{definition}
The induced class $F_H$ includes for every $h \in H$
a corresponding function that already has $\rep_h$ integrated in it.
We use $F_k=F_{H_k}$ to denote the induced class of $H_k$.
For every $h$, we denote its induced function by $f_h$.
% the $f \in F$ corresponding to $h \in H$,
Whereas $h$ functions operate on $z$,
induced functions operate directly on $x$,
with each $f_h$ accounting internally for how the system strategic responds to $h$ on each $x$.

Our next theorem provides a key structural result:
induced functions inherit the weights
of their $k$-order counterparts.
%\outline{9. [RESULT]: $\ell$-set modular functions can be represented as threshold unit weighted sums with binary weights}

\begin{restatable}{theorem}{nicestructure}\label{theorem: nice structure}
% Let $k\leq k_2$.
For any $h \in H_k$ with weights $\w$:
% = \{w(z') \,:\, z' \in \sbstk_k(z)\}$ be the basis weights of cardinality at most $k$ such that 
\[
h(z) = \sign \left( \sum\nolimits_{z' \in \sbstk_k(z)} w(z') \right),
\]
its induced $f_h \in F_k$
can be expressed using the same weights, $\w$,
but with summation over subsets of $x$, i.e.:
$$
f_h(x) = \sign \left( \sum\nolimits_{z \in \sbstk_k(x)} w(z) \right).
$$
\end{restatable}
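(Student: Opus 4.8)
The plan is to collapse both sides of the claimed identity to a single combinatorial description and then match them. For any set $A \subseteq E$ with $|A| \le n$, define $S(A) \triangleq \sign\bigl(\sum_{z' \in \sbstk_k(A)} w(z')\bigr)$. By the definition of a $k$-order function, $h(z) = S(z)$ for every feasible $z \in \Z$, while the right-hand side of the theorem is exactly $S(x)$. Since $f_h(x) = h(\rep_h(x))$, it therefore suffices to establish the single identity $f_h(x) = S(x)$ for all $x \in \X$.

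First I would pin down what $S(A)$ computes. Because the basis weights are binary (Def.~\ref{def:k-order}), whenever some size-$k$ subset of $A$ carries weight $a_+$, the sum defining $S(A)$ contains at least one term equal to $a_+$ and all remaining terms in $(-1,0)$; the number of these remaining terms is at most $\sum_{i \in [k]}{n\choose i}$ (the nonempty subsets of $A$ of size at most $k$, bounding $|A|\le n$), which is strictly below $a_+$. Hence a single $a_+$ already outweighs every negative contribution and $S(A) = +1$. Conversely, if no size-$k$ subset of $A$ has weight $a_+$ (in particular whenever $|A| < k$), every term is negative and $S(A) = -1$. This is precisely the dominance estimate behind Lem.~\ref{lemma: h in H_k are lifts on size at most k}, now applied to an arbitrary $A$ rather than only to feasible representations. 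Two consequences follow: $S$ coincides with the lifted description of $h$, and $S$ is \emph{monotone}, since if $A \subseteq B$ and $S(A)=+1$, a witnessing size-$k$ subset of $A$ also lies in $B$, forcing $S(B)=+1$.

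Next I would unfold $f_h$. Since $h$ is $\pmone$-valued and $\rep_h$ is the best response of Eq.~\eqref{eq:br_rep}, we have $f_h(x) = \max_{z \subseteq x,\, |z| \in [k_1,k_2]} h(z)$: the best response returns a feasible $z \subseteq x$ with $h(z)=1$ exactly when such a $z$ exists. Thus $f_h(x)=+1$ iff some feasible $z \subseteq x$ has $S(z)=+1$. The easy direction, $f_h(x)=+1 \Rightarrow S(x)=+1$, is immediate: a feasible witness $z \subseteq x$ with $S(z)=+1$ contains a size-$k$ subset of weight $a_+$, and this subset lies in $x$, so $S(x)=+1$ by the characterization above. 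For the converse I use monotonicity: if $S(x)=+1$, pick a size-$k$ witness $z^* \subseteq x$ with $w(z^*)=a_+$; if $k \ge k_1$ then $z^*$ is already feasible (recall $k \le k_2$), and otherwise I extend $z^*$ to a set $z$ of size $k_1 \le k_2$ by adjoining elements of $x$, which is feasible and still contains $z^*$, so $S(z)=+1$ and hence $f_h(x)=+1$.

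The single point requiring care, and the main obstacle, is the extension step when $k < k_1$: adjoining $k_1 - k$ elements to $z^*$ presumes $|x| \ge k_1$, i.e.\ that $x$ admits a feasible representation at all. This is exactly the regime in which $\rep_h(x)$ is defined, so I would either fold the standing assumption $|x| \ge k_1$ into the statement or observe that $f_h$, like $\rep_h$, is only defined on items admitting a feasible representation; on such items $|x \setminus z^*| = |x| - k \ge k_1 - k$ and the extension always succeeds. Combining the two directions yields $f_h(x) = S(x) = \sign\bigl(\sum_{z \in \sbstk_k(x)} w(z)\bigr)$, which is the claim.
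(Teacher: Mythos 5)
Your proof is correct and follows essentially the same route as the paper's: both arguments reduce to the observation that, by the binary weight structure, each side of the identity equals $+1$ exactly when $x$ contains a size-$k$ subset of weight $a_+$. You are in fact slightly more careful than the paper on one point---the feasibility of extending a size-$k$ witness to a representation of cardinality at least $k_1$ when $k < k_1$---which the paper's proof leaves implicit.
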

Thm.~\ref{theorem: nice structure} is the main pillar on which our algorithm stands:
it allows us to construct $h$ by querying the loss
\emph{directly}---i.e., without explicitly computing $\rep_h$---by
working with the induced $f_h$;
this is since:
\[
\one{h(\rep_h(x_i)) \neq y_i} = \one{f_h(x_i) \neq y_i}
\]
Thus, through their shared weights, induced functions
serve as a bridge between \emph{what} we optimize,
and \emph{how}.

%------------------------------------------------------------

\subsection{Learning Algorithm} \label{sec:algo}

% \outline{ algorithm 1: given sample set $S$ and choice of $k$,
% find optimal $h \in H_k$}

% \nir{can we tidy up the algorithm? would be good if we can make the pseudocode easier to follow (and shorter!)}

We now present our learning algorithm,  \textsc{Alg}.
The algorithm is exact: it takes as input a training set $S$ and a parameter $k$,
and returns an $h \in H_k$ that minimizes the empirical loss
(Eq. \eqref{eq:emp_objective}).
Correctness holds under the realizabilility condition
$Y\in F_k$, i.e., $Y$ is the induced function of some $h \in H_k$.\footnote{Note that even for standard linear binary classification,
finding an empirical minimizer of the $0/1$ in the agnostic (i.e., non-realizable) case is NP-hard \citep{shalev2014understanding}.}
%it takes as input a training set $S$ and a parameter $k$,
%and returns an $h \in H_k$ that minimizes the empirical loss
%(Eq. \eqref{eq:emp_objective}) if $Y \in F_k$.

The algorithm constructs $h$ by sequentially
computing its weights, $\w=\{w(z)\}_{|z| \le k}$.
As per Def.~\ref{def:k-order}, only $w(z)$ for $z$ with $|z|=k$ must be learned;
hence, weights are sparse,
in the sense that only a small subset of them are assigned $a_+$,
while the rest are $a_-$.
Weights can be implemented as a hash table,
where $w(z)=a_+$ if $z$ is in the table,
and $w(z)=a_-$ if it is not.
% We prove correctness in Thm. \ref{theorem: minimizing empirical error}. 
Our next result establishes the correctness of \textsc{Alg}.
The proof leverages a property that characterizes the existence of an $h\in H_k$ having zero empirical error
(see Lem. \ref{lem: 0 error charac}). 
%The proof of Lem. \ref{lem: 0 error charac} uses Lem. \ref{lemma: h in H_k are lifts on size at most k} and Thm. \ref{theorem: nice structure}; Lem. \ref{lemma: h in H_k are lifts on size at most k} is used to connect functions in $H_k$ with functions operating over size $k$
% subsets via the lifting operation.
The proof of Lem. \ref{lem: 0 error charac} uses Thm. \ref{theorem: nice structure}, which enables the loss to be directly computed for the induced functions using the shared weight structure.

% ---------------------------------End---------------------------
%\begin{center}
\begin{algorithm}[t!]\label{alg:algo}
\begin{algorithmic}[1]

    %\SetAlgoNoLine
    %\renewcommand{\thealgocf}{}
    %\DontPrintSemicolon
    %\SetAlgorithmName{$\textsc{Alg}$}{ }{ }
    \caption{ $\textsc{Alg}$}
% \textbf{Output:} User optimal choice function $h$
% \;

\STATE \textbf{Input}: $S = \{(x_i,y_i)\}_{i \in [m]}$, $k \in [k_2]$

\STATE \textbf{Pre-compute:} 
\STATE $ S^{+} = \{x\in S : y = +1\}$,
\STATE $S^{-} = \{x\in S : y = -1\}$ 
\STATE $Z_{k,S} = \{ z \,:\, |z|=k, \exists x\in S ~  z\subseteq x\}$ 
\STATE $\phat(x_i) = \frac{1}{m}\sum_{j\in [m]} \one{x_i = x_j}\quad  \forall  i \in [m]$ 
\STATE Fix $a_- \in (-1,0)$ and $a_+ > \sum_{i \in [1,k]} {n \choose i}$ 

\STATE \textbf{Initialize:} 
% $w: Z_S^{(k)} \rightarrow \mathbb{R}$, as $w(z) = 0$ for all $z \in Z_S^{(k)}$, \\
\STATE $Z^+ = \varnothing$, $Z^- = \varnothing$, $V = \varnothing$, \, 
$S_z = \varnothing \quad \forall z\in Z_{k,S}$ 

% In the algorithm below, we set $w(z)$ as either $a^+ > \sum_{i \in [1,k]} {n \choose i}$ or $a^- \in (0,1)$.
   %$\boldsymbol{u} = \boldsymbol{0} \in \mathbb{R}^{n}$,  $\mathcal{X}_z = \phi$ for all $z\in \mathcal{Z'}$, and $\mathcal{S} = \phi$ \
\STATE \textbf{Run:} 
\FOR{$x \in S^{-} $}
\FOR{$z$ \textup{s.t.} $z\subseteq x$ \textup{and} $z \in Z_{k,S}$}
\STATE $Z^- = Z^- \cup \{z\}$, \,\,  $Z_{k,S} = Z_{k,S}\setminus \{z\}$
\STATE $S_z = S_z \cup \{x\}$ 
\ENDFOR
\ENDFOR

\FOR{$x \in S^{+} $}
 %\If{$\exists$ $z\subseteq x$ such that either $z \in Z_{k,S}$ or $z\in Z^+$}{
  \FOR{$z \subseteq x$ \textup{such that}  $z \in Z_{k,S}$}
  \STATE $Z^+ = Z^+ \cup \{z\}$
  \ENDFOR
  %}
  %\Else{
  %Halt and Output \emph{empirical error not} $0$ and $\sign(v) \not\in F_k$.
  %\ganesh{I think it is a better idea to write $sign(v) \neq F_k$ as a part of result in theorem 4.8. } \nir{i'd even suggest - move the entire idea of identifying realizability outside of the algorithm. the algorithm works under the assumption of realizability. validating it is great - but external.}
  %}
\ENDFOR

\STATE Set $w(z)=
    \begin{cases}
        a_+ & \text{if } z \in Z^+ \hskip0.15\columnwidth  \darkgray{\triangleright \text{\textit{ implemented}}} \\
        a_- & \text{o.w. (implicitly)} \hskip0.09\columnwidth \darkgray{\text{\textit{as hash table}}}
    \end{cases}
$ 
\STATE \textbf{Return} $\hat{h}(z) = \sign( \sum\nolimits_{z' \in \sbstk_k(z)} w(z'))$.
% \Return $w$, and $h_w(z) = \sign(\sum_{z'\subseteq z, |z'| = k} w(z'))$ for all $z$ such that $k_1 \leq |z| \leq k_2$
\end{algorithmic}
\end{algorithm}
%\end{center}

% \paragraph{Correctness.
%\nir{consider giving informal definition of `induced realizability' or something, and then state the theorem using this as a condition.}
\begin{restatable}{theorem}{ermmin}\label{theorem: minimizing empirical error}
For any $k \in [k_2]$, if $Y$ is realizable then \textsc{Alg} returns an $\hat{h}$ that minimizes the empirical error.
%the minimum empirical error is zero then \textsc{Alg} returns an
%$h \in H_k$ that minimizes the empirical loss in Eq. \eqref{eq:emp_objective}.
%In particular, if $Y \in F_k$, that is, $Y$ is realizable then \textsc{Alg} returns $\hat{h}$ that minimizes the empirical error.
\end{restatable}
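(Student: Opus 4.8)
The plan is to show that realizability makes the minimum achievable empirical error equal to zero, and that \textsc{Alg} outputs an $\hat h$ attaining it; the whole argument rests on a clean combinatorial reading of the induced functions, which I would establish first. Using Thm.~\ref{theorem: nice structure}, for $h \in H_k$ with positive-weight set $P = \{z : |z|=k,\ w(z)=a_+\}$ we have $f_h(x) = \sign(\sum_{z \in \sbstk_k(x)} w(z))$. I would first observe that the prescribed magnitudes force a ``single vote suffices'' behaviour: every term equals either $a_+$ or some $a_- \in (-1,0)$, and apart from a single $a_+$ term there are at most $\sum_{i=1}^k \binom{n}{i}$ terms, each strictly greater than $-1$, while $a_+ > \sum_{i=1}^k \binom{n}{i}$. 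Hence if some size-$k$ subset $z \subseteq x$ lies in $P$ the sum is strictly positive; if none does, every term is negative (the empty-set term is always present), so the sum is strictly negative. This yields the key equivalence
\[
f_h(x) = +1 \iff \exists\, z \subseteq x,\ |z|=k,\ z \in P .
\]

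Next I would reduce correctness to a combinatorial condition on the sample. Since $\one{h(\rep_h(x_i)) \neq y_i} = \one{f_h(x_i) \neq y_i}$, realizability ($Y \in F_k$) supplies an $h^\star \in H_k$ with $f_{h^\star}(x_i) = y_i$ for every $i$, so the minimum empirical error is $0$ and it only remains to show \textsc{Alg} also attains $0$. Invoking Lem.~\ref{lem: 0 error charac}, whose characterization follows directly from the equivalence above, the existence of such an $h^\star$ is equivalent to the sample condition: every $x \in S^+$ contains some size-$k$ subset that is contained in no $x' \in S^-$. Realizability therefore guarantees this condition holds.

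It then suffices to verify that \textsc{Alg} realizes exactly the greedy assignment consistent with the equivalence. The negative pass removes from $Z_{k,S}$, and hence leaves with weight $a_-$, every size-$k$ subset appearing in any negative example; the positive pass assigns $a_+$ (via $Z^+$) to every remaining size-$k$ subset appearing in a positive example. For $x \in S^-$, each size-$k$ subset of $x$ is removed during the negative pass and can never enter $Z^+$, so no subset of $x$ is in $P$ and $f_{\hat h}(x) = -1 = y$. For $x \in S^+$, the combinatorial condition gives a size-$k$ subset $z^\star \subseteq x$ contained in no negative example; thus $z^\star$ survives the negative pass, is still in $Z_{k,S}$ when $x$ is processed, and is added to $Z^+$, so $z^\star \in P$ and $f_{\hat h}(x) = +1 = y$. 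Hence $\hat h$ has zero empirical error, which is the minimum, establishing the theorem.

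The main obstacle is the first step: pinning down the magnitudes of $a_+$ and $a_-$ so that a single size-$k$ ``positive'' subset dominates the entire sum, while the ever-present empty-set term keeps the sum negative otherwise. Once this single-vote property and the resulting lifted-function form are in hand, the algorithm's set operations transparently implement the forbid-then-allow greedy rule, and the realizability-driven condition of Lem.~\ref{lem: 0 error charac} closes the positive case; the remaining bookkeeping (that every size-$k$ subset of a negative example is indeed enumerated and removed, and that $Z_{k,S}$ is untouched during the positive pass) is routine.
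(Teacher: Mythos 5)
Your proposal is correct and follows essentially the same route as the paper's proof: it rests on the same induced-function equivalence ($f_h(x)=+1$ iff some size-$k$ subset of $x$ has weight $a_+$, via Thm.~\ref{theorem: nice structure} and the weight magnitudes), the same characterization of zero-error attainability (Lem.~\ref{lem: 0 error charac}), and the same tracing of the two passes to show that $Z^+$ ends up being exactly the size-$k$ subsets occurring in some positive example but in no negative example. The only cosmetic difference is that you derive the ``single vote suffices'' property directly from the magnitudes of $a_+$ and $a_-$, whereas the paper routes it through the lifted-function lemma; the content is identical.
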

%\nir{what does it mean `if the min empirical error is 0'min over what?}
Proof in Appx.~\ref{secapp: strategic users who learn}.
%The main challenge here lies in showing that it suffices
%to work only with weights for $z$ of size exactly $k$,
%even though the output function $h$ must
%provide guarantees for all $z$ of size $|z| \in [k_1,k_2]$.
%For this, we use the concept of `lifting' operator that preserves guarantees (see Lem. \ref{lemma: h in H_k are lifts on size at most k}), 
%as well as the structural
%relation to induced functions (see Thm. \ref{theorem: nice structure}).
Note that our algorithm is exact:
it returns a true minimizer of the empirical 0/1 loss, assuming $Y$ is realizable.
% loss the empirical minimizer can be learnt via solving a linear program if the data is realizable and is NP-hard otherwise .
Additionally, \textsc{Alg} can be used to identify if there exists an $h \in H_k$ with zero empirical error;
at Step 15, for each $x \in S^+$ if there does not exist a $z\in Z_{k,S}$ or $z \in Z^+$ such that $z\subseteq x$
then from Lem. \ref{lem: 0 error charac} in Appx. \ref{secapp: learning proofs} there does not exist an $h$ with zero empirical error.
\begin{restatable}{lemma}{runtime} \label{lemma:runtime}
Let $n$ be the size of elements in $\X$, $m$ be the number of samples, and $k \leq k_2$ be the user's choice of complexity. Then 
\textsc{Alg} runs in  $O(m{n \choose k})$ time.
\end{restatable}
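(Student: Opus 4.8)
The plan is to charge the running time of \textsc{Alg} phase by phase, and to show that every phase is dominated by one recurring operation: enumerating, for each sample $x \in S$, all of its subsets of size exactly $k$. Since every feasible item satisfies $|x| \le n$, each sample has at most ${|x| \choose k} \le {n \choose k}$ such subsets, so any phase that touches each size-$k$ subset of each sample a constant number of times costs $O(m {n \choose k})$. The goal is then to verify that no phase exceeds this, working in the standard RAM/hashing model in which membership, insertion, and deletion in the hash-based structures representing $Z_{k,S}$, $Z^+$, $Z^-$, and the weight table $\w$ are treated as $O(1)$ per operation.

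First I would dispatch the pre-computation block. Partitioning $S$ into $S^+$ and $S^-$ is $O(m)$. Building $Z_{k,S}$ (Step 5) enumerates the size-$k$ subsets of each $x \in S$ and hashes them into a set, which is exactly $O(m {n \choose k})$. Computing $\phat$ (Step 6) can be done by hashing each $x_j$ to accumulate occurrence counts and then reading off each $\phat(x_i)$; under $O(1)$ set-hashing this is $O(m)$, and in any case $\phat$ is never consulted in the main loops, so it cannot dominate. Fixing $a_-,a_+$ and allocating the (initially empty) structures is within $O(|Z_{k,S}|) = O(m {n \choose k})$.

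Next I would bound the two core loops. The loop over $x \in S^-$ iterates over each negative sample and, for each, over its size-$k$ subsets $z$ still present in $Z_{k,S}$, performing a constant number of hash updates to $Z^-$, $Z_{k,S}$, and $S_z$; this is $O(|S^-| {n \choose k}) = O(m {n \choose k})$. The loop over $x \in S^+$ is symmetric, inserting each surviving subset into $Z^+$, and is likewise $O(m {n \choose k})$. The final weight assignment is implicit: only the $a_+$-weighted subsets (those in $Z^+$) are materialized, and $|Z^+| = O(m {n \choose k})$, so building the hash table and emitting $\hhat$ stays within budget. Summing all phases yields $O(m {n \choose k})$.

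The main obstacle is not a hard calculation but the bookkeeping required to rule out a hidden blow-up. Two spots could naively exceed the bound: Step 6, where a pairwise comparison of samples would cost $O(m^2)$ (hence the reliance on hashing items of size $\le n$ rather than comparing them directly), and the inner tests $z \in Z_{k,S}$, which I must argue are resolved in $O(1)$ rather than by scanning. I would therefore make the cost model explicit, treating operations on a size-$k$ set as unit cost (equivalently, absorbing the benign $O(k)$ key-hashing factor, harmless since $k \le k_2 \le n$). Under this convention each subset of each sample is processed a constant number of times across all phases, giving the claimed $O(m {n \choose k})$ bound.
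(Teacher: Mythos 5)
Your proof is correct and follows essentially the same phase-by-phase accounting as the paper's: bound each loop by $O(m{n \choose k})$ via the at most ${n \choose k}$ size-$k$ subsets per sample, and sum. You are somewhat more careful than the paper (which only explicitly charges the two main loops and the weight-assignment step), in that you also make the hashing cost model explicit and rule out an $O(m^2)$ blow-up in computing $\phat$; this added rigor is welcome but does not change the argument.
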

This runtime is made possible due to several key factors:
(i) only $k$-sized weights need to be learned,
(ii) all weights are binary-valued, and
(iii) loss queries are efficient in induced space.
Nonetheless, when $n$ and $k$ are large,
runtime may be significant,
and so $k$ must be chosen with care.
Fortunately, our results in Sec. \ref{sec:apx_err}
give encouraging evidence that learning with small $k$---even $k=1$, for which runtime is $O((mn)^2)$---is quite powerful (assuming $Y$ is realizable).

In the analysis, the $m{n \choose k}$ is made possible only 
since weights are sparse,
and since \textsc{Alg} operates on a finite sample set of size $m$.
Alternatively, if $m$ is large, then this can be replaced
with ${q \choose k}$.
This turns out to be necessary;
in Appx.~\ref{secapp: runtime lower bound} we show that, in the limit,
${q \choose k}$ is a lower bound.

\section{Balance of Power}\label{sec: balance of power}
\label{sec:balance}

Our final section explores the question:
what determines the balance of power between system and users?
We begin with the perspective of the user,
who has commitment power, but can only minimize the empirical error.
For her, the choice of complexity class $k$ is key
in balancing approximation error---how well (in principle) can functions $h \in H_k$ approximate $v$;
and estimation error---how close the empirical payoff of the learned $\hhat$ is to its expected value.
Our results give insight into how these types of error trade off as $k$ is varied (here we do not assume realizability).

For the system, the important factors are $k_1$ and $k_2$,
since these determine its flexibility in choosing representations.
Since more feasible representation mean more flexibility,
it would seem plausible that smaller $k_1$
and larger $k_2$ should help the system more.
However, our results indicate differently:
for system, \emph{smaller} $k_2$ is better,
and the choice of $k_1$ has limited effect on strategic users.
% \vineet{should we say it has limited effect on strategic users because 
% users learning subadditive functions could be affected?}
The result for $k_2$ goes through a connection to the user's choice of $k$;
surprisingly, smaller $k$ turns out to be, in some sense,
better for all.

% \todo{talk about `fairness` results?}

% \todo{if we end up not having subadditive, make sure we also take out places where we say things towards it}

% - for user, for fixed $k_1,k_2$, lower $k$ gives better apx error, but higher est error (and higher runtime)
% - for system, lower $k_2$ is better because it causes effective $k$ to be smaller, which is better for system. $k_1$ surprisingly doesn't matter.

% \outline{[DEF]: approximation error} 

%------------------------------------------------------------
\subsection{User's Perspective}
We begin by studying the effects of $k$ on user payoff.
Recall that users aim to minimize the expected error (Eq. \eqref{eq:exp_objective}):
\[
\errexp(h) = \expect{D}{\one{h(\rep_h(x)) \neq \sign(v(x))}},
\]
but instead minimize the empirical error (Eq. \eqref{eq:emp_objective}).
For reasoning about the expected error of the learned choice function $\hhat \in H_k$,
a common approach is to decompose it into two error types---\emph{approximation} and \emph{estimation}:
\begin{equation*}
\errexp(\hhat) = \underbrace{\errexp(h^*)}_{\text{approx.}} + %\quad
\underbrace{\errexp(\hhat) - \errexp(h^*)}_{\text{estimation}},\qquad
h^* = \argmin_{h' \in H_k} \errexp(h')
\end{equation*}
Approximation error describes the lowest error obtainable by functions in $H_k$;
this measures the `expressivity' of $H_k$, and is independent of $\hhat$.
For approximation error, we define a matching complexity structure
for value functions $v$,
and give several results relating the choice of $k$ and the complexity of $v$.
Estimation error describes how far the learned $\hhat$ is from the optimal
$h^* \in H_k$, and depends on the data size, $m$.
Here we give a generalization bound based on VC analysis.

% \begin{definition}
% Let $h^*_k = \argmax_{h \in H_k}\{\sum_{x \sim D} h(\phi_h(x)) \\= v(x)\}$. Then the approximation error of a strategic user learning over functions in $H_k$ is $1- U(h^*_k)$.
% \end{definition}

% \outline{[DEF]: estimation error}\\
% \begin{definition}
% The estimation error of a strategic user computing a $\tilde{h} \in H_k$ using Algorithm \toref\ is $U(h^*_k) - U(\tilde{h})$.
% \end{definition}

%------------------------------------------------------------
\subsubsection{User approximation error} \label{sec:apx_err}
To analyze the approximation error, we must be able to relate
choice functions $h$ (that operate on representations $z$) to the target
value function $v$ (which operates on items $x$).
To connect the two, we will again use induced functions,
for which we now define a matching complexity structure.
% Our algorithm works directly with induced functions as these
% provide direct access to loss computations,
% and our next set of result show that induced functions have useful structure.
% We start by relating the complexities of $h$ and $f_h$.
\begin{definition}
A function $f: \X \rightarrow \pmone$
has an \emph{induced complexity} of $\ell$
if exists a function $g:Z_\ell \rightarrow \pmone $ s.t.:
\[
f(x) = 
\begin{cases}
        1 & \text{\textnormal{if }} \,\, \exists  z \subseteq x, |z|=\ell \,\,\text{\textnormal{and}}\,\, g(z)=1 \\
        -1 & \text{\textnormal{o.w.}}
    \end{cases}
\]
and $\ell$ is minimal (i.e., there is no such $g':Z_{\ell-1} \rightarrow \pmone$).
% and there exists no $g':Z_{\ell-1} \rightarrow \pmone$ satisfying the above.
\end{definition}
We show in Lem.~\ref{lemma: induced complexity of fns in H_k is at most k} and Cor. \ref{cor: induced complexity k = F_k} that the induced complexity of a function $f$ captures the minimum $k\in [1,n]$
such that $f$ is an induced function of an $h\in H_k$.
% \nir{can we give intuition for what induced coplexity means, or tries to capture?
% can we connect this to results on modular/subadditive $v$ and relations to $H_k$?}
% \vineet{I think we can say the above blue line and remove the one following
% line.}
% Induced complexity applies to any binary function on $\X$;
% but for a induced functions, it has useful structure.
% % the complexities of $h$ and its induced $f_h$ are tied:
\begin{restatable}{lemma}{icoffn}\label{lemma: induced complexity of fns in H_k is at most k}
Let $k\le k_2$. Then for every $h \in H_k$,
the induced complexity of the corresponding $f_h$ is $\ell \leq k$.
\end{restatable}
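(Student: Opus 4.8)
The plan is to invoke the structural characterization of induced functions and then verify directly that $f_h$ fits the definition of induced complexity at level $k$. By Thm.~\ref{theorem: nice structure}, the induced function of any $h \in H_k$ with weights $\w$ has the closed form $f_h(x) = \sign\bigl(\sum_{z \in \sbstk_k(x)} w(z)\bigr)$, where the sum ranges over all subsets of $x$ of size at most $k$. Since $h$ is binary-weighted (Def.~\ref{def:k-order}), $w(z) = a_-$ whenever $|z| < k$, and $w(z) \in \{a_-, a_+\}$ whenever $|z| = k$. I would therefore define a witness $g : Z_k \rightarrow \pmone$ by setting $g(z) = 1$ exactly when $w(z) = a_+$ and $g(z) = -1$ otherwise, and then argue that $f_h$ coincides with the lift of this $g$. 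Exhibiting such a $g$ at level $k$ immediately yields induced complexity $\ell \le k$, because the induced complexity is by definition the \emph{minimal} level at which such a witness exists, so I need only establish existence at level $k$ (not minimality).

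The core of the argument is the equivalence: $f_h(x) = 1$ if and only if $x$ contains some size-$k$ subset $z$ with $g(z) = 1$. For the ``if'' direction, suppose $x$ has a size-$k$ subset carrying weight $a_+$. Then the sum defining $f_h(x)$ contains at least one $a_+$ term, while every remaining term is at least $a_-$; using $a_- \in (-1,0)$ together with $|x| \le n$, the number of remaining terms is bounded by $\sum_{i \in [k]} {n \choose i}$, so the single $a_+$ weight dominates the worst-case negative mass precisely because $a_+ > \sum_{i \in [k]} {n \choose i}$. Hence the sum is strictly positive and $f_h(x) = 1$. For the ``only if'' direction (taken in contrapositive form), if $x$ has no size-$k$ subset with weight $a_+$, then every term of the sum equals $a_- < 0$; since $\sbstk_k(x)$ is nonempty (it contains the empty set), the sum is a sum of strictly negative numbers and is therefore negative, giving $f_h(x) = -1$.

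The main obstacle---really the only nontrivial step---is the domination inequality in the ``if'' direction: I must confirm that one $a_+$ weight outweighs the aggregate of all the $a_-$ contributions. This is exactly where the constants in Def.~\ref{def:k-order} are engineered to line up: $|a_-| < 1$ caps each negative term below one in magnitude, the cardinality bound $|x| \le n$ caps the number of such terms by $\sum_{i \in [k]} {n \choose i}$, and the threshold $a_+ > \sum_{i \in [k]} {n \choose i}$ is chosen to strictly exceed this aggregate. Once this inequality is verified, the equivalence holds for every $x \in \X$, so $f_h$ is exactly the lift of $g : Z_k \rightarrow \pmone$, and its induced complexity is therefore at most $k$, as claimed.
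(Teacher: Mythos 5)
Your proof is correct. The witness you construct---$g:Z_k\rightarrow\pmone$ with $g(z)=1$ exactly when $w(z)=a_+$, together with the observation that exhibiting a witness at level $k$ suffices because induced complexity is defined as the \emph{minimal} such level---is precisely the content of the paper's argument. The only difference is in routing: the paper invokes Lemma~\ref{lemma: h in H_k are lifts on size at most k} to write $h=\lift(g)$ and then unwinds the definition $f_h(x)=h(\rep_h(x))$ to get the equivalence ``$f_h(x)=1$ iff some $z\in Z_\ell$, $z\subseteq x$, has $g(z)=1$,'' whereas you invoke Thm.~\ref{theorem: nice structure} for the closed form of $f_h$ and re-derive the sign arithmetic (one $a_+$ term dominating at most $\sum_{i\in[k]}{n\choose i}$ terms each of magnitude less than $1$) by hand. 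That arithmetic is exactly what sits inside the proofs of the lift lemma and Thm.~\ref{theorem: nice structure}, so your version is a self-contained re-verification rather than a new idea; it buys independence from the lift machinery at the cost of repeating the domination inequality, and both are sound.
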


\begin{restatable}{corollary}{fkequaltoick}\label{cor: induced complexity k = F_k}
Let $F_k = F_{H_k}$ be the induced function class of $H_k$,
as defined in Def. \ref{def:induced_class}.
Then:
\[
F_k = \{f : \X \rightarrow \pmone \,:\, f \textnormal{ has induced complexity } \le k\}.
\]
\end{restatable}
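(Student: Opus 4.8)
The plan is to prove the claimed identity by double inclusion. Write $R_k \triangleq \{f:\X\to\pmone : f \text{ has induced complexity } \le k\}$ for the right-hand side. The inclusion $F_k \subseteq R_k$ is essentially a repackaging of Lemma~\ref{lemma: induced complexity of fns in H_k is at most k}: any $f\in F_k$ is, by Definition~\ref{def:induced_class}, the induced function $f_h$ of some $h\in H_k$, and Lemma~\ref{lemma: induced complexity of fns in H_k is at most k} produces a witness $g:Z_\ell\to\pmone$ with $\ell\le k$ placing $f_h$ in the induced-complexity form, so its (minimal) induced complexity is at most $k$, i.e.\ $f\in R_k$.

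The substantive direction is the converse $R_k\subseteq F_k$: given $f$ of induced complexity $\ell\le k$, I must construct an $h\in H_k$ with $f_h=f$. By definition there is $g:Z_\ell\to\pmone$ with $f(x)=1$ iff $x$ contains a size-$\ell$ subset $z$ with $g(z)=1$. I would realize $f$ by \emph{lifting} the positive $\ell$-sets up to size $k$: let $\mathcal{P}=\{z\in Z_\ell : g(z)=1\}$, and set $w(z')=a_+$ for every $k$-set $z'$ that contains some member of $\mathcal{P}$ (i.e.\ $\exists z\in\mathcal{P}$ with $z\subseteq z'$), $w(z')=a_-$ for all other $z'$ with $|z'|=k$, and $w(z')=a_-$ for all $|z'|<k$. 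By Definition~\ref{def:k-order} this $\w$ defines a binary-weighted $k$-order function $h\in H_k$.

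To verify $f_h=f$, I would invoke Theorem~\ref{theorem: nice structure}, which gives $f_h(x)=\sign\!\big(\sum_{z'\in\sbstk_k(x)} w(z')\big)$. Since $a_+ > \sum_{i\in[k]}{n\choose i}$ dominates the bounded, strictly negative total contribution of the $a_-$ weights (each in $(-1,0)$), we get $f_h(x)=1$ exactly when $x$ contains a $k$-subset of weight $a_+$, i.e.\ a $k$-subset that itself contains a member of $\mathcal{P}$. The main step is then the combinatorial equivalence, for items with $|x|\ge k$: ``$x$ contains a $k$-set containing some $z\in\mathcal{P}$'' holds iff ``$x$ contains some $z\in\mathcal{P}$.'' The forward direction is immediate, since $z\subseteq z'\subseteq x$; the reverse follows by padding any witness $z\in\mathcal{P}$ (with $|z|=\ell\le k$) by $k-\ell$ further elements of $x$ into a $k$-subset of $x$. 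Thus $f_h(x)=f(x)$ on all such items.

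The one delicate point—and what I expect to be the main obstacle—is the behavior on \emph{small} items with $\ell\le|x|<k$: there $f$ can output $+1$ (it contains a positive $\ell$-set), while $f_h$ is forced to output $-1$, because every subset of such an $x$ has size $<k$ and hence weight $a_-$, so no $k$-weighted witness can fit inside $x$. I would resolve this with the standing assumption that feasible items are large enough to admit a full-cardinality representation, $|x|\ge k_2\ge k$, under which no intermediate items exist and the equivalence above covers all of $\X$. This yields $f_h=f$ on $\X$, establishing $R_k\subseteq F_k$ and hence the corollary.
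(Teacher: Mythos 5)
Your proof is correct and follows essentially the same route as the paper's: the forward inclusion is exactly Lemma~\ref{lemma: induced complexity of fns in H_k is at most k}, and the reverse inclusion uses the identical weight construction (assign $a_+$ to precisely the $k$-sets containing a positive $\ell$-set of $g$, $a_-$ elsewhere), verified through Theorem~\ref{theorem: nice structure} and the same padding argument. The only difference is that you explicitly surface the edge case of items with $\ell \le |x| < k$ --- which the paper's proof silently glosses over when it equates the function lifted from $Z_\ell$ with the $k$-order function --- and your patch via the assumption $|x| \ge k_2 \ge k$ is a reasonable reading of the setup, even though the paper never states that assumption.
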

Proof of Cor. \ref{cor: induced complexity k = F_k} is in Appx.~\ref{secapp: bop}.
We now turn to considering the effect of $k$ on approximation error.
Since the `worthwhileness' function $Y(x) = \sign(v(x))$ operates on $x$,
we can consider its induced complexity,
which we denote by $\ell^*$
% the induced complexity of $y$, 
(i.e., $Y \in F_{\ell^*}$).
% (we will also call $\ell^*$ the `induced complexity of $v$').
The following result shows that if $\ell^* \leq k$,
then $H_k$ is expressive enough to perfectly recover $Y$.
\begin{restatable}{theorem}{zeroapproxerror}\label{theorem: 0 approx error}
If $\ell^* \leq k$ then the approximation error is $0$.
%The induced complexity of the induced function $f_{\tilde{h}}$ of $\tilde{h}$ is $\min\{\ell^*,k\}$, and in particular  
\end{restatable}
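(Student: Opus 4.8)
The plan is to prove the approximation error is zero by producing a single choice function $h \in H_k$ whose expected error vanishes; the existence of such an $h$ will follow almost directly from Corollary~\ref{cor: induced complexity k = F_k}. Recall that the approximation error equals $\errexp(h^*) = \min_{h \in H_k} \errexp(h)$. Since $\errexp$ is a nonnegative expectation of an indicator, it suffices to exhibit one $h \in H_k$ for which
\[
h(\rep_h(x)) = \sign(v(x)) = Y(x) \qquad \text{for all } x \in \X,
\]
as this makes every summand $\one{h(\rep_h(x)) \neq Y(x)}$ vanish. Equivalently, I want an $h \in H_k$ whose induced function $f_h = h \circ \rep_h$ coincides with $Y$ identically.

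First I would invoke the hypothesis. By assumption $Y$ has induced complexity $\ell^*$ with $\ell^* \le k$, i.e.\ $Y \in F_{\ell^*}$. Corollary~\ref{cor: induced complexity k = F_k} characterizes the induced class exactly as $F_k = \{f : f \text{ has induced complexity } \le k\}$; since the induced complexity of $Y$ is $\ell^* \le k$, this immediately gives $Y \in F_k$. (Alternatively, one can reach the same conclusion via the strict hierarchy: $H_{\ell^*} \subseteq H_k$ implies $F_{\ell^*} = F_{H_{\ell^*}} \subseteq F_{H_k} = F_k$ by monotonicity of the induced-class construction, and $Y \in F_{\ell^*}$.)

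Next, unwinding the definition of the induced class $F_k = F_{H_k}$ (Def.~\ref{def:induced_class}), the membership $Y \in F_k$ means precisely that there exists $h \in H_k$ with $f_h(x) = h(\rep_h(x)) = Y(x)$ for every $x \in \X$. This is exactly the $h$ sought above. For it,
\[
\errexp(h) = \expect{D}{\one{h(\rep_h(x)) \neq \sign(v(x))}} = \expect{D}{\one{Y(x) \neq Y(x)}} = 0,
\]
and therefore the approximation error $\min_{h' \in H_k}\errexp(h') \le \errexp(h) = 0$, which forces it to equal $0$.

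The argument is short because all the structural work has already been done: the content is carried entirely by Corollary~\ref{cor: induced complexity k = F_k} (itself resting on Lemma~\ref{lemma: induced complexity of fns in H_k is at most k} and Thm.~\ref{theorem: nice structure}), which translates the complexity condition $\ell^* \le k$ into genuine membership $Y \in F_k$. The only point worth checking carefully is that ``induced complexity $\ell^*$'' (defined as the \emph{minimal} representable order) together with $\ell^* \le k$ indeed places $Y$ in $F_k$, and not merely in $F_{\ell^*}$; this is guaranteed by the $\le k$ form of the corollary's characterization (and equivalently by the nesting of the induced classes), so no genuine obstacle remains.
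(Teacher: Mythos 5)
Your proof is correct and follows essentially the same route as the paper: both arguments reduce the claim to exhibiting an $h \in H_k$ whose induced function $f_h$ equals $Y$, which forces $\errexp(h^*_k) \le \errexp(h) = 0$. The only difference is packaging---you obtain this witness by citing Corollary~\ref{cor: induced complexity k = F_k} (whose proof contains the relevant construction), whereas the paper inlines the explicit binary-weight construction of $h$ from the function $g:Z_{\ell^*}\rightarrow\pmone$ certifying the induced complexity of $Y$; the underlying content is identical and your use of the corollary is not circular.
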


%Proof in Appx.~\ref{secapp: bop}.
%
One conclusion from Thm. \ref{theorem: 0 approx error}
is that if the user knows $\ell^*$,
then zero error is, in principle, obtainable;
another is that there is no reason to choose $k>\ell^*$.
In practice, knowing $\ell^*$ can aid the user in tuning $k$
according to computational (Sec. \ref{sec:algo})
and statistical considerations (Sec. \ref{sec:est_err}).
Further conclusions relate $\ell^*$ and $k_2$:
\begin{restatable}{corollary}{ellstarleqk} \label{corr:ellstar_leq_k2}
If $\ell^* \leq k_2$ and the distribution $D$ has full support on $\X$, then $k=\ell^*$ is the smallest $k$ that gives
zero approximation error.
\end{restatable}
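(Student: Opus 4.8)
The plan is to prove the two halves of the claim separately: that $k=\ell^*$ achieves zero approximation error, and that no smaller $k$ does. The first half is essentially immediate from Theorem~\ref{theorem: 0 approx error}. Since $\ell^* \le k_2$ by hypothesis, $k = \ell^*$ is a feasible complexity choice (we need $k \le k_2$), and applying Theorem~\ref{theorem: 0 approx error} with $k = \ell^*$ (so that $\ell^* \le k$ holds with equality) yields approximation error $0$.

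The substance is in the second half: for every $k < \ell^*$ I must show the approximation error is strictly positive. The key is to translate ``zero approximation error'' into an exact functional identity. Recall that for any $h \in H_k$, the induced function $f_h \in F_k$ satisfies $h(\rep_h(x)) = f_h(x)$ (Def.~\ref{def:induced_class} together with Thm.~\ref{theorem: nice structure}), so $\errexp(h) = \expect{D}{\one{f_h(x) \ne Y(x)}}$. Hence the approximation error over $H_k$ equals $\min_{f \in F_k} \Prob_{D}[f(x) \ne Y(x)]$.

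Now I would invoke the full-support assumption. Because $\X \subseteq 2^E$ is finite and $D$ has full support, every $x \in \X$ carries positive probability, so $\Prob_{D}[f(x) \ne Y(x)] = 0$ holds if and only if $f(x) = Y(x)$ for all $x \in \X$, i.e.\ $f = Y$. Thus the approximation error vanishes exactly when $Y \in F_k$. By Corollary~\ref{cor: induced complexity k = F_k}, $Y \in F_k$ iff $Y$ has induced complexity at most $k$; and by the definition of $\ell^*$ as the induced complexity of $Y$, this happens iff $\ell^* \le k$. Consequently, whenever $k < \ell^*$ we have $Y \notin F_k$, so every $f \in F_k$ disagrees with $Y$ on at least one point of $\X$, giving $\Prob_{D}[f(x) \ne Y(x)] > 0$. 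Since $F_k$ is a finite set of functions (the $k$-subsets are finitely many and each carries a binary weight), the approximation error is a minimum of finitely many strictly positive numbers, hence strictly positive. Combining the two halves, $k = \ell^*$ is the smallest $k$ with zero approximation error.

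The main obstacle I anticipate is the translation step rather than any hard estimate: one must be careful that ``zero expected error'' upgrades to ``exact equality on all of $\X$,'' which is precisely where full support is indispensable (without it, some $f$ could differ from $Y$ on a zero-measure set and still achieve zero error, breaking the equivalence with $Y \in F_k$). The remaining ingredients---feasibility of $k=\ell^*$ from $\ell^* \le k_2$, the identity $h(\rep_h(x)) = f_h(x)$, and the characterization $F_k = \{f : \text{induced complexity} \le k\}$---are all supplied by earlier results and slot in directly.
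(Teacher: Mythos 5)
Your proposal is correct and follows essentially the same route as the paper's proof: the first half is Theorem~\ref{theorem: 0 approx error} applied at $k=\ell^*$, and the second half uses full support to upgrade zero expected error to pointwise agreement $f_h = Y$ on all of $\X$, which forces the induced complexity of $Y$ to be at most $k$ and hence contradicts $k < \ell^*$. The paper states this as a direct contradiction while you route it through the characterization $F_k = \{f : \text{induced complexity} \le k\}$ (Corollary~\ref{cor: induced complexity k = F_k}) and add the finiteness observation to get strict positivity, but these are the same argument in substance.
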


\begin{restatable}{corollary}{ellstargtk} \label{corr:ellstar_g_k2}
If $\ell^*>k2$,
then the approximation error weakly increases with $k$, i.e.,
$\errexp(h^*_{k}) \leq \errexp(h^*_{k-1})$ for all $k \leq k_2$.
% \[
% \forall k \leq k_2, \qquad H_{k} \leq H_{k-1}  \,.
% \]
Furthermore, if the distribution $D$ has full support on $\X$ then no $k$ can achieve zero approximation error.
\end{restatable}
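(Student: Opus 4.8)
The plan is to recast the approximation error over $H_k$ as an ordinary classification error of induced functions, and then exploit the nested structure of the induced classes. Recall from Corollary \ref{cor: induced complexity k = F_k} that $F_k$ consists exactly of those $f : \X \to \pmone$ whose induced complexity is at most $k$; this immediately yields the chain $F_1 \subseteq F_2 \subseteq \cdots \subseteq F_{k_2}$. Moreover, by Theorem \ref{theorem: nice structure} we have the pointwise identity $\one{h(\rep_h(x)) \neq y} = \one{f_h(x) \neq y}$, so taking expectations shows that the expected error of any $h \in H_k$ equals the plain expected error of its induced $f_h \in F_k$. Since the map $h \mapsto f_h$ is by definition surjective onto $F_k$, the approximation error satisfies $\errexp(h^*_k) = \min_{f \in F_k} \expect{D}{\one{f(x) \neq Y(x)}}$.

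For the monotonicity claim, I would simply observe that minimizing over the larger set $F_k \supseteq F_{k-1}$ can only lower the minimum, giving $\errexp(h^*_k) \leq \errexp(h^*_{k-1})$ for every $k \leq k_2$. Because $\X$ is finite, each $F_k$ is a finite set of functions and the minima are attained, so no issues of infima arise. This part is essentially immediate once the reduction to induced functions is in place.

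For the second claim, I would argue by contradiction. Suppose some $k \leq k_2$ achieves zero approximation error; then there is $f \in F_k$ with $\Prob_{x \sim D}[f(x) \neq Y(x)] = 0$. Because $D$ has full support on $\X$, every $x \in \X$ has strictly positive mass, so zero error forces $f(x) = Y(x)$ for all $x \in \X$, i.e., $f = Y$ as a function. By Corollary \ref{cor: induced complexity k = F_k}, membership $f \in F_k$ means $Y$ has induced complexity at most $k \leq k_2 < \ell^*$, contradicting the minimality in the definition of $\ell^*$ as the induced complexity of $Y$. Hence no $k \leq k_2$ can achieve zero approximation error.

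The main obstacle is the second part: upgrading the statement ``$Y$ is not exactly representable in $F_k$'' to ``the approximation error is strictly positive.'' This is precisely where the full-support hypothesis is essential, since without it a function $f$ could agree with $Y$ on the support of $D$ while disagreeing elsewhere, yielding zero error despite $Y \notin F_k$. Full support is what converts agreement-on-support into genuine functional equality, thereby closing the gap and forcing the contradiction.
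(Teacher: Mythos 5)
Your proposal is correct and follows essentially the same route as the paper: monotonicity via nestedness of the hypothesis (equivalently, induced) classes, and the impossibility claim via the full-support argument that zero error forces $f = Y$ pointwise, contradicting the minimality of $\ell^*$. The only cosmetic difference is that you phrase the nestedness in terms of $F_{k-1} \subseteq F_k$ while the paper invokes $H_{k-1} \subseteq H_k$ directly; both are valid and interchangeable here.
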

%Proofs in Appx.~\ref{secapp: bop}.
\begin{figure}[b!]
\centering
\includegraphics[width=0.92\columnwidth]{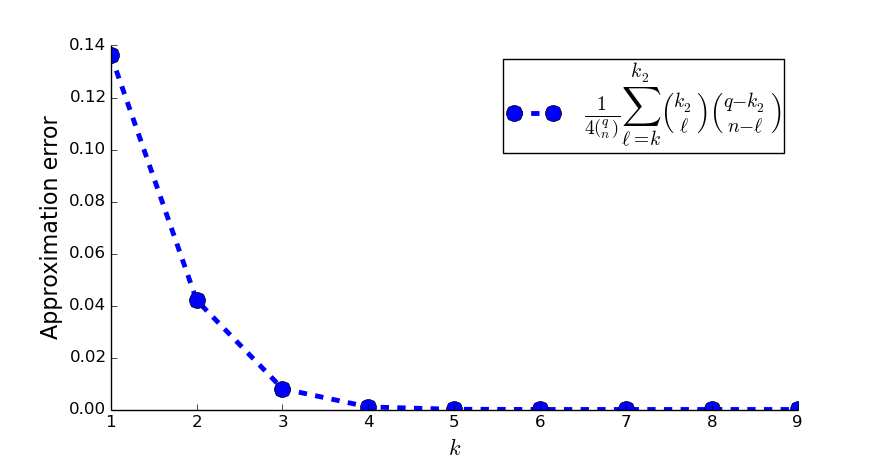}
\caption{Upper bound on approximation error showing diminishing returns. Parameters: $q=400, n=30$ and $k_2=10$.
%for increasing $k$ ($q=9, n=6$).
% \todo{ADD NICE-LOOKING PLOT}
}
\label{fig:diminishing_returns}
\end{figure}
%#############################################################################
Proofs in Appx. \ref{secapp: bop}. In general, Cor. \ref{corr:ellstar_g_k2} guarantees only weak improvement with
$k$. Next, we show that increasing $k$ can exhibit clear diminishing-returns behavior,
with most of the gain obtained at very low $k$. %The error function decreases for the $v$ constructed in Lem. \ref{theorem: diminishing returns} 
%as shown in Figure \ref{fig:diminishing_returns} for $q=400, n=30$, and $k_2=10$. The horizontal axis corresponds to $k$ (the complexity of the learning class), and the vertical access 
%corresponds to error of the optimal choice function from the hypothesis class. 
\begin{restatable}{lemma}{diminishingreturns}\label{theorem: diminishing returns}
Let $D$ be the uniform distribution over $\X$.
Then there is a value function $v$
% \blue{with induced complexity greater than $k_2$}
for which $\errexp(h^*_{k})$
% for every $k \in [1,k_2]$, 
% \[
% \forall k \leq k_2, \quad 
% \errexp(h^*_{k}) \leq  \frac{1}{4 {q \choose n}} \sum_{\ell=k}^{k_2} {k_2 \choose \ell} {q-k_2 \choose n-\ell} 
% \]
% where the RHS expression 
diminishes convexly with $k$.
\end{restatable}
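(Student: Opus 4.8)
The plan is to exhibit one explicit $v$, reduce $\errexp(h^*_k)$ to a combinatorial covering quantity via the induced-class characterization, and then compute that quantity in closed form so that its convexity in $k$ becomes transparent. By Cor.~\ref{cor: induced complexity k = F_k} we may write
$\errexp(h^*_k)=\min_{f\in F_k}\expect{x\sim D}{\one{f(x)\neq Y(x)}}$,
and every $f\in F_k$ is a monotone \emph{coverage} predictor: there is a family $G$ of sets of a common size $\ell\le k$ with $f(x)=1$ iff $x$ contains some member of $G$. Thus designing $v$ amounts to choosing a target up-set $Y$, and the task becomes: how well can up-sets generated by sets of size $\le k$ approximate $Y$ under uniform $D$.

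For the construction I would partition $E$ into disjoint blocks $T_1,\dots,T_N$ with $|T_j|=j$ and set $v(x)=\sum_{j}\one{T_j\subseteq x}-\tfrac12$, so that $Y(x)=+1$ iff $T_j\subseteq x$ for some $j$. Under uniform $D$ on $\X=2^E$ the blocks are independent and $T_j\subseteq x$ with probability $\theta^{j}$ where $\theta=\tfrac12$. The structural fact driving everything is that for $j\le k$ the single-block predictor $\one{T_j\subseteq x}$ already lies in $F_k$ (generate it by the $k$-supersets of $T_j$; the only discrepancy is on items with $|x|<k$, whose total mass is $\exp(-\Omega(q))$ and hence negligible for large $q$), whereas a block with $j>k$ cannot be captured without error. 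The natural member of $F_k$ therefore represents exactly the low blocks and ignores the rest: $f_k(x)=\bigvee_{j\le k}\one{T_j\subseteq x}$.

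Writing $A_k=\prod_{j=1}^k(1-\theta^{j})$, the predictor $f_k$ makes no false positives and errs only on items whose positivity is witnessed solely by a high block, giving $\errexp(h^*_k)\le A_k-A_N$ (up to the $\exp(-\Omega(q))$ term). This bound is strictly decreasing, with increments $A_{k-1}-A_k=A_{k-1}\theta^{k}$; since the ratio of consecutive increments is $(A_k/A_{k-1})\theta=(1-\theta^{k})\theta<1$, the increments strictly shrink, so the bound is strictly convex. Almost all of the error is removed at the smallest values of $k$ — exactly the diminishing-returns behavior plotted in Fig.~\ref{fig:diminishing_returns}.

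The main obstacle is upgrading this upper bound to the exact value $\errexp(h^*_k)=\min_{f\in F_k}(\cdots)$, i.e.\ the matching lower bound. The subtlety is genuine: covering a block $T_j$ with $j$ slightly larger than $k$ by one of its $k$-subsets trades a few false positives for recovered false negatives and can marginally \emph{help}, so $f_k$ is not literally optimal. I would control this by using block independence to decompose the minimization into an independent per-block decision (represent exactly / cover by one $k$-subset / drop), prove that the optimum is a threshold on block size, and then verify that the resulting exact error still has shrinking increments because it is dominated by the prefix product $A_k$. A clean route is to space the block sizes so that no block of size $k+1$ exists for the relevant $k$, which removes the beneficial-cover case entirely and makes ``drop the high blocks'' exactly optimal, recovering the convex $A_k-A_N$. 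The block-independence decomposition together with the threshold characterization of the optimal cover are the technical heart of the argument.
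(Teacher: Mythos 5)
Your construction is genuinely different from the paper's. The paper anchors $v$ on a single $k_2$-sized set $z_e$ and labels items so that exactly a $3/4$ fraction of the items containing each $k$-subset of $z_e$ are positive; this pins down the claimed optimal $h^*_k$ (accept iff the item contains a $k$-subset of $z_e$) and yields the closed form $\errexp(h^*_k)=\frac{1}{4{q \choose n}}\sum_{\ell=k}^{k_2}{k_2 \choose \ell}{q-k_2 \choose n-\ell}$, whose increments shrink in $k$. Your disjoint-block, product-form construction is arguably cleaner for exhibiting convexity of an \emph{upper bound}, and your increment-ratio computation $(1-\theta^{k})\theta<1$ is correct. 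Two caveats on the setup: your independence calculation requires $D$ to be uniform over all of $2^E$, whereas the paper's proof takes $\X$ to be the size-$n$ subsets (where block containments are not independent); and with $\X=2^E$ the items of size below $k_1$ have no feasible representation at all, so the best-response map is undefined on them. For an existence statement you may choose $\X$, but you must say so and handle (or exclude) the small items rather than wave at an $\exp(-\Omega(q))$ mass.

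The genuine gap is the one you flag yourself and do not close: the lemma concerns $\errexp(h^*_k)=\min_{f\in F_k}\expect{D}{\one{f(x)\neq Y(x)}}$, and a convexly decreasing upper bound on a minimum says nothing about the shape of the minimum itself. Your proposed fixes are not yet an argument. Spacing the block sizes so that no block of size $k+1$ exists does not resolve the issue for all $k\in[1,k_2]$ simultaneously: for every $k$ there is still a smallest block size $j(k)>k$, and the trade-off between the false positives incurred by a partial cover (probability roughly $\theta^{k}(1-\theta^{j(k)-k})$ times the no-other-block probability) and the false negatives it recovers (roughly $\theta^{j(k)}$) must be analyzed at each such $k$; with $\theta=1/2$ and $j(k)=k+1$ these are of the same order, so optimality of ``drop'' is not automatic. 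More importantly, the per-block decomposition presumes that the optimal generating family consists of subsets of individual blocks, whereas an $f\in F_k$ may use generators that straddle blocks or contain elements outside every block; ruling these out is precisely the ``technical heart'' you defer. Until a matching lower bound is proved---either by completing that decomposition, or by re-balancing the labels as the paper does so that the candidate classifier is optimal by a pointwise majority argument on each cell---your proof establishes convexity only of an upper bound on $\errexp(h^*_k)$, not of $\errexp(h^*_k)$ itself.
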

The proof is constructive (see Appx. \ref{secapp: bop}). We construct a $v$ whose approximation error $h^*_k \in H_k$ is upper bounded by 
\[
\errexp(h^*_{k}) \leq  \frac{1}{4 {q \choose n}} \sum_{\ell=k}^{k_2} {k_2 \choose \ell} {q-k_2 \choose n-\ell} \,.
\] The diminishing returns property of upper bound is illustrated in Fig.  \ref{fig:diminishing_returns}.  Although Lem. \ref{theorem: diminishing returns} describes a special case,
we conjecture that this phenomena %of diminishing returns
applies more broadly. 
%In the Appendix, we give two additional results.
%The first result (Appx. \ref{secapp: diminishing returns}) shows that
%increasing $k$ can exhibit diminishing-returns behavior,
%with most gains obtained at very low $k$;
%in such cases, $v$ can be approximated well and at a low cost
%(computationally and statistically).
%The proof is constructive,
%and is illustrated in Figure \ref{fig:diminishing_returns} in Appx. \ref{secapp: diminishing returns}.

Our next result shows that
learning $k_1$-order functions can be as powerful as 
learning subadditive functions; %(Appx. \ref{secapp: subadditive functions});
hence, learning with $k=k_1$ is highly expressive.
Interestingly, the connection between general ($k$-order) functions
and subadditive functions is due to the
strategic response mapping, $\rep$.
\begin{restatable}{lemma}{subadditiveink}\label{thm:subadditive}
% Let $H_{\mathrm{SA}}$ be all threshold-subadditive functions,
Consider threshold-subadditive functions:
\[
H_{\mathrm{SA}} = 
\{\sign(g(z)) \,:\, g \textnormal{ is subadditive on subsets in } \Z\}
\]
Then for every threshold-subadditive $h_g \in H_{\mathrm{SA}}$, there is an
$h \in H_{k_1}$ for which $h(\phi_h(x)) = h_g(\phi_{h_g}(x))\,\, \forall x \in \X$.
\end{restatable}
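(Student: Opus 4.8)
The plan is to reduce both classifiers to their induced-function characterizations and then use subadditivity to collapse any large positive representation down to a single positive $k_1$-subset. Throughout I will assume (as elsewhere) that $k_1 \le k_2$ and that every feasible $x$ admits at least one feasible representation (i.e.\ $|x| \ge k_1$), so that $\phi_h(x)$ is well-defined.

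First I would characterize $f_{h_g}(x) = h_g(\phi_{h_g}(x))$. By the best-response definition \eqref{eq:br_rep}, $f_{h_g}(x)=1$ exactly when some feasible $z \subseteq x$ with $|z| \in [k_1,k_2]$ has $h_g(z) = \sign(g(z)) = 1$, i.e.\ $g(z) > 0$. Thus $f_{h_g}(x)=1$ iff $x$ admits a feasible subset of positive $g$-value. Next I would pin down the induced functions of $H_{k_1}$: by Thm.~\ref{theorem: nice structure}, any $h \in H_{k_1}$ with binary weights $\w$ satisfies $f_h(x) = \sign\!\left(\sum_{z \in \sbstk_{k_1}(x)} w(z)\right)$, and since a single $a_+$ weight outweighs every $a_-$ contribution (by the choices $a_+ > \sum_{i \in [k_1]}{n \choose i}$ and $a_- \in (-1,0)$, each $|a_-|<1$), this collapses to the rule $f_h(x)=1$ iff $x$ contains some $k_1$-subset $z'$ with $w(z') = a_+$ (equivalently, via Cor.~\ref{cor: induced complexity k = F_k}, $f_h$ has induced complexity $\le k_1$). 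Hence specifying $h \in H_{k_1}$ is tantamount to specifying a family $\mathcal{G}$ of ``good'' $k_1$-subsets, with $f_h(x)=1 \iff \exists\, z' \in \mathcal{G},\ z' \subseteq x$.

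With these two characterizations in hand, given the subadditive $g$ I would define $h \in H_{k_1}$ (as per Def.~\ref{def:k-order}) by setting $w(z') = a_+$ precisely on $\mathcal{G} = \{z' : |z'| = k_1,\ g(z') > 0\}$ and $w(z')=a_-$ otherwise. It then remains to verify $f_h = f_{h_g}$. The easy direction is immediate: if some $z' \in \mathcal{G}$ has $z' \subseteq x$, then $z'$ is itself feasible (as $|z'|=k_1 \in [k_1,k_2]$) and $g(z')>0$, so $f_{h_g}(x)=1$. The substantive direction is the converse: suppose a feasible $z \subseteq x$ has $g(z)>0$. I would write $z$ as the union of its $k_1$-element subsets $A_1,\dots,A_t$ (whose union is $z$ since $|z| \ge k_1$), and apply subadditivity inductively to obtain $g(z) \le \sum_{i} g(A_i)$. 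Positivity of $g(z)$ then forces $g(A_i)>0$ for some $i$, yielding a $k_1$-subset $A_i \subseteq z \subseteq x$ with $A_i \in \mathcal{G}$, so $f_h(x)=1$.

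The crux of the argument — and the only place subadditivity is invoked — is this final collapsing step: subadditivity is exactly the property guaranteeing that a positive value on a large set cannot arise unless some $k_1$-subset is already positive, which is what lets a $k_1$-order classifier simulate the full threshold-subadditive one. I expect the only care needed beyond this is the covering argument (confirming the $k_1$-subsets cover $z$ and that their overlaps are harmless, which holds for the general, not-necessarily-disjoint, form of subadditivity) together with the trivial edge case $|z|=k_1$, where $z$ itself lies in $\mathcal{G}$.
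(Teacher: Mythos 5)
Your proposal is correct and follows essentially the same route as the paper's proof: both construct $h\in H_{k_1}$ by placing weight $a_+$ exactly on the $k_1$-subsets where $g$ is positive, and both invoke subadditivity to argue that a positive $g$-value on any feasible $z$ forces a positive $g$-value on some $k_1$-subset of $z$ (the step the paper calls ``a simple recursive argument,'' which you spell out via the covering $g(z)\le\sum_i g(A_i)$). Your write-up is in fact a bit more explicit than the paper's at that collapsing step, but there is no substantive difference in approach.
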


\subsubsection{User estimation error} \label{sec:est_err}

For estimation error, we give generalization bounds based on VC analysis.
% \outline{generalization: VC of H is $\le q-choose-k$}
The challenge in analyzing functions in $H_k$
is that generalization applies to the \emph{strategic} 0/1 loss,
i.e., $\one{h(\rep_h(x)) \neq y}$, and so standard bounds
(which apply to the standard 0/1 loss) do not hold.
To get around this, 
our approach relies on directly analyzing the VC dimension
of the induced class, $F_k$ (a similar approach was taken
in \citet{sundaram2021pac} for SC).
This allows us to employ tools from VC theory,
which give the following bound.
\begin{restatable}{theorem}{generalizationbound}\label{theorem: generalization bound}
For any $k$ and $m$,
given a sample set $S$ of size $m$ sampled from $D$ and labeled by some $v$, we have 
\[
% U(h^*_k) - U(\tilde{h}) \leq %\epsilon,
\errexp(\hhat) - \errexp(h^*)  \leq %\epsilon,
\sqrt{ \frac{C({q \choose k} \log ({q \choose k}/\epsilon) + \log (1/\delta)}{m}}
\]
% where $\epsilon \leq \sqrt{ \frac{C({q \choose k} \log ({q \choose k}/\epsilon) + \log 1/\delta)}{m}}$ and
w.p. at least $1-\delta$ over $S$, and for a fixed constant $C$.
In particular,
\textsc{Alg} in Sec. \ref{sec:algo},
assuming $Y$ is realizable, returns an $\hhat \in H_k$
for which:
\[
% U(h^*_k) - U(\tilde{h}) \leq %\epsilon,
\errexp(\hhat)  \leq %\epsilon,
\sqrt{ \frac{C({q \choose k} \log ({q \choose k}/\epsilon) + \log (1/\delta)}{m}}
\]
% where $\epsilon \leq \sqrt{ \frac{C({q \choose k} \log ({q \choose k}/\epsilon) + \log 1/\delta)}{m}}$ and
w.p. at least $1-\delta$ over $S$, and for a fixed constant $C$.
% where $C$ is an independent .
\end{restatable}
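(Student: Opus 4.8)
The plan is to reduce the generalization question for the \emph{strategic} loss over $H_k$ to a standard uniform-convergence statement for the ordinary $0/1$ loss over the induced class $F_k$, and then to control the latter through the VC dimension of $F_k$. The reduction is exactly what Thm.~\ref{theorem: nice structure} buys: for every $h \in H_k$ and every $x$ we have $\one{h(\rep_h(x)) \neq y} = \one{f_h(x) \neq y}$, so the strategic expected error $\errexp(h)$ equals the ordinary expected error of $f_h$, and likewise for the empirical errors. As $h$ ranges over $H_k$, the induced $f_h$ ranges over all of $F_k$. Hence uniform convergence of the strategic loss over $H_k$ is \emph{equivalent} to uniform convergence of the ordinary $0/1$ loss over $F_k$, and the obstacle that $h$ controls its own evaluation points (via $\rep_h$) disappears once we pass to induced space. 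This mirrors the route taken by \citet{sundaram2021pac} for strategic classification.

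First I would bound the VC dimension of $F_k$. By Cor.~\ref{cor: induced complexity k = F_k}, every $f \in F_k$ has induced complexity at most $k$, i.e.\ it is the lift of some $g : Z_k \to \pmone$, and such an $f$ is completely determined by the subfamily of size-$k$ sets on which $g=1$. Since there are only $\binom{q}{k}$ sets of size $k$, the assignment $g \mapsto f$ has domain of size $2^{\binom{q}{k}}$, giving $|F_k| \le 2^{\binom{q}{k}}$ and therefore $\mathrm{VCdim}(F_k) \le \log_2 |F_k| \le \binom{q}{k}$.

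With this estimate in hand, the rest is a routine invocation of VC theory. Plugging $d = \binom{q}{k}$ into the standard agnostic uniform-convergence bound (Sauer--Shelah on the growth function, followed by the usual symmetrization estimate) yields, w.p.\ $\ge 1-\delta$, that $\sup_{f \in F_k}|\errexp(f)-\erremp(f)| \le \sqrt{C(\binom{q}{k}\log(\binom{q}{k}/\epsilon)+\log(1/\delta))/m}$, where the logarithmic factor arises from bounding the growth function via Sauer--Shelah. Because \textsc{Alg} performs exact empirical risk minimization, the ERM oracle inequality $\errexp(\hhat)-\errexp(h^*) \le 2\sup_{f \in F_k}|\errexp(f)-\erremp(f)|$ then gives the first displayed bound, after absorbing the factor $2$ into $C$.

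For the ``in particular'' claim I would specialize to the realizable case. Realizability $Y \in F_k$ means $\ell^* \le k$, so by Thm.~\ref{theorem: 0 approx error} the best hypothesis satisfies $\errexp(h^*)=0$, and by Thm.~\ref{theorem: minimizing empirical error} \textsc{Alg} returns an exact empirical minimizer, so $\erremp(\hhat)=0$. Substituting $\errexp(h^*)=0$ into the first bound (equivalently, invoking the realizable-case VC bound with the same VC estimate) yields $\errexp(\hhat) \le \sqrt{C(\binom{q}{k}\log(\binom{q}{k}/\epsilon)+\log(1/\delta))/m}$. The main obstacle is the first step: cleanly verifying that the strategic loss of $h$ coincides pointwise with the standard loss of $f_h$ and that $\{f_h : h \in H_k\} = F_k$, so that VC theory---which is stated for ordinary losses---may legitimately be applied; once this is established, the VC-dimension estimate is immediate from the structural characterization of $F_k$, and the bound follows from off-the-shelf tools.
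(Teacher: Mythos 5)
Your proposal is correct and follows essentially the same route as the paper: both reduce the strategic loss to the ordinary $0/1$ loss on the induced class via Thm.~\ref{theorem: nice structure}, bound the VC dimension by $\binom{q}{k}$ through the observation that each function is determined by binary weights on the size-$k$ subsets (so there are at most $2^{\binom{q}{k}}$ of them), and then invoke the standard agnostic VC bound, with the realizable case handled by Thm.~\ref{theorem: minimizing empirical error} and zero approximation error. If anything, your write-up makes the passage to $F_k$ slightly more explicit than the paper's appendix (which phrases the counting in terms of $H_k$), but the argument is the same.
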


The proof %(Appx.~\ref{secapp: bop})
relies on Thm. \ref{theorem: nice structure};
since $h$ and $f_h$ share weights,
the induced $F_k$ can be analyzed as a class of 
$q$-variate degree-$k$ multilinear polynomials.
Since induced functions already incorporate $\rep$,
VC analysis for the 0/1 loss can be applied.
Note that such polynomials have exactly ${q \choose k}$
degrees of freedom; hence the term in the bound.

% \nir{explain: \\
% - what this relies on \\
% - what VC tricks we used \\
% - how to think about the q-choose-k bits (relate to polynomials?)
% }

% \blue{
% - can think about $H_k$ as polynomials - but can't just use VC, because its not just 0/1 loss - the loss actually has $\rep$ in it \\
% - but we have a lemma!\\
%  -- for each $h$ there is an $f_h$ with same weights (Thm X) defined on $k$-sized $z$-s \\
% - this means that learning $f_h$ is also like polynomials - because it does look at 0/1, but accounts for $\rep$ inside $f$ \\
% - so we use VC of polynomials, and this is what we get \\
%  -- explains the q-choose-k thing... \\
%  -- f is a q-variate degree-k multilinear polynomial, the degrees of freedom is (q choose k).
% }

%------------------------------------------------------------
\subsection{System's Perspective}
The system's expressive power derives from its flexibility in choosing
representations $z$ for items $x$.
Since $k_1,k_2$ determine which representations are feasible,
they directly control the system's power to manipulate;
and while the system itself may not have direct control over $k_1,k_2$
(i.e., if they are set by exogenous factors like screen size),
their values certainly affect the system's ability to optimize engagement.
Our next result is therefore unintuitive:
for system, a smaller $k_2$ is better (in the worst case),
even though it reduces the set of feasible representations.
This result is obtained indirectly, by considering the effect of $k_2$ on
the user's choice of $k$.

% \nir{why go through induced functions here? i understand that this is how the proof would work, but can't we state this in terms of $k$ and $H_k$ directly?}
\begin{restatable}{lemma}{systemsmallk}\label{thm:small_k2_is_better}
%Assuming $D$ has full support over $\X$, for an $h^* \in H_k$ and $h' \in H_{k+1} \setminus H_k$,
% (by Lem. \toref, such $h'$ exists).
% Consider two user choice functions:
% $h$, whose corresponding $f_{h}$ has induced complexity $\ell$;
% and $h'$, whose corresponding $f_{h'}$ has larger induced complexity, $\ell' > \ell$.
%system's payoff against $h$ is higher than against $h'$.
%
%Suppose the value function of a strategic user $v$ is such that, for every $k\in [1,k_2]$, the optimal choice function in $H_k$ is not in $H_{k-1}$. Then the system's payoff against $h^*_k$ is higher than against $h^*_{k-1}$.
%$h^*_k \in H_k$ but $h^*_k \not\in H_{k-1}$. Then the system's payoff against $h^*_k$ is higher than against $h^*_{k-1}$.
%
There exists a distribution $D$ % over $\X$
and a value function $v$
% \blue{with induced complexity greater than $k_2$}
such that for all $k < k' \leq k_2$, system has higher payoff against the 
optimal $h^*_k \in H_{k}$ than against $h^*_{k'} \in H_{k'}$.
% for every $k \in [1,k_2]$, 
\end{restatable}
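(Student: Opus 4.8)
The plan is to exhibit one explicit instance in which, for every $k$, the error-minimizing choice function $h^*_k\in H_k$ is forced to \emph{over-accept}, and the amount of over-acceptance---which is exactly the system's payoff---shrinks as $k$ grows. I would work entirely with induced functions: by Cor.~\ref{cor: induced complexity k = F_k}, $F_k$ is the class of monotone ``accept iff $x$ contains a good $k$-subset'' rules, and since $h(\rep_h(x))=f_h(x)$ the system's payoff against $h$ equals $\Prob_{x\sim D}[f_h(x)=1]$ (cf.\ Eq.~\eqref{eq:system_payoff}). For the instance, take $E=C$ with $|E|=q=k_2$, $n=k_2$, and $k_1=1$, so that every nonempty $x\subseteq C$ admits a feasible representation. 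Let $Y(x)=+1$ iff $x=C$ (realized by any $v$ with $\sign v=Y$). Let $D$ be symmetric within each cardinality level---conditioned on $|x|=\ell$ it is uniform over the size-$\ell$ subsets of $C$---with level weights $p_\ell=\Prob[|x|=\ell]$ chosen so that $p_{k_2}=\tfrac12$, $p_\ell>0$ for all $\ell\in\{1,\dots,k_2-1\}$, and $\mathbb{E}[|x|]/k_2<1$ (concentrate the remaining mass on small $\ell$, e.g.\ $p_1=\tfrac12-(k_2-2)\eta$ and $p_\ell=\eta$ for $2\le\ell\le k_2-1$ with $\eta>0$ small).

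\textbf{Identifying $h^*_k$.}
Because $E=C$, every $k$-subset lies inside $C$, so a rule in $F_k$ is a family $\mathcal{G}$ of $k$-subsets that accepts $x$ iff some member of $\mathcal{G}$ is contained in $x$. I would first note that any nonempty $\mathcal{G}$ accepts the unique worthwhile item $C$ (it contains every $k$-subset), so its false-negative mass is $0$ and its error equals its false-positive mass $\Prob[x\neq C,\ \exists z\in\mathcal{G}:z\subseteq x]$, which is nondecreasing in $\mathcal{G}$. Hence among nonempty families the minimizer is a single $k$-subset $z_0$, and by within-level symmetry every single $k$-subset has the same error $\Prob[z_0\subseteq x]-p_{k_2}$. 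Comparing against the empty family (always reject, error $p_{k_2}$), the single-subset rule is strictly better exactly when $\Prob[z_0^{(k)}\subseteq x]<2p_{k_2}=1$; since this probability is largest at $k=1$, where it equals $\mathbb{E}[|x|]/k_2<1$, the inequality holds for all $k\le k_2$. Thus $h^*_k$ is (any) single-good-$k$-subset rule with induced function $f_{h^*_k}(x)=\one{z_0^{(k)}\subseteq x}$, realizable in $H_k$ by setting $w(z_0^{(k)})=a_+$ and all other size-$k$ weights to $a_-$ (Thm.~\ref{theorem: nice structure}).

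\textbf{The decrease.}
The system's payoff against $h^*_k$ is then $\Prob_{x\sim D}[z_0^{(k)}\subseteq x]$, and this is identical for every minimizer by symmetry. Choosing the witnesses nested, $z_0^{(k)}\subset z_0^{(k+1)}$, gives $\{x:z_0^{(k+1)}\subseteq x\}\subseteq\{x:z_0^{(k)}\subseteq x\}$, with strict containment on a positive-mass event (e.g.\ $x=z_0^{(k)}$, of mass $p_k/\binom{k_2}{k}>0$ since $p_k>0$). Hence $\Prob[z_0^{(k)}\subseteq x]>\Prob[z_0^{(k+1)}\subseteq x]$ for every $k<k_2$, and by transitivity the system's payoff against $h^*_k$ strictly exceeds its payoff against $h^*_{k'}$ whenever $k<k'\le k_2$, as claimed.

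\textbf{Main obstacle.}
The delicate step is pinning down $h^*_k$ for all $k$ simultaneously: I must rule out multi-subset families and the always-reject rule uniformly in $k$. Monotonicity of the false-positive mass in $\mathcal{G}$ together with the within-level symmetry of $D$ makes the single-subset rule optimal, but the comparison against always-reject imposes the quantitative constraint $\mathbb{E}[|x|]/k_2<2p_{k_2}$, while strictness of the decrease additionally forces $p_\ell>0$ on every intermediate level. The distribution $D$ above is engineered precisely so that these two requirements hold together for all $k$; verifying that no unmodelled rule (or the feasibility edge case at $|x|<k$, handled by $k_1=1$) breaks this is the part that needs the most care.
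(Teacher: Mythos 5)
Your proof is correct, and it reaches the conclusion by a different construction than the paper's. The paper recycles the instance from Lemma~\ref{theorem: diminishing returns}: $D$ uniform over the size-$n$ subsets of a ground set of size $q$, a distinguished $k_2$-set $z_e$, and a $v$ engineered so that exactly a $3/4$ fraction of the items containing each relevant subset of $z_e$ are positive; the optimal $h^*_k$ is then asserted to be ``accept iff $x$ contains a $k$-subset of $z_e$,'' and the system payoff is computed explicitly as $\sum_{\ell=k}^{k_2}\binom{k_2}{\ell}\binom{q-k_2}{n-\ell}/\binom{q}{n}$, which visibly decreases in $k$. You instead build a fresh instance with a single positive item $C$ and a layered distribution, which lets you pin down $h^*_k$ cleanly: since the unique positive item contains every $k$-subset, every nonempty family has zero false-negative mass, monotonicity of the false-positive mass forces the minimizer to be a singleton, and symmetry plus the explicit comparison with always-reject settles uniqueness of the optimal payoff value. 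Both arguments rest on the same mechanism---$H_k$ is parameterized by families of flagged $k$-subsets, the induced acceptance region is ``$x$ contains a flagged $k$-subset,'' and nesting the witnesses makes these regions strictly shrink in $k$ on a positive-mass event---but your identification of the error minimizer is more transparent than the paper's (where optimality of the asserted $h^*_k$ in the $3/4$-fraction construction is stated as ``clear''), at the mild cost of working in the non-lossy regime $n=k_2$ (easily remedied by padding $E$ with dummy elements if one wants $k_2<n$). One small point to make explicit when writing this up: the within-level symmetry of $D$ is what guarantees that all singleton minimizers yield the same system payoff, so that ``the payoff against the optimal $h^*_k$'' is well defined despite non-uniqueness of the argmin.
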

The proof is in Appx.~\ref{secapp: bop};
it uses the uniform distribution, and the value function from Thm. \ref{theorem: diminishing returns}. Recalling that the choice of $k$ controls the induced complexity $\ell$
(Cor. \ref{cor: induced complexity k = F_k}), and that users should choose $k$ to be no greater than $k_2$, % (\toref),
we can conclude the following (in a worst-case sense):
\begin{restatable}{corollary}{lowerkbetter}\label{corr:lower_k2_is_better}
For the system, lower $k_2$ is better.
\end{restatable}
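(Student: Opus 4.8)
The plan is to obtain the corollary by combining two ingredients already in hand: that the user never benefits from a complexity $k > k_2$, so $k_2$ caps the user's effective order, and Lemma~\ref{thm:small_k2_is_better}, which on a worst-case instance shows the system's payoff strictly decreases as the user's order grows. Lowering $k_2$ tightens the cap, and via the lemma this only helps the system---despite the fact that a smaller $k_2$ also shrinks the system's own menu of feasible representations.

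First I would verify that the user's effective order satisfies $k \le k_2$. Take $h \in H_k$ with $k > k_2$ and any feasible $z$, so $|z| \le k_2 < k$; then every $z' \in \sbstk_k(z)$ has size $<k$ and hence carries weight $a_- < 0$ by Definition~\ref{def:k-order}, giving $h(z) = \sign(\sum_{z' \in \sbstk_k(z)} w(z')) = -1$. Thus any $h \in H_k$ with $k > k_2$ is the constant ``reject'' on all feasible representations, a behavior already realizable within $H_1$; the user loses nothing by restricting attention to $k \le k_2$.

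Then I would fix the worst-case pair $(D,v)$ of Lemma~\ref{thm:small_k2_is_better}, namely the uniform distribution together with the diminishing-returns value function of Lemma~\ref{theorem: diminishing returns}. The step that must be checked carefully is that the system's payoff against the user's optimal $h^*_k$ depends only on the order $k$ and not separately on $k_2$, as long as $k_1 \le k \le k_2$: since $h^*_k$ fires exactly when the representation contains a ``good'' $k$-subset, whenever $x$ contains such a subset the system can present a feasible triggering representation (the $k$-subset itself, or that subset padded up to size $k_1$), regardless of how large $k_2$ is. Writing this cap-independent quantity as $P(k)$, Lemma~\ref{thm:small_k2_is_better} reads $P(k) > P(k')$ for all $k < k' \le k_2$.

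Finally I would close on the user's side: since $\errexp(h^*_k)$ strictly decreases in $k$ for this $v$ (Lemma~\ref{theorem: diminishing returns}), a user minimizing her expected error takes the largest feasible order, $k = k_2$, so the realized system payoff is $P(k_2)$. Replacing $k_2$ by any $k_2' < k_2$ forces the user down to $k = k_2'$, and $P(k_2') > P(k_2)$ makes the system strictly better off; hence lower $k_2$ is better. I expect the main obstacle to be precisely the cap-independence of $P(k)$, because naively decreasing $k_2$ removes representations from the system's menu and could a priori hurt it---the whole content of the result is that on this instance the indirect effect (forcing a smaller user order) strictly dominates this direct loss.
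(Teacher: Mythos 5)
Your proposal is correct and follows essentially the same route as the paper: the published proof likewise combines Lemma~\ref{thm:small_k2_is_better} (system payoff decreases in the user's order $k$ on the constructed instance) with the fact that $k_2$ caps the user's effective choice of $k$, concluding that a lower $k_2$ improves the system's worst-case payoff. Your additional checks---that $H_k$ with $k>k_2$ collapses to constant rejection, and that the system's payoff against $h^*_k$ does not otherwise depend on $k_2$---are details the paper leaves implicit, but they do not change the argument.
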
 

% \todo{for $k_1$, fix user wanting to learn subadditive functions. if $k_1$ decreases, then to still learn subadditvie function, the effective $k=k_1$ decreases. this means the effective functions that can be learned reduces, so it's harder to recover $v$. in this sense smaller $k_1$ is better for system,
% and worse for the user (in the worst case; as in other results) - both because the effective $k$ is smaller.}

Proof in Appx.~\ref{secapp: bop}. For $k_1$, it turns out that against strategic users---it is inconsequential.
This is since payoff to the strategic user is derived entirely from $k$, which
is upper-bounded by $k_2$, but can be set lower than $k_1$.
This invariance is derived immediately from how functions in $H_k$ are defined,
namely that $w(z)=a_-$ for all $z$ with $|z|<k$ (Def. \ref{def:k-order}).
% \nir{waiting for finalized results on subattivite functions}
This, however, holds when the strategic user chooses to learn
over $H_k$ for some $k$.
Consider, alternatively, a strategic user that decides to learn
subadditive functions instead.
In this case, Thm. \ref{thm:subadditive} shows that $k_1$
determines the users `effective' $k$; the smaller $k_1$,
the smaller the subset of subadditive functions that can be learned.
Hence, for user, smaller $k_1$ means worse approximation error.
% whereas for the system, smaller $k_1$ means larger payoff (in the worst case).
This becomes even more pronounced when facing a \naive\ user;
for her,
% For example, when facing a \naive\ user,
a lower $k_1$ means that system now has a large set of representations to choose from;
if even one of them has $v(z)=1$, the system can exploit this to increase its gains.
In this sense, as $k_1$ decreases, payoff to the system (weakly) improves.

\section{Discussion}
\label{sec:disc}

% \todo{need quick summary of paper? \\
% - user types, increasing complexity \\
% - algo + analysis \\
% - est, apx errs \\
% - bop
% }

Our analysis of the balance of power reveals a surprising conclusion:
for both parties, in some sense, simple choice functions are better.
% Put together, our results regarding the relation between $k$ and $k_2$ drive
% the conclusion that lower $k$ turns out to be, 
% in some sense, better for both parties.
For system, lower $k$ improves its payoff through how it relates to $k_2$
(Corollary \ref{corr:lower_k2_is_better}).
For users, lower $k$ is clearly
better in terms of runtime (Lemma~\ref{lemma:runtime})
and estimation error (Theorem \ref{theorem: generalization bound}),
and for approximation error, lower $k$ has certain benefits---as it relates to $\ell^*$ (Corollary \ref{corr:ellstar_leq_k2}),
and via diminishing returns (Theorem \ref{theorem: diminishing returns}).
% \todo{maybe subadditive, if true}
Thus, and despite their conflicting interests---to some degree, the incentives of the system and its users align.

But the story is more complex.
For users, there is no definitive notion of `better';
% as our results convey,
strategic users always face a trade-off,
and must choose $k$ to balance approximation, estimation, and runtime.
In principle, users are free to choose $k$ at will;
but since there is no use for $k>k_2$,
% they gain nothing by choosing $k$ to be larger than $k_2$,
a system controlling $k_2$ de facto controls $k$ as well.
This places a concrete restriction on the freedom of users to choose,
% and in a way which is inequitable:
and inequitably: 
for small $k_2$, users whose $v$ has complexity $\le k_2$
(i.e., having `simple tastes')
are less susceptible to manipulation than
users with $v$ of complexity $> k_2$
(e.g., fringe users with eclectic tastes)
(Theorem \ref{theorem: 0 approx error}, Corollaries. \ref{corr:ellstar_leq_k2} and \ref{corr:ellstar_g_k2}).
In this sense, the choice of $k_2$ also has implications on fairness.
We leave the further study of these aspects of strategic
representation for future work.

 From a purely utilitarian point of view,
 it is tempting to conclude that systems should always set $k_2$ to be low.
 But this misses the broader picture:
 although systems profit from engagement, 
 users engage only if they believe it is worthwhile to them,
 and dissatisfied users may choose to leave the system entirely
 (possibly into the hands of another).
 Thus, the system should not blindly act to maximize engagement;
 in reality, it, too, faces a tradeoff.
% From a purely utilitarian point of view,
% it is tempting to conclude that systems should always set $k_2$ to be low.
% But this misses the broader picture:
% although systems profit from engagement, 
% users engage only if they believe it is worthwhile to them,
% and dissatisfied users may choose to leave the system entirely
% (possibly into the hands of another).
% Thus, the system should not blindly act to maximize engagement;
% in reality, it, too, faces a tradeoff. %, and must choose $k_2$ appropriately.
% % This holds also for $k_1$, at least for the ---which we
% % is a fair description of the majority of users in reality.
% % Real system should not be fully strategic (and, perhaps, should even be slightly benevolent). 
% But reasoning about such considerations requires modelling
% an additional dimension: \emph{time}.
% Temporal aspects of strategic representation, and in particular
% in relation to user churn and fairness,
% are intriguing; we leave these for future work.

% \nir{@VINEET: please verify this is correct}
% We believe these may have connections to the recent line of work
% on performative prediction \tocite, as it relates to strategic calssification).

% \red{other: \\
% - popularity, converge to `shallow' content \\
% - differentiable algorithm - future
% }
\bibliographystyle{plainnat}
\bibliography{references.bib}

\newpage
\appendix

\section{{Additional Results}}\label{secapp: additional results}

\subsection{Agnostic User}\label{secapp: agnostic}

%\nir{@VINEET: in the main paper i put in an informal theorem, please don't forget to reference it here. i changed some notation so can you please verify that things remain consistent?
%also, if you could go over it and make sure i didn't break anything... that would be great}
%\ganesh{I have proved it in Appendix E. should we remove this from here?}

Theorem \ref{theorem: agnostic formal} stated below is the formal version of Theorem \ref{theorem: agnostic informal} in Section \ref{sec:Naive}. Theorem \ref{theorem: agnostic formal} shows that given a a large enough sample size $m$, an agnostic user's payoff would approach $\max\{\mu,1-\mu\}$, where $\mu = \expect{D}{y}$.% $  would make an informed choice by learning the item distribution. 
\begin{restatable}{theorem}{agnosticPAC}\label{theorem: agnostic formal}
Let $\frac{2}{2 + \sqrt{m}} \leq \delta < 1/8$  and $\tau =  \frac{\delta}{2(1-\delta)} + \sqrt{\frac{2\log(1/\delta)}{m}}$,  then agnostic user's  expected payoff guarantee is given by    $$ \begin{cases}
 \geq (1-\delta)(1-\mu) &   \text{if } \ \widehat{\mu} \leq 1/2 - \tau \\   \geq (1-\delta)\mu& \text{if } \  \widehat{\mu} \geq 1/2+\tau \\
  = 1/2& \text{Otherwise} 
\end{cases}
$$ 
\end{restatable}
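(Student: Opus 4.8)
The plan is to exploit the defining feature of the agnostic strategy: the chosen $h$ is a \emph{constant} function---it either accepts every representation, rejects every representation, or ignores the input and flips a coin. The first step, and the conceptual crux, is to observe that when $h$ is constant the system's best response $\phi_h$ is irrelevant: under all-accept we have $h(\phi_h(x))=1$ for every $x$, under all-reject $h(\phi_h(x))=-1$ for every $x$, and under the coin rule the output is independent of $x$ altogether. Consequently the test-time payoff collapses to a quantity that does not depend on the system at all: it equals $\mu=\Pr_{x\sim D}[y=1]$ under all-accept, $1-\mu$ under all-reject, and exactly $1/2$ under the coin rule. This is precisely the source of the agnostic user's robustness across adversaries.

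It remains to control which of the three decisions is taken, which is governed by the empirical mean $\muhat$ relative to the thresholds $1/2\pm\tau$. Since $\muhat$ is an average of $m$ i.i.d.\ bounded labels with $\mathbb{E}[\muhat]=\mu$, I would invoke Hoeffding's inequality to show $\muhat$ concentrates around $\mu$: with probability at least $1-\delta$ we have $|\muhat-\mu|\le\sqrt{\log(1/\delta)/2m}$, and the radius $\sqrt{2\log(1/\delta)/m}$ appearing in $\tau$ is exactly twice this deviation. This guarantees that whenever $\muhat\ge 1/2+\tau$ the true mean satisfies $\mu>1/2$ (so accepting is the better action), and symmetrically that $\muhat\le 1/2-\tau$ certifies $\mu<1/2$; the width $2\tau$ of the coin-flip band is the price paid to make these certifications hold with confidence $1-\delta$.

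The remaining step is to convert these high-probability statements into the stated per-case bounds on the expected payoff, and this is where the slack factor $(1-\delta)$ and the correction term $\tfrac{\delta}{2(1-\delta)}$ in $\tau$ enter. Conditioning on the region into which $\muhat$ falls, the loss relative to the ideal payoffs $\mu$, $1-\mu$, $1/2$ comes entirely from the residual probability (at most $\delta$) that the sample is unrepresentative; bounding this residual mass together with the associated payoff gap yields the multiplicative $(1-\delta)$ factor, while the additive term $\tfrac{\delta}{2(1-\delta)}$ enlarges the band just enough to absorb the bias near $\mu=1/2$. The admissibility constraint $\delta\ge \tfrac{2}{2+\sqrt m}$ is equivalent to $\tfrac{\delta}{2(1-\delta)}\ge \tfrac{1}{\sqrt m}$, which ensures $\tau$ is large enough for the Hoeffding tail to be controlled against the target accuracy; I would sanity-check the boundary case against Example~\ref{ex:one}, where $\mu=\varepsilon/2$ and the minimal choice $\delta=\tfrac{2}{2+\sqrt m}$ reproduces the stated payoff $\tfrac{\sqrt m(1-\varepsilon/2)}{2+\sqrt m}=(1-\delta)(1-\mu)$.

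The main obstacle I anticipate is the calibration of $\tau$: both of its terms must be chosen simultaneously so that the concentration failure probability, the width of the indecision band, and the payoff gap near $\mu=1/2$ all trade off to produce the clean multiplicative $(1-\delta)$ guarantee rather than a messier additive bound. The reduction to constant classifiers (Step~1) is essentially immediate; the concentration (Step~2) is routine Hoeffding; the delicate accounting that produces the exact form of $\tau$ and the factor $(1-\delta)$ is the technical heart of the argument.
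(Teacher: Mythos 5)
Your proposal follows the paper's proof essentially verbatim: the agnostic user's constant classifier makes the system's best response irrelevant (so the payoff collapses to $\mu$, $1-\mu$, or $1/2$ in the three regimes), Hoeffding's inequality certifies $\mu \ge \frac{1}{2(1-\delta)}$ (resp.\ $1-\mu \ge \frac{1}{2(1-\delta)}$) with probability at least $1-\delta$ in the accept (resp.\ reject) branch, and the residual failure probability $\delta$ yields the multiplicative $(1-\delta)$ factor. One small correction: the condition $\delta \ge \frac{2}{2+\sqrt{m}}$ is used in the paper (via its Lemma on $\tau$) to guarantee $\tau < 1/2$, i.e.\ that $\tau$ is \emph{small} enough for the accept/reject regions to be nonempty, rather than ``large enough for the Hoeffding tail to be controlled'' as you suggest.
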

%\agnosticPAC*
Before we prove the theorem, we state Hoeffding's inequality, which is a well known result from probability theory.
\begin{lemma}[Hoeffding]
\label{lem:hoeffdings}
Let $S_{m} = \sum_{i=1}^{m} X_{i}$ be the sum of $m$ i.i.d.~random variables with $X_i \in [0,1]$ and $\mu=\mathbb{E}[X_i]$ for all $i\in [m]$, then
\[
    \mathbb{P}(\frac{S_m}{m} - \mu \geq \varepsilon) \leq e^{ -2m\varepsilon^2} \,\,\, \text{ and} \,\,\,
    \mathbb{P}(\frac{S_m}{m} - \mu \leq -\varepsilon) \leq e^{ -2m\varepsilon^2}.
\]
\end{lemma}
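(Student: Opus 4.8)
The plan is to prove the upper-tail bound via the exponential Chernoff method and then recover the lower tail by symmetry, so that no separate computation is needed for the second inequality. First I would introduce a free parameter $s > 0$ and apply Markov's inequality to the exponentiated deviation: since $t \mapsto e^{st}$ is increasing,
\[
\mathbb{P}\left(\frac{S_m}{m} - \mu \geq \varepsilon\right) = \mathbb{P}\left(e^{s(S_m - m\mu)} \geq e^{sm\varepsilon}\right) \leq e^{-sm\varepsilon}\, \mathbb{E}\left[e^{s(S_m - m\mu)}\right].
\]
Because the $X_i$ are i.i.d., the moment generating factor splits as a product, $\mathbb{E}[e^{s(S_m - m\mu)}] = \prod_{i=1}^m \mathbb{E}[e^{s(X_i - \mu)}]$, which reduces the problem to controlling the moment generating function of a single centered summand $X_i - \mu$.

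The key step, and the main obstacle, is the auxiliary estimate usually called Hoeffding's lemma: for a zero-mean random variable $Y$ supported in an interval $[a,b]$,
\[
\mathbb{E}\left[e^{sY}\right] \leq e^{s^2(b-a)^2/8}.
\]
I would establish this by convexity of the exponential: writing any $y \in [a,b]$ as a convex combination of the endpoints bounds $e^{sy}$ above by the corresponding linear interpolation, and taking expectations (using $\mathbb{E}[Y]=0$) yields an explicit upper bound whose logarithm $\psi(s)$ satisfies $\psi(0) = \psi'(0) = 0$ and $\psi''(s) \leq (b-a)^2/4$ for all $s$. A second-order Taylor expansion with remainder then gives $\psi(s) \leq s^2(b-a)^2/8$. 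The delicate part here is the bound $\psi''(s) \leq (b-a)^2/4$, which amounts to recognizing $\psi''(s)$ as the variance of $Y$ under an exponentially tilted measure, together with the fact that any random variable supported in $[a,b]$ has variance at most $(b-a)^2/4$.

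I would then apply this with $Y = X_i - \mu$, whose support lies in $[-\mu, 1-\mu]$ so that $b - a = 1$, giving $\mathbb{E}[e^{s(X_i-\mu)}] \leq e^{s^2/8}$. Combining with the product factorization yields, for every $s > 0$,
\[
\mathbb{P}\left(\frac{S_m}{m} - \mu \geq \varepsilon\right) \leq e^{-sm\varepsilon + ms^2/8}.
\]
Finally I would optimize the exponent over $s$: minimizing $-sm\varepsilon + ms^2/8$ gives the choice $s = 4\varepsilon$, at which the exponent equals $-2m\varepsilon^2$, producing the claimed bound $e^{-2m\varepsilon^2}$. The lower-tail inequality follows by running the identical argument on the variables $1 - X_i$, which again lie in $[0,1]$ and have mean $1-\mu$; the upper tail for their empirical mean is exactly the lower tail for $S_m/m$, so the same constant $e^{-2m\varepsilon^2}$ is obtained without further work.
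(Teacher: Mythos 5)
Your proposal is correct, but there is nothing in the paper to compare it against: the paper imports this statement as a known classical result (``we state Hoeffding's inequality, which is a well known result from probability theory'') and gives no proof, proceeding directly to the equivalent confidence-interval form used in the analysis of the agnostic user. What you have written is the canonical textbook proof, and every step checks out: Markov's inequality applied to $e^{s(S_m - m\mu)}$, factorization of the moment generating function by independence, Hoeffding's lemma $\mathbb{E}[e^{sY}] \le e^{s^2(b-a)^2/8}$ for a centered $Y$ supported in $[a,b]$, specialization to $Y = X_i - \mu$ with $b-a=1$, and optimization of the exponent $-sm\varepsilon + ms^2/8$ at $s = 4\varepsilon$ to get $e^{-2m\varepsilon^2}$; the lower tail via the substitution $X_i \mapsto 1 - X_i$ is also sound. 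One small remark on the auxiliary lemma: you blend two standard routes---the convexity interpolation $e^{sy} \le \frac{b-y}{b-a}e^{sa} + \frac{y-a}{b-a}e^{sb}$ and the tilted-measure bound $\psi''(s) \le (b-a)^2/4$. Your combination is coherent (after the interpolation step, $\psi$ is the cumulant generating function of a mean-zero two-point variable on $\{a,b\}$, whose tilted variance is at most $(b-a)^2/4$), but note that the tilted-variance argument applied directly to $Y$ itself already yields the lemma, making the convexity step dispensable; conversely, the pure convexity route finishes with an elementary calculus verification of $\psi(u) \le u^2/8$ without invoking tilted measures. Either streamlined version would slightly shorten your write-up, but as stated the argument is complete and correct.
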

We will use the following equivalent form of the above inequality. Let $\delta:=e^{-2m\varepsilon^2}$ i.e. $\varepsilon = \sqrt{\frac{2\log(1/\delta)}{m}}$ and $\widehat{\mu} = \frac{S_m}{m}$. Then we have with probability at-least $(1-\delta)$ we have
\begin{equation}
\label{eq:upperHoeffding}
 \mu \leq     \widehat{\mu} +  \sqrt{\frac{2\log(1/\delta)}{m}}~~~~\text{and}
\end{equation}
\begin{equation}
\label{eq:lowerHoeffding}
    \mu \geq   \widehat{\mu} -  \sqrt{\frac{2\log(1/\delta)}{m}}
\end{equation}

Now we are ready to give the proof of Theorem \ref{theorem: agnostic formal}
\begin{proof}[Proof of Theorem \ref{theorem: agnostic formal}]
We begin with the following supporting lemma.
\begin{lemma}
\label{lem:psibound}
Let $\frac{2}{2 + \sqrt{m}} \leq  \delta<1/8$, then 
$\tau < 1/2. $
\end{lemma}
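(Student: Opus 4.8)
The plan is to bound the two summands of $\tau = \frac{\delta}{2(1-\delta)} + \sqrt{\frac{2\log(1/\delta)}{m}}$ separately, using the two hypotheses $\delta < 1/8$ and $\delta \ge \frac{2}{2+\sqrt m}$ respectively, and then to combine them into a single function of $\delta$ alone that is increasing on the relevant range, so that its value at the right endpoint $\delta = 1/8$ already lies below $1/2$.

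First I would handle the first summand. Since $\delta < 1/8$ we have $1 - \delta > 7/8$, hence $\frac{\delta}{2(1-\delta)} < \frac{1/8}{2\cdot 7/8} = \frac{1}{14}$. More useful for the sequel is the by-product $\frac{1}{1-\delta} < \frac{8}{7}$, which I will reuse. Next I would convert the sample-size hypothesis into a bound on the second summand: rearranging $\delta \ge \frac{2}{2+\sqrt m}$ gives $\delta\sqrt m \ge 2(1-\delta)$, i.e. $\frac{1}{\sqrt m} \le \frac{\delta}{2(1-\delta)}$. Substituting this into the second summand yields $\sqrt{\frac{2\log(1/\delta)}{m}} \le \frac{\delta}{2(1-\delta)}\sqrt{2\log(1/\delta)}$.

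Adding the two bounds gives the clean expression $\tau \le \frac{\delta}{2(1-\delta)}\left(1 + \sqrt{2\log(1/\delta)}\right)$, and applying $\frac{1}{1-\delta} < \frac{8}{7}$ turns this into $\tau < \frac{4\delta}{7}\left(1 + \sqrt{2\log(1/\delta)}\right)$, a bound depending on $\delta$ only. The remaining work is to show this right-hand side stays below $1/2$ for all $\delta \in (0,1/8)$. My plan is to prove that $\phi(\delta) := \delta\bigl(1+\sqrt{2\log(1/\delta)}\bigr)$ is increasing on $(0,1/8]$, so its supremum is attained at $\delta = 1/8$. Writing $s = \sqrt{2\log(1/\delta)}$ one computes $\phi'(\delta) = 1 + s - 1/s$; since $\delta \le 1/8$ forces $s \ge \sqrt{2\log 8} > 2$, we get $\phi'(\delta) > 0$ throughout. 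Hence $\phi(\delta) \le \phi(1/8) = \frac18\bigl(1 + \sqrt{2\log 8}\bigr)$, and therefore $\tau < \frac47\cdot\frac18\bigl(1+\sqrt{2\log 8}\bigr) = \frac{1}{14}\bigl(1+\sqrt{2\log 8}\bigr)$. A final numerical check ($\sqrt{2\log 8} \approx 2.04$, so the right side is $\approx 0.22$) confirms $\tau < 1/2$.

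The only slightly delicate step is the monotonicity argument for $\phi$, since $\frac{\delta}{2(1-\delta)}$ is increasing in $\delta$ while $\sqrt{2\log(1/\delta)}$ is decreasing, so the product is not monotone \emph{a priori}; the derivative computation is what pins down that the critical point of $\phi$ lies well to the right of $1/8$. Everything else is direct substitution. I would keep the logarithm natural throughout, to remain consistent with the convention $\delta = e^{-2m\varepsilon^2}$ fixed by Hoeffding's inequality above.
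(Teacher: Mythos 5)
Your proof is correct. It differs from the paper's in how the sample-size hypothesis is used to eliminate $m$. The paper rewrites $\frac{2}{2+\sqrt m}\le\delta$ as $m>4(1/\delta-1)^2$ and then invokes the elementary inequality $\log(1/\delta)\le 1/\delta-1$ to conclude $m>4(1/\delta-1)\log(1/\delta)$, i.e.\ $\frac{2\log(1/\delta)}{m}<\gamma$ with $\gamma=\frac{\delta}{2(1-\delta)}$; this bounds the \emph{entire} second summand by $\sqrt{\gamma}$, so $\tau<\gamma+\sqrt{\gamma}$, a function of $\delta$ that is obviously increasing, and evaluating at $\delta=1/8$ gives $1/14+1/\sqrt{14}<1/2$ with no calculus. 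You instead bound only $1/\sqrt m\le\gamma$ and carry the factor $\sqrt{2\log(1/\delta)}$ along, arriving at $\tau<\frac{4\delta}{7}\bigl(1+\sqrt{2\log(1/\delta)}\bigr)$; because this product mixes an increasing and a decreasing factor, you then need the derivative computation $\phi'(\delta)=1+s-1/s>0$ (valid since $s=\sqrt{2\log(1/\delta)}>2$ on the range) to justify evaluating at the endpoint. Both arguments are sound and land at roughly the same numerical slack ($\approx 0.34$ for the paper, $\approx 0.22$ for you); the paper's use of $\log(1/\delta)\le 1/\delta-1$ buys a monotonicity that is free, whereas your route is more mechanical substitution at the cost of one short monotonicity lemma. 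One small presentational point in your favor: the paper writes ``$\tau=\gamma+\sqrt{\gamma}$'' where it means an upper bound, while your chain of inequalities is stated cleanly throughout.
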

\begin{proof}
The proof follows from following sequence of inequalities,
\begin{align*}
 \frac{2}{2 + \sqrt{m}} < \delta \iff m > 4(1/\delta - 1 )^2 \implies  m > 4(1/\delta - 1 )\log(1/\delta)  \iff \frac{\delta}{2(1-\delta)} >\frac{2\log(1/\delta)}{m} 
\end{align*}
Let $\gamma = \frac{\delta}{2(1-\delta)}$. We have $ \tau = \gamma + \sqrt{\gamma}$ which is an increasing function of $\delta$, so we  the maximum is achieved at $\delta=1/8$ and is given by  $1/\sqrt{14} + 1/14 < 1/2 $. This completes the proof of the lemma. 
\end{proof}
From Lemma \ref{lem:psibound} we have that $1/2 + \tau < 1$, hence there is a non-trivial range i.e. $\widehat{\mu}  \in [1/2 + \tau,1]$ where user assigns  $h(z)= +1$ for all $z$ with probability 1. Similarly, when $\widehat{\mu} \in [0, 1/2-\tau] $  user assigns $h(z)=-1 $ for all $z$ with probability 1. We will consider three cases separately. 

\textbf{Case 1 ($\widehat{\mu} \in [1/2+ \tau,1] $):}
From Hoeffding's inequality (Eq. \ref{eq:lowerHoeffding}) we have that with probability at-least $(1-\delta)$,
\begin{align*}
    \mu & \geq \widehat{\mu}  - \sqrt{\frac{2\log(1/\delta)}{m}} \\
    \implies \mu & \geq 1/2 + \frac{\delta}{2(1-\delta)} = \frac{1}{2(1-\delta)} 
\end{align*}
Hence, with probability at-least $(1-\delta)$ an agnostic user will get a payoff of $\mu$. Hence, the expected payoff in this case is at-least $(1-\delta)\mu \geq 1/2 \geq (1-\delta)(1-\mu)$. 

\textbf{Case 2 ($\widehat{\mu} \in [0, 1/2 - \tau] $):}
Similar to Case 1 here we use the tail bound given by Hoeffding's inequality (Eq. \ref{eq:upperHoeffding}) to get with probability at least $(1-\delta)$,
\begin{align*}
    \mu & \leq \widehat{\mu}  + \sqrt{\frac{2\log(1/\delta)}{m}} \\
    \implies \mu & \leq 1/2 - \frac{\delta}{2(1-\delta)} = \frac{1-2\delta}{2(1-\delta)}. 
\end{align*}
Hence, $(1-\mu)\geq \frac{1}{2(1-\delta)}$. The agnostic user guarantees a payoff of   $(1-\mu)$ with probability at least ($1-\delta$) in this case. Hence we have the payoff of $(1-\delta)(1-\mu)\geq 1/2 \geq (1-\delta)\mu$ in this case. 

\textbf{Case 3 ($\widehat{\mu} \in (1/2 - \tau,1/2 + \tau) $):}
Finally, in this case, the agnostic user chooses $h(z)=1$ for all $z\in\mathcal{Z}$ with probability $1/2$ and $h(z)=-1$ for all $z\in\mathcal{Z}$ with probability $1/2$. Hence, the expected payoff is given by 
$ \frac{1}{2}\mu + \frac{1}{2}(1-\mu) = 1/2$ irrespective of the true mean $\mu$ of positive samples.
\end{proof}

\subsection{Lifted Functions}\label{secapp: lifted functions}
The relation between choice functions and their induced counterparts
passes through an additional type of functions that operate
on sets of size \emph{exactly} $\ell$.
Denote $\Z_\ell = \{z \in 2^E : |z| =\ell\}$,
and note that all feasible representations can be partitioned as
$\Z = \Z_{k_1} \uplus \ldots \uplus \Z_{k_2}$.
We refer to functions that operate on single-sized sets 
as \emph{restricted functions}.
Our next result shows that choice functions in $H_k$
can be represented by restricted functions over $\Z_k$ that are `lifted' to operate on the entire $\Z$ space.
This will allow us to work only with sets of size exactly $k$.
\begin{lemma}\label{lemma: h in H_k are lifts on size at most k}
For each $h \in H_k$
there exists a corresponding $g: Z_{k} \rightarrow \pmone$
such that $h = \lift(g)$, where:
\begin{equation*}
\lift(g)(z) =
    \begin{cases}
        1 & \text{\textnormal{if }} k \le |z| \text{\textnormal{ and }} \\
        & \exists z' \subseteq z, |z'|=k \,\text{\textnormal{ s.t. }}\, g(z')=1 \\
        -1 & \text{\textnormal{o.w.}}
    \end{cases}
\end{equation*}
\end{lemma}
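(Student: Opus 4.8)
The plan is to read $g$ directly off the weights of $h$ and then verify the identity $h=\lift(g)$ pointwise, splitting on the cardinality $|z|$. Given $h\in H_k$ with binary basis weights $\w$, I would set $g:Z_k\to\pmone$ by declaring $g(z')=1$ when $w(z')=a_+$ and $g(z')=-1$ when $w(z')=a_-$ for every $z'$ with $|z'|=k$; this is well defined precisely because $h$ is binary-weighted (Def.~\ref{def:k-order}), so on size-$k$ sets the weight takes only the two values $a_\pm$. It then remains to check $h(z)=\lift(g)(z)$ for every $z\in\Z$.

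The easy case is $|z|<k$. Here every $z'\in\sbstk_k(z)$ satisfies $|z'|\le|z|<k$, so all weights equal $a_-$ by Def.~\ref{def:k-order}; since $a_-<0$ and $\sbstk_k(z)$ is nonempty (it contains $\emptyset$), the sum $\sum_{z'\in\sbstk_k(z)}w(z')$ is strictly negative, giving $h(z)=-1$. This agrees with $\lift(g)(z)=-1$, which holds because the condition $k\le|z|$ fails.

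The substantive case is $|z|\ge k$. I would write $S=\sum_{z'\in\sbstk_k(z)}w(z')$ and group its terms: let $P$ be the number of size-$k$ subsets $z'\subseteq z$ with $w(z')=a_+$, and let $M=|\sbstk_k(z)|-P$ be the number of remaining terms (all carrying $a_-$), so that $S=P\,a_+-M|a_-|$ with $|a_-|\in(0,1)$. When $P=0$ every size-$k$ subset of $z$ has $g=-1$, so $\lift(g)(z)=-1$, and indeed $S=-M|a_-|<0$ (here $M\ge 1$, the empty set always being counted), whence $h(z)=-1$. When $P\ge 1$ some size-$k$ subset $z'\subseteq z$ has $g(z')=1$, so $\lift(g)(z)=1$, and I must show $S>0$.

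This last dominance estimate is the crux, and it is exactly what the choice of constants buys. Because $P\ge1$, at most $\sum_{i=0}^{k}{|z|\choose i}-1$ terms carry $a_-$, i.e.\ $M\le\sum_{i=0}^{k}{|z|\choose i}-1$; combining $|a_-|<1$ with $|z|\le n$ gives $M|a_-|<\sum_{i=0}^{k}{n\choose i}-1=\sum_{i\in[k]}{n\choose i}$. Since Def.~\ref{def:k-order} fixes $a_+>\sum_{i\in[k]}{n\choose i}$, we obtain $M|a_-|<a_+\le P\,a_+$, hence $S>0$ and $h(z)=1=\lift(g)(z)$. Assembling the three cases yields $h=\lift(g)$, as claimed. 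The only real work is this single bound showing that one $a_+$ weight outweighs the worst-case accumulation of $a_-$ weights; everything else is bookkeeping.
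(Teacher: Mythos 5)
Your proof is correct and takes essentially the same route as the paper's: both define $g$ on size-$k$ sets directly from the binary weights (setting $g(z')=1$ iff $w(z')=a_+$) and then verify $h=\lift(g)$ pointwise. The paper dispenses with the verification as ``easy to see from the choice of $w(z)$''; your three-way case split and the dominance bound $M|a_-| < \sum_{i\in[k]}{n \choose i} < a_+$ are precisely the omitted details, correctly worked out.
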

\begin{proof}
Let $h \in H_k$.  Then there is a weight function $w$ on sets of size at most $k$ such that either $w(z) \in (-1,0)$ or $w(z)  > \sum_{i \in [k]} {n \choose i}$, and 
\[
h(z) = \sign \left( \sum\nolimits_{z':z' \subseteq z, |z'| \le k} w(z') \right)
\]
Define $g:\Z_k \rightarrow \{-1,1\}$ such that for a $z\in \Z_k$, $g(z) = 1$ if $w(z) > 0$ and $g(z) = -1$ otherwise. It is easy to see from the choice of $w(z)$ that $h = \lift(g)$.
\end{proof}

\subsection{A Lower Bound on the Running Time}\label{secapp: runtime lower bound}

As stated in Lemma \ref{lemma:runtime}, the running time of our algorithm is $m({n \choose k})$.
We argued in Section \ref{sec: strat users and learning} that is made possible only 
since weights are sparse,
and since the algorithm operates on a finite sample set of size $m$.
If $m$ is large, then this expression can be replaced
with ${q \choose k}$.
We now show that, in the limit (or under full information),
the dependence on ${q \choose k}$ is necessary. The conclusion from Lemma \ref{lemma: basis for H_k} is that to find the loss minimizer,
any algorithm must traverse at least all such $h$;
since there exist ${q \choose k}$ such functions,
this is a lower bound.
This is unsurprising; $H_k$ is tightly related
to the class of multilinear polynomials,
whose degrees of freedom are exactly ${q \choose k}$.
\begin{restatable}{lemma}{basishk}\label{lemma: basis for H_k}
Consider a subclass of $H_k$ composed of choice functions $h$
which have $w(z)=a_+$ for exactly one $z$ with $|z|=k$,
and $w(z)=a_-$ otherwise.
% \blue{These form a basis}.
Then, for every such $h$, there exists a corresponding $v$,
such that $h$ is a unique minimizer (within this subclass)
of the error w.r.t.~$v$.
\end{restatable}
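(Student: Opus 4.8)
The plan is to proceed by an explicit adversary construction: for each member of the subclass I would exhibit a value function $v$ (together with a full-support distribution $D$) for which that member attains zero error while every other member errs with positive probability. The first step is to pin down the induced function of a subclass member. Fix such an $h$, and let $z^\ast$ be the unique $k$-set with $w(z^\ast)=a_+$ (all remaining weights being $a_-$). By Theorem~\ref{theorem: nice structure} the induced function is $f_h(x)=\sign\big(\sum_{z\in\sbstk_k(x)} w(z)\big)$, and I claim $f_h(x)=+1$ exactly when $z^\ast\subseteq x$. Indeed, if $z^\ast\subseteq x$ the sum contains the single large term $a_+$ together with at most $\sum_{i\in[k]}{n\choose i}$ terms each in $(-1,0)$ (using $|x|\le n$), so the bound $a_+>\sum_{i\in[k]}{n\choose i}$ from Definition~\ref{def:k-order} forces the sum to be positive; and if $z^\ast\not\subseteq x$ every term equals $a_-<0$, so the sum is negative. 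This is precisely the lift characterization of Lemma~\ref{lemma: h in H_k are lifts on size at most k} specialized to one distinguished $k$-set, and it shows the subclass is indexed by the distinct $k$-subsets $z^\ast$ of $E$, giving ${q\choose k}$ functions whose induced forms are the containment indicators of their respective $z^\ast$.

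Next I would construct the instance. Take $\X$ to contain all subsets of size at most $n$ and let $D$ have full support (e.g.\ uniform), which matches the full-information regime of this lower bound. Given $h$ with distinguished set $z^\ast$, define $v(x)=+1$ if $z^\ast\subseteq x$ and $v(x)=-1$ otherwise, so that $Y(x)=\sign(v(x))=f_h(x)$ for all $x$. Since $\one{h(\rep_h(x))\neq y}=\one{f_h(x)\neq y}$ (again Theorem~\ref{theorem: nice structure}), this immediately yields $\errexp(h)=\Prob_{D}[f_h(x)\neq Y(x)]=0$. For uniqueness, let $h'$ be any other subclass member, with distinguished set $z'^\ast\neq z^\ast$; by the first paragraph $f_{h'}(x)=+1$ iff $z'^\ast\subseteq x$. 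Consider the item $x_0=z^\ast$, which is feasible because $|z^\ast|=k\le k_2\le n$ and lies in the support of $D$. Then $f_h(x_0)=Y(x_0)=+1$, whereas $z'^\ast\not\subseteq z^\ast$ (two distinct sets of equal size $k$ cannot be nested), so $f_{h'}(x_0)=-1\neq Y(x_0)$. Hence $\errexp(h')\ge \Prob_{D}[x=x_0]=D(z^\ast)>0=\errexp(h)$, making $h$ the \emph{strict} minimizer within the subclass, and since the argument applies to every $z^\ast$, each of the ${q\choose k}$ members is the unique minimizer for its own instance.

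The main obstacle — indeed the only delicate point — is the uniqueness step, where I must produce, simultaneously for all ${q\choose k}-1$ competitors, a disagreement point of positive $D$-mass. The item $x_0=z^\ast$ serves as a \emph{universal} witness precisely because no $k$-set other than $z^\ast$ is contained in $z^\ast$, which is what makes the bound $\errexp(h')>0$ hold uniformly over every competitor rather than requiring a separate construction for each pair. The remaining content, namely the magnitude-domination computation establishing the form of $f_h$, is routine once Definition~\ref{def:k-order} and Theorem~\ref{theorem: nice structure} are in hand.
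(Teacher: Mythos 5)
Your proof is correct and takes essentially the same route as the paper's: both set $v=f_h$ (the containment indicator of the distinguished $k$-set $z^{*}$), observe that $h$ then has zero error, and separate $h$ from every competitor by exhibiting an item on which the two induced functions disagree, using the magnitude bound $a_+>\sum_{i\in[k]}{n\choose i}$ from Definition~\ref{def:k-order} via Theorem~\ref{theorem: nice structure}. The only difference is cosmetic: you use the single universal witness $x_0=z^{*}$ (which requires $z^{*}$ itself to be a representable item, i.e.\ $k\ge k_1$), whereas the paper argues pairwise that for distinct $k$-sets $z_1\neq z_2$ there is some $x\in\X$ containing $z_1$ but not $z_2$ --- both arguments rest on the same implicit full-support assumption on $D$.
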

%
%\basishk*
\begin{proof}
Let $z_1$ and $z_2$ be distinct $k$ size subsets, and let $a_- \in (0,1)$ and $a_+ > \sum_{i\in [1,k]} {n \choose i}$. Further, let $\w_i$, $i\in [1,2]$ be a weight function that assigns $a_+$ to $z_i$ and $a_-$ to all other subsets of size at most $k$. Let $h_1$ and $h_2$ be two function in $H_k$ defined by the binary weighted functions $\w_1$ and $\w_2$ respectively. Observe that for $v_i = f_{h_i}$ the approximation error (see (Eq. \eqref{eq:exp_objective})) of $h_i$ is zero. Hence, to prove the lemma it is sufficient to show that $f_{h_1} \neq f_{h_2}$.

Suppose $f_{h_1} = f_{h_2}$. Since $z_1 \neq z_2$, there exists an $x \in \X$ such that $z_1 \subseteq x$ but $z_2 \subseteq x$. From Theorem \ref{theorem: nice structure} and the choice of $a_+$ and $a_-$, this implies $f_{h_1}(x)=1$ but $f_{h_2}(x) = -1$, and hence, a contradiction.
\end{proof}
%Proof in Appendix \toref.

%--------------------------------------------------
\section{Additional Related Work}\label{secapp: additional related work}

\paragraph{Learning set functions.}  Concept learning refers to learning a binary  function over hypercubes \citep{angluin1988queries} through a query access model. \citet{abboud1999learning} provide a lower bound on \emph{membership queries} to exactly learn a threshold function over sets where each element has small integer valued  weights.  Our learning framework admits large weights and  has only a sample access in contrast with the query access studied in this literature. \citet{feldman2009power}  show that the problem of learning set functions with sample access is  computationally hard.  However, we show (see Section 
\ref{sec: balance of power}) that the strategic setting is more nuanced; a more complex representations are disadvantageous for both user and the system. In other words, it is in the best interest of system to choose smaller (and much simpler) representations.  
A by-now classic work in learning theory studies the learnability from data of submodular (non-threshold) set functions~\citep{balcan2011learning}. Though we consider learning subadditive  functions in this work, an extension to submodular valuations is a natural extension. 
Learning set functions is in general hard, even for certain subcalsses
such as submodular functions.
\citet{rosenfeld2020predicting} show that it's possible to learn certain
parameterized subclasses of submodular functions, when
the goal is to use them for optimization.
But this refers to learning over approximate proxy losses;
whereas in our work, we show that learning is possible 
directly over the 0/1 loss.

\paragraph{Hierarchies of set functions.} 
\citet{conitzer2005combinatorial} (and independently,~\citet{ChevaleyreEEM08}) suggest a notion of $k$-wise dependent valuations, to which our Definition~\ref{def:hyper} is related. We also allow up to $k$-wise dependencies, but our valuations need not be positive and we focus on their sign (an indication whether an item is acceptable or not). Our set function valuations are also over item attributes rather than multiple items. Despite the differences, the definitions have a shared motivation:
\citet{conitzer2005combinatorial} believe that this type of valuation is likely to arise in many economic scenarios, especially since due to cognitive limitations, it might be difficult for a player to understand the inter-relationships between a large
group of items.
Hierarchies of valuations with limited dependencies/synergies have been further studied by \citet{abraham2012combinatorial,feige2015unifying} under the title `hypergraph valuations'. These works focus on monotone valuations that have only positive weights for every subset, and are thus mathematically different than ours.

\section{A Missing Proof from Section \ref{sec: setting}}\label{secapp: setting proof}
%\ganesh{I think we can get rid of this section as the observation is easy to see? Should we give a one line explanation somewhere in the main document and remove this section?} 
\samePayoff* 
\begin{proof}
The proof follows from the definition of best response (Eq. \ref{eq:br_rep}). Let $z_1, z_2 \in \rep_h(x)$. Then since $\rep_h$ consists of only best response, we have either $h(z_1) = h(z_2) = 1$, or $h(z_1) = h(z_2) = -1$. Hence, $h(z_1) = v(x)$ if and only if $h(z_2) = v(x)$ for any $z_1, z_2 \in \phi_h(x)$.
\end{proof}

\section{A Missing Proof from Section \ref{sec:Naive} and an Additional Example}
\label{secapp: learning proofs}
%\subsection{\Naive\ Users and a Benevolent System}
%\textbf{Benevolent System}:
%We call the system \emph{benevolent}  if for every choice function $h$ we have, $$ \phi_{h}(x) \in \argmax_{z \subseteq x, |z|=k} \mathbbm{1}\{ h(z)=v(x) \}.$$
\NaiveUser*
\begin{proof}
Since a \naive\ user plays $h(z) = \sign(v(z))$, for each $x\in \X$ the payoff of the user is maximized if in response the system plays a $z\subseteq x$ such that $\sign(v(z)) = \sign(v(x))$. 
Observe that, if there exists a $z\in \Z$ and $z\subseteq x$, such that $\sign(v(z)) = \sign(v(x))$ 
then $z \in \phi^{\mathrm{benev}}_h(x)$ and consequently the user's payoff is maximized for such an $x$.
Conversely, if there exists no $z\in \Z$ and $z\subseteq x$ such that $\sign(v(z)) = \sign(v(x))$,
then no truthful system can ensure more than zero utility for such an $x$.
Hence, a benevolent system maximizes the utility of a \naive\ user.
\end{proof}

We now present an additional example to show how a \naive\ user's choice function  can be manipulated by the strategic system and, as a consequence, the user may obtain arbitrarily small payoff against a strategic system.
%A \naive\ users choice function   can be manipulated by the strategic system and as a consequence, user may obtain arbitrarily small payoff in the worst case. Consider the following example.  

%In other words a naive User---often falsely--- believes that the System makes best decisions for her.

%Note that a weakly dominant strategy of a naive user is to be \emph{maximally}  truthful. That is, assign $h(z)$ as $v(x)$ to all possible $z$'s without conflicts. In case of conflicts,  $h(z)$ is assigned    based on   probability mass between $\{x \in X_z:v(x) = 1\}$ and $\{x \in X_z: v(x) = -1\}$. We will use  Naive users payoff as a baseline. 

%\outline{[RESULT] naive user can be tricked into bad choices (example)}
\begin{example}
Let $x_1 = \{a_1,a_2\}, x_2 = \{a_1,a_3\}, x_3 = \{a_1,a_4\}, x_4 = \{a_2,a_3\}, x_5 = \{a_3,a_4\}\}$ with $\sign(v(x_1)) = \sign(v(x_5))= \sign(v(a_2))= \sign(v(a_4)) = +1$ and $\sign(v(x_2))=\sign(v(x_3))= \sign(v(x_4)) = \sign(v(a_1))= \sign(v(a_3))=-1$. Further, let $k_1=k_2=1$ with $z_i = a_i$ as representations and  a distribution $D  = (\frac{\varepsilon}{4},\frac{\varepsilon}{4}, 1- \varepsilon, \frac{\varepsilon}{4}, \frac{\varepsilon}{4} )$ supported over $(x_1,x_2,x_3,x_4,x_5)$. \end{example}
A unique truthful  representation  for this instance is  $h = (-1,+1,-1,+1)$.  A strategic agent can manipulate a \naive\ agent into non-preferred choices by using a representation $ (a_2, a_1, a_4,a_2,a_4)$ for $(x_1,x_2,x_3,x_4,x_5)$.  Note here that a \naive\ agent expected  $z_1$ as a representation for $x_3$ since $h(z_1)=\sign(v(x_3))=-1$ and $h(z_4)=+1 \neq  \sign(v(x_3))$. However,    a strategic agent chose $a_4$ as under given $h$ we have $h(a_4)=1$. A \naive\ users payoff in this case is reduced to $\varepsilon$ which can be arbitrarily small. 

\section{Missing Proofs from Section \ref{sec: strat users and learning}}\label{secapp: strategic users who learn}
\orderkfunctions*
\begin{proof}
Define $k$ as follows: if $h(z) = -1$ for all $z \in \Z$ then $k = k_1$, and otherwise 
%\inbal{in the definition $k$ should be $k'$? also, need to explain why this is well defined I think}
$$
k = \max_{k'\in [k_1,k_2]}\{\exists z \text{ such that } |z| = k' \text{ and } h(z) = 1, \text{ but for all } z'\subset z \text{ and } z'\in \Z, h(z') = -1\}. 
$$
Define $h'$ as follows: For $|z|< k$, $h'(z) = -1$; for $|z| \ge k$ 
\begin{align*}
h'(z) = 1   & ~~~~~\text{if } \exists z': |z'| = k, z' \subseteq z \text{ and } h(z') = 1; \\
h'(z) = -1  & ~~~~~\text{otherwise}.
\end{align*}
First, we argue that $h'$ defined as above satisfies $h'(\phi_{h'}(x)) = h(\phi_h(x))$ for all $x\in \X$. Suppose $h(\phi_h(x)) = 1$. Then there exists $z \in \Z$ such that $z\subseteq x$  and $h(z) = 1$. From the choice of $k$, we may assume without loss of generality that $|z| = k$. Further, from the construction of $h'$, we have $h'(z) = 1$, 
%This implies there exists $z'$ such that $z \subseteq z' \subset x$ such that $h'(z') = 1$, 
and hence $h'(\phi_{h'}(x)) = h(\phi_h(x)) = 1$. Now suppose $h(\phi_h(x)) = -1$. Then for all $z \subseteq x$ we have $h(z) = -1$. In particular, for all $z \subseteq x$ such that $|z| = k$ we have $h(z) = -1$. This implies for all $z \subseteq x$ such that $|z| \geq k$ we have $h'(z) = -1$. This is because if there exists $z \subseteq x$ such that $|z| \geq k$ and $h'(z) = 1$ then from the definition of $h'$ there exists a $z' \subseteq z \subseteq x$ such that $|z'| = k$, and $h(z') = 1$ (a contradiction). Additionally, from definition, for all $z \subseteq x$ such that $|z| < k$ we have $h'(z) = -1$. Hence, if $h(\phi_h(x)) = -1$ then $h'(\phi_{h'}(x)) = -1$.

Now, we show that $h'$ is a $k$-order function. Let $a_- \in (-1,0)$ and $w(z) = a_{-}$ for all $z$ such that $|z| \leq k$ and $h'(z) = -1$. Further, for all $z$ such that $|z|=k$, if $h'(z) = 1$ then let $w(z) = a_{+} > \sum_{i \in [k]} {n \choose i}$. For all $z\in \Z$, if $|z| <k$ then by construction of $h'$, we have $h'(z) = -1$, and since for all $z'\in \sbstk_k(z)$, $w(z') = a_{-} < 0$ we have $\sum_{z'\in \sbstk_k(z)} w(z') < 0$. Hence, for all $z\in \Z$, if $|z| <k$
\[
h'(z) = \sign \left( \sum\nolimits_{z'\in \sbstk_k(z)} w(z') \right) = -1.
\]
Similarly, for all $z\in \Z$, if $|z| \geq k$ then by construction of $h'$, we have $h'(z) = 1$ if and only if there exists a $z' \subseteq z$, and $z' = |k|$ such that $h'(z) = h(z) = 1$. In particular, if $|z| \geq k$ and $h'(z) = 1$ then there exists a $z' \subseteq z$, and $|z'| = k$ such that $w(z') = a_+$. Since $a_{+} > \sum_{i \in [k]} {n \choose i}$, $a_- \in (-1,0)$, and $k_2 \leq n$, we have if $|z| \geq k$ and $h'(z) = 1$ then
\[
h'(z) = \sign \left( \sum\nolimits_{z'\in \sbstk_k(z)} w(z') \right) = 1.
\]
Finally, if $|z| \geq k$ and $h'(z) = -1$ then from the definition of $h'$ there does not exists a $z' \subseteq z$, and $|z'| = k$ such that $w(z') = a_+$. Since $a_- \in (-1,0)$, we have if $|z| \geq k$ and $h'(z) = -1$ then
\[
h'(z) = \sign \left( \sum\nolimits_{z'\in \sbstk_k(z)} w(z') \right) = -1.
\]
\end{proof}

\stricthierarchy*
\begin{proof}
Arbitrarily choose $u \subset E$ (recall $E$ is the ground set) such that $|u|=k$, and let $w(u) = a_{k,+} > \sum_{i \in [k]} {n \choose i}$.\footnote{Here we wish to distinguish between $a_+$ for $k$ and $k-1$ and hence we use $a_{k,+}$ instead of $a_+$.} 
Also for all $z \neq u$ and $|z| \leq k$, let $w(z) = a_- \in (-1,0)$. Let $h: \Z \rightarrow \pmone$ be defined as follows
\[
h(z) = \sign \left( \sum\nolimits_{z'\in \sbstk_k(z)} w(z') \right)
\]
From the definition of $H_k$, we have $h \in H_k$. We show that $h \not\in H_{k-1}$. First, observe that for all $z\in \Z$ $h(z) = 1$ if and only if $u \subseteq z$. Suppose $h \in H_{k-1}$. Then there is a weight function $w'$ on sets of size at most $k-1$ such that either $w'(z) = a_- \in (-1,0)$ or $w_z = a_{k-1,+} > \sum_{i \in [k-1]} {n \choose i}$, and 
\[
h(z) = \sign \left( \sum\nolimits_{z'\in \sbstk_{k-1}(z)} w'(z') \right)
\]
Let $z \in \Z$ be such that $u \subseteq z$. This implies $h(z) = 1$. Hence there exist a $u' \subseteq z$ such that $|u'| = k-1$ and $w'(u') = a_{k-1,+}$. Let $\tilde{z} \in \Z$ be such that $u' \subseteq \tilde{z}$ but $u \not\subseteq \tilde{z}$. Such a $\tilde{z}$ exists because $u \cap u' \neq u$. Further, as $u \not\subseteq \tilde{z}$, we have $h(\tilde{z}) = -1$. But since $u' \subseteq \tilde{z}$, we have from the choice of $a_{k-1,+}$ and $a_-$
\begin{align*}
    \sum\nolimits_{z' \in \sbstk_{k-1}(\tilde{z})} w'(z')  ~~ &> ~~ 0 \\
    \Rightarrow \sign \left( \sum\nolimits_{z' \in \sbstk_{k-1}(\tilde{z})} w'(z') \right) ~~& = ~~h(\tilde{z}) = ~~ 1.
\end{align*}
This gives a contradiction. Hence, $h\not\in H_{k-1}$.
\end{proof}

%%%%%%%%%%%%Comment%%%%%%%%%%%%%%%%%%
\begin{comment}
\hklifts*
\begin{proof}
Let $h \in H_k$ with weight function $\w$. Since $h\in H_k$, $\w$ defines weights over sets of size at most $k$,
such that either $w(z) \in (-1,0)$ or $w(z)  > \sum_{i \in [k]} {n \choose i}$, and 
\[
h(z) = \sign \left( \sum\nolimits_{z'\in \sbstk_k(z)} w(z') \right)\, .
\]
Define $g:Z_k \rightarrow \{-1,1\}$ such that for a $z\in Z_k$, $g(z) = 1$ if $w(z) > 0$ and $g(z) = -1$ otherwise. 
From the definition of $H_k$ and $\lift(g)$, for all $z \in \Z$ such that $|z| < k$ , $h(z) = \lift(g)(z) = -1$.
Further, for all $z \in \Z$ such that $|z| \geq k$, 
\begin{align*}
h(z) = 1 \, \iff&\, \exists z' \subseteq z \text{ such that }|z'| = k \text{ and } w(z') = a_+ > 0 \\
& \text{(from the choice of $a_+$)} \\
           \iff &\, \exists z' \subseteq z \text{ such that }|z'| = k \text{ and } g(z') = 1 \\
            &~~~~ \text{(from the definition of $g$)}\\
           \iff &\, \lift(g)(z) = 1 ~~~~\text{(from the definition of $\lift(g)$).}
\end{align*}
\end{proof}
\end{comment}
%%%%%%%%%%%%%%End Comment%%%%%%%%%%%%%%%%%%%%%%

\nicestructure*
\begin{proof}
Since $h \in H_k$, $\w$ satisfies the following two properties (see Definition \ref{def:k-order}):
\begin{enumerate}
\item  either $w(z) =a_- \in  (-1,0)$ or  $w(z) = a_+ > \sum_{i\in [k]}{n \choose i}$ ,
\item $w(z) = a_-$ for all $z$ having $|z|<k$.
\end{enumerate}
Further, from the definition of $f_h$, we have $f_h(x) = h(\phi_h(x))$. This implies 
\begin{equation*}
    f_h(x) = 1 \Longleftrightarrow \, \exists z \in \Z, \, z \subseteq x \text{ such that } h(z) = 1 \, .
\end{equation*}
From the the above two properties of the weights function, we have
\begin{equation*}
    h(z) = 1 \Longleftrightarrow \, \exists z' \subseteq z, \, |z'|=k \text{ such that } w(z') = a_+ > 0 \, .
\end{equation*}
From the above two equations we conclude that
\begin{equation*}
    f_h(x) = 1 \Longleftrightarrow \, \exists \, z \subseteq x, |z|=k \text{ such that } w(z) = a_+ > 0 \, .
\end{equation*}
Finally, the two properties of $\w$ ensure that
\[
f_h(x) = \sign \left( \sum\nolimits_{z \in \sbstk_k(x)} w(z) \right) \, .
\]
\end{proof}

\ermmin*
\begin{proof}
Throughout, for ease of notation, we use $x \in S$ to denote $x \in \{x_1, \ldots, x_m\}$.
Let $Z_{k} = \{ z \,:\, |z|=k, \exists x\in S ~  z\subseteq x\}$. 
Recall $Z_{k,S}$ is equal to $Z_k$ at the beginning of the algorithm. 
Also, for each $z\in Z_k$, let $\X_z = \{x \in S \mid z\subseteq x\}$.
The following lemma characterizes the training set for which there exists an $h\in H_k$ with zero empirical error.
\begin{lemma}\label{lem: 0 error charac}
There exists an $h\in H_k$ with zero empirical error if and only if for all $x\in S^+$ there exists a $z\in Z_k$ and $z\subseteq x$ such that $z \not\subseteq x'$ for all $x' \in S^-$.
\end{lemma}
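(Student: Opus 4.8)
The plan is to translate the entire statement into the language of induced functions, where Theorem~\ref{theorem: nice structure} gives an explicit handle on the behavior of any $h \in H_k$. Recall that for $h \in H_k$ with binary weights $\w$, the induced function satisfies $f_h(x) = h(\phi_h(x))$, and by Theorem~\ref{theorem: nice structure} it can be written as $f_h(x) = \sign\big(\sum_{z \in \sbstk_k(x)} w(z)\big)$. Since $a_+ > \sum_{i \in [k]}\binom{n}{i}$ dominates the aggregate contribution of all the $a_-$ weights, the proof of Theorem~\ref{theorem: nice structure} already establishes the clean rule
\[
f_h(x) = +1 \quad\Longleftrightarrow\quad \exists\, z \subseteq x,\ |z|=k,\ w(z)=a_+ .
\]
Associating to each $h$ its \emph{positive family} $P(h) = \{z : |z|=k,\ w(z)=a_+\}$, and using that the empirical loss equals $\frac1m\sum_i \one{f_h(x_i)\neq y_i}$, I would reduce the claim to the following equivalent form of zero empirical error: every $x \in S^+$ contains some member of $P(h)$, and no $x' \in S^-$ contains any member of $P(h)$.

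For the forward direction, I would take an $h \in H_k$ of zero empirical error and, for each $x \in S^+$, pick a witness $z \in P(h)$ with $z \subseteq x$ (which exists precisely because $f_h(x)=+1$). Such a $z$ lies in $Z_k$, since $|z|=k$ and $z \subseteq x$ with $x \in S$; moreover $z \not\subseteq x'$ for every $x' \in S^-$, because otherwise $x'$ would contain a member of $P(h)$ and force $f_h(x')=+1$, contradicting zero error on $x'$. This is exactly the asserted condition. For the converse, I would build $h$ directly from the witnesses: given the condition, choose for each $x \in S^+$ a set $z_x \in Z_k$ with $z_x \subseteq x$ and $z_x \not\subseteq x'$ for all $x' \in S^-$, set $P = \{z_x : x \in S^+\}$, and define the binary-weighted $k$-order function $h$ with $w(z)=a_+$ for $z \in P$ and $w(z)=a_-$ otherwise (so that $h \in H_k$). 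Then every $x \in S^+$ contains its witness $z_x \in P$, giving $f_h(x)=+1$, and for any $x' \in S^-$ no $z_x \in P$ is a subset of $x'$, giving $f_h(x')=-1$; hence $h$ attains zero empirical error.

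The step I would flag as the main subtlety is the negative side of the converse: when I switch on $a_+$ for all witnesses simultaneously, I must ensure that no negative example becomes positive through a witness belonging to a \emph{different} positive example. This is exactly what the hypothesis guarantees, since it forces each individual witness $z_x$ to satisfy $z_x \not\subseteq x'$ for \emph{all} $x' \in S^-$ at once; collecting the witnesses into $P$ is therefore safe. As a sanity check, this also produces the correct behavior under label conflicts: if some item occurs in both $S^+$ and $S^-$, no witness can avoid being a subset of itself, the condition fails, and indeed no zero-error $h$ exists. Since the whole argument is a direct consequence of the induced-function characterization, I do not anticipate any genuinely hard computation beyond carefully tracking this simultaneous-separation point.
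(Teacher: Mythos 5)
Your proof is correct and follows essentially the same route as the paper's: both reduce the claim to the observation that $h(\phi_h(x))=+1$ exactly when $x$ contains a $k$-set from the ``positive family'' (the paper encodes this family as the sets where the lifted function $g$ equals $1$, via Lemma~\ref{lemma: h in H_k are lifts on size at most k}, while you read it off the weights via Theorem~\ref{theorem: nice structure}). The witness-extraction argument for the forward direction and the explicit construction of $h$ from the witnesses for the converse, including the simultaneous-separation point you flag, match the paper's proof.
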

\begin{proof}
Suppose there exists an $h\in H_k$ with zero empirical error. 
Since $h\in H_k$, from Lemma \ref{lemma: h in H_k are lifts on size at most k} we have there exists a $g:Z_k \rightarrow \pmone$ such that $h = \lift(g)$.
We state the following observation, and its proof follows from the definition of $\lift(g)$. 
\begin{observation}\label{obs: lift def}
\begin{enumerate}
    \item For every $z\in \Z$ such that $h(z) = 1$ there is a $z' \in Z_k$ such that $z' \subseteq z$ and $g(z') = 1$,
    \item For every $z\in \Z$ such that $h(z) = -1$, it must be that for every $z'\in Z_k$, $z' \subseteq z$ we have $g(z') = -1$.
\end{enumerate}
\end{observation}
Further, as the empirical error for $h$ is zero, we have the following observation.
\begin{observation}\label{obs: empirical error 0}
\begin{enumerate}
    \item For every $x \in S^{+}$ there is a $z\in \Z$ and $z\subseteq x$ such that $h(z) = 1$, 
    \item For every $x \in S^{-}$, it must be that for every $z\in \Z$, $z\subseteq x$ we have $h(z) = -1$. 
\end{enumerate}
\end{observation}
\begin{proof}
For every $x\in S^+$, since the empirical error is zero, we have $h(\phi_h(x)) = 1$. From the definition of $\phi_h$, this implies there is a $z\in \Z$ and $z\subseteq x$ such that $h(z) = 1$. Similarly,
for every $x\in S^-$, since the empirical error is zero, we have $h(\phi_h(x)) = -1$. Again from the definition of $\phi_h$, it must be that for every $z\in \Z$, $z\subseteq x$ we have $h(z) = -1$. 
\end{proof}
Hence, from Observations \ref{obs: lift def} and \ref{obs: empirical error 0}, we have for every $x \in S^{+}$ there is a $z\in Z_k$, $z\subseteq x$ such that $g(z) = 1$.
Similarly, for every $x \in S^{-}$ it must be that for every $z\in Z_k$, $z\subseteq x$ we have $g(z) = -1$.
Hence, for every $x\in S^+$ there exists a $z\in Z_k$ and $z\subseteq x$ such that $z \not\subseteq x'$ for all $x' \in S^-$.

Conversely, suppose for every $x\in S^+$ there exists a $z\in Z_k$ and $z\subseteq x$ such that $z \not\subseteq x'$ for all $x' \in S^-$.
Then define $g: Z_k \rightarrow \pmone$ as follows: a) for all $z\in Z_k$ such that $z\subseteq x$ for an $x\in S^{-}$, let $g(z) = -1$,
b) for all $z \in Z_k$, such that $z\subseteq x$ for an $x\in S^{+}$ and $z\not\subseteq x'$ for an $x'\in S^{-}$, let $g(z) = 1$, 
c) for all $z \in Z_k$, such that $z\not\subseteq x$ for any $x \in S$, let $g(z) = -1$.
From the supposition, we have that for every $x\in S^+$, there is a $z\in Z_k$ and $z\subseteq x$ such that $g(z) = 1$.
Now define $h = \lift(g)$. To show that the empirical error of $h$ is zero, it is sufficient to show that for every $x\in S^-$\,
$h(\phi_h(x)) = -1$, and for every $x\in S^+$\, $h(\phi_h(x)) = 1$.
Let $x \in S^-$. From the definition of $g$, for every $z \in Z_k$
such that $z\subseteq x$, $g(z) = -1$. Hence, from the definition of $\lift$, we have for every $z\in \Z$ such that $z\subseteq x$, $h(z) =-1$. Now from
the definition of best response, we have $h(\phi_h(x)) = -1$. Similarly, if $x \in S^+$ from our supposition and the definition of $g$, we have there exists 
a $z\in Z_k$ such that $z\subseteq x$ and $g(z)=1$. Hence, from the definition of $\lift$, there exists 
a $z\in \Z$ such that $z\subseteq x$ and $h(z)=1$. Finally, from the definition of best response , we have $h(\phi_h(x)) = 1$.
\end{proof}
Now, if $Y \in F_k$ then there exists an $h\in H_k$ such that the induced function of $h$ is equal to $Y$, that is, $f_h = Y$. This implies there exists an $h\in H_k$ which attains zero empirical error on 
the training set. 
Since empirical error is always non-negative, such an $h$ minimizes the empirical error in this case. 
Hence, from Lemma \ref{lem: 0 error charac}, it follows that if $Y$ is realizable then  for all $x\in S^+$ at Step 17 of \textsc{Alg} there is either a $z\in Z^+$ and $z\subseteq x$, or $z\in Z_{k,S}$ and $z \subseteq x$.
%Hence, Steps 18 and 19 are only executed if there is no $h$ with zero empirical error.
%Hence, to prove the first part of the lemma it is sufficient to show that if the algorithm never executes Steps 18 and 19 then it returns an $h$ with zero empirical error. 
Now, observe that at the beginning of Step 22, set $Z^+$ satisfies the following: 
\begin{equation}\label{eqn: erm thm 1}
    z\in Z^+ \,\,\, \iff \,\,\, \exists \, x\in S^+ \text{ such that } z\subseteq x  \text{ and } \not\exists x' \in S^- \text{ such that } z\subseteq x'.
\end{equation}
Further, at Step 22, for a $z\in Z_k$, $w(z) = a_+$ if $z\in Z^+$. This implies 
\begin{equation}\label{eqn: erm thm 2}
    w(z) = a_+ \,\,\, \iff \,\,\, \exists \, x\in S^+ \text{ such that } z\subseteq x  \text{ and } \not\exists x' \in S^- \text{ such that } z\subseteq x'.
\end{equation}
Also, from Theorem \ref{theorem: nice structure}, the induced function $f_{\hat{h}}$ corresponding to the returned $\hat{h}$ is given as
\begin{equation}\label{eqn: erm thm 3}
f_{\hat{h}}(x) = \sign \left( \sum\nolimits_{z \in \sbstk_k(x)} w(z) \right)\,.
\end{equation}
To complete the proof of theorem, we show that $f_{\hat{h}}(x_i) = y_i$ for every $x_i\in S$. Suppose $x\in S^-$.
Then from Equations and \ref{eqn: erm thm 1} and \ref{eqn: erm thm 2}, for every $z\subseteq x$ and $|z| \leq k$ we have $w(z) = a_- < 0$, and hence from Equation \ref{eqn: erm thm 3} for $f_{\hat{h}}$
we have $f_{\hat{h}}(x) = y = -1$. Similarly, suppose $x \in S^+$. Then from Equation \ref{eqn: erm thm 2}, there exists $z \subseteq x$, $|z| = k$ such that $w(z) = a_+$.
Hence, from Equation \ref{eqn: erm thm 3}, and noting that $a_+ > \sum_{i \in [k]} {n \choose i}$ and $a_- \in (-1,0)$ we have $f_{\hat{h}}(x) = y =1$.
% Hence, from the above argument the algorithm returns a $\hat{h}$ with minimum empirical error.
\end{proof}

\runtime*
\begin{proof}
In the first two for loops, for each $x \in S^+$ (or in $S^-$) the internal for loop runs for $O({n \choose k})$ time.  
Since $|S| \leq m$, this is a total of at most $O(m{n \choose k})$ operations.
Similarly, Step 22 places weights on at most $m{n \choose k}$ subsets, and hence runs in $O(m{n \choose k})$ time.
Hence, \textsc{Alg} runs in $O(m{n \choose k})$ time.
\end{proof}

\section{Missing Proofs from Section \ref{sec: balance of power}} \label{secapp: bop}

\icoffn*
\begin{proof}
Let $\ell = \min_{k' \in [1,k]}\{\text{there exists a } g:Z_{\ell} \rightarrow \pmone \text{ such that } h = \text{lift}(g)\}$. From Lemma \ref{lemma: h in H_k are lifts on size at most k}, we know $\ell \leq k$. Further, assume $g: Z_{\ell} \rightarrow \pmone$ is such that $h = \text{lift}(g)$. Now, from the definition of $f_h$, we have for all $x\in \X$, $f_h(x) = 1$ if and only if there exists a $z\in \Z$ such that $z\subseteq x$ and $h(z) = 1$. 
%Since $h \in H_k$, for all $x\in \X$ $f_h(x) = 1$ if and only if there exists a $z\in \Z$, $|z| \geq k \geq \ell$ such that $z\subset x$ and $h(z) = 1$. 
Since $h = \text{lift}(g)$, $f_h(x) = 1$ if and only if there exists a $z \in Z_{\ell}$ such that $z \subseteq x$ and $g(z) = 1$. 
This implies the induced complexity of $f_h$ is $\ell \leq k$.
\end{proof}

\fkequaltoick*
\begin{proof}
From Lemma \ref{lemma: induced complexity of fns in H_k is at most k}, we know that functions in $F_k$ have induced complexity at most $k$. We show that if $f$ has induced complexity at most $k$ then there is an $h \in H_k$ such that $f = f_h$. Let the induced complexity of $f$ be equal to $\ell \leq k$. Then there exists a $g: Z_\ell \rightarrow \pmone$ such that 
\begin{align}\label{eqn: cor. 4.9 1}
f(x) = 1  \iff \exists z \in Z_\ell  
\text{ such that } z\subseteq x \text{ and } g(z) = 1.
\end{align}
Let $h = \lift(g)$. First we show that $f(x) = f_h(x)$ for all $x \in \X$. Since $h$ is a lift of $g$, if $g(z') = 1$ for a $z' \in Z_\ell$ then for all $z \in \Z$
such that $z'\subseteq z$, we have $h(z) = 1$. 
\begin{align}\label{eqn: cor. 4.9 2}
h(z) = 1  \iff \exists z' \in Z_\ell  
\text{ such that } z'\subseteq z \text{ and } g(z') = 1. 
\end{align}
Hence, from Equations \ref{eqn: cor. 4.9 1} and \ref{eqn: cor. 4.9 2} for all $x \in \X$, $f(x) = 1$ if and only if there exists $z \in \Z$ such that $z\subseteq x$ and $h(z) =1$.
From the definition of induced function,
this implies $f(x) = f_h(x)$ for all $x \in \X$.

To show $h\in H_k$, we construct a weight function $\w$ on sets of size at most $k$. For $z \in 2^E$ and $|z|<k$, let $w(z) = a_- \in (-1,0)$. 
For $z\in Z_k$, let
\[
w(z)  
\begin{cases}
     = a_+ >   \sum_{i \in [1,k]} {n \choose i} & \text{\textnormal{if }} \,\, \exists z' \subseteq z, |z'|=\ell \,\,\text{\textnormal{and}}\,\, g(z)=1 \\
     = a_-  & \text{\textnormal{o.w.}}
    \end{cases}
\]
Now from Equation \ref{eqn: cor. 4.9 2}, $h(z) = 1$ if and only if there exists $z' \in Z_\ell$ such that $z'\subseteq z$ and $g(z') = 1$. Hence, from the definition of $\w$, $h(z) = 1$ if and only if there exists $z' \in Z_\ell$ such that $z'\subseteq z$ and $w(z') = a_+$. In particular, since $a_+ > \sum_{i=1}^k {n \choose i}$ and $a_- \in (0,1)$, we have
\[
h(z) = \sign \left( \sum\nolimits_{z':z' \subseteq z, |z'| \le k} w(z') \right)\, .
\]
\end{proof}

\zeroapproxerror*
\begin{proof}
Since the induced complexity of $v$ is $\ell^*$, there is a function $g:Z_{\ell^*} \rightarrow \pmone $ s.t.:
\[
v(x) = 
\begin{cases}
        1 & \text{\textnormal{if }} \,\, \exists  z \subseteq x, |z|=\ell^* \,\,\text{\textnormal{and}}\,\, g(z)=1 \\
        -1 & \text{\textnormal{o.w.}}
    \end{cases}
\]
Let $a_+ > \sum_{i \in [1,k]} {n \choose i}$ and $a_- \in (0,1)$,
and define the weight function $\w$ on sets of size at most $k$ as follows: a) if $|z| < k$ then let $w(z) = a_-$,
b) if $|z| = k$ and there exists a $z' \subseteq z$ such that $g(z') = 1$ then $w(z) = a_+$, and 
c) if $|z| = k$ and there does not exist a $z' \subseteq z$ such that $g(z') = 1$ then $w(z) = a_-$.
Now define $h$ using $\w$ as follows:
\[
h(z) = \sign \left( \sum\nolimits_{z' \in \sbstk_k(z)} w(z') \right)\, .
\]
We now show that for each $x \in \X$, $h(\phi_h(x)) = f_h(x) = v(x)$ implying $h^*_k = h$. 
Suppose $f_h(x) = 1$ for an $x \in \X$. 
Then there exists a $z \in \Z$ such that $z\subseteq x$ and $h(z) = 1$. 
From Theorem \ref{theorem: nice structure}, and the choice of $a_+$ and $a_-$ we have that there exists a 
$z \subseteq x$, $|z| =k$ such that $w(z) = a_+$.
From the construction of $w$ this implies
there exists a $z \subseteq x$, $|z| =\ell^*$ such that $g(z) = a_+$.
But from the above definition of $v$ this implies $v(x) = 1$.
Similarly, we can argue, if $f_h(x) = -1$ then $v(x) = -1$ for any $x \in \X$.
Hence, $h(\phi_h(x)) = v(x)$ for each $x\in \X$ implying $h^*_k = h$ and 0 approximation error for $h$.
\end{proof}

\ellstarleqk*
\begin{proof}
In the proof of Theorem \ref{theorem: 0 approx error}, we show that for $k =\ell^*$, we have zero approximation error. Hence, to prove the corollary it is sufficient to show that for a $k<\ell^*$
the approximation error is not zero. Suppose there is an $h \in H_k$ such that $\varepsilon(h) =0$ and $k<\ell^*$.
Since the distribution $D$ has full support, this implies
$f_h(x) = v(x)$ for all $x\in \X$.
Hence, the induced complexity of $v$ is at most $k < \ell^*$ giving a contradiction.
\end{proof}

\ellstargtk*
\begin{proof}
The approximation error weakly decreases because $H_{k-1} \subseteq H_k$ for all $k \leq k_2$.
Also, from the proof of Corollary \ref{corr:ellstar_leq_k2}, it is clear that no $k$ can achieve zero approximation error. 
\end{proof}

\diminishingreturns*
\begin{proof}
We construct a $v$ such that the approximation error for $h^*_k \in H_k$ is as given below
\[
\errexp(h^*_{k}) =  \frac{1}{4 {q \choose n}} \sum_{\ell=k}^{k_2} {k_2 \choose \ell} {q-k_2 \choose n-\ell} \,.
\]
It is easy to see that $\errexp(h^*_{k})$ diminishes convexly with $k$ (see Fig. \ref{fig:diminishing_returns}).
We choose $k_2$ elements $e_1, e_2, \ldots , e_{k_2} \in E$ (the ground set), and let $z_e$ be the $k_2$ size subset consisting of these $k_2$ elements. For a $v: \X \rightarrow \mathbb{R}$, let $\X_{v}^+ = \{x \in \X | \sign(v(x))=1\}$ and $\X_{v}^- = \{x \in \X | \sign(v(x))=-1\}$. We first show that there exists a $v$ with the following two properties:
\begin{enumerate}
    \item if $x \in  \X_{v}^+$ then there exists a $z \subseteq z_e$ such that $z \subseteq x$.
    \item For $k\in [1,k_2]$, let $\X_{k} = \{x\in \X | \exists z \subseteq z_e, |z| =k, \text{ and } z\subseteq x\}$. 
    Then $\X_{v}^+ \cap \X_k = \frac{3}{4}(\sum_{\ell = k}^{k_2} {k_2 \choose \ell}{q-k_2 \choose n-\ell})$, for every $k \in [1,k_2]$.
    \item For every $z \subseteq z_e$, let $\X_z =\{x\in \X \mid z \subseteq x\}$. Then $|\X_v^+ \cap \X_{z}| = \frac{3}{4}{q-k \choose n-k}$, where $|z|=k$.
\end{enumerate}
We construct such a $v$ iteratively. We begin by making the following observation.
\begin{observation}
For each $k \in [1,k_2]$, $|\X_{k}| = \sum_{\ell = k}^{k_2} {k_2 \choose \ell}{q-k_2 \choose n-\ell}$.
\end{observation}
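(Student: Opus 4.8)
The plan is to establish this by a direct combinatorial count, treating $\X$ as the family of all $n$-element subsets of the ground set $E$ (the regime of this construction, in which $|\X| = {q \choose n}$). First I would rewrite the membership condition for $\X_k$ in a more usable form. By definition $x \in \X_k$ iff some $k$-element $z \subseteq z_e$ satisfies $z \subseteq x$, and I claim this holds iff $|x \cap z_e| \ge k$: such a $z$ lies in $x \cap z_e$, which forces $|x \cap z_e| \ge k$; conversely, if $|x \cap z_e| \ge k$ then any $k$ of those shared elements form a valid $z$. This equivalence, $\X_k = \{x \in \X : |x \cap z_e| \ge k\}$, is really the only step requiring any thought.

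Given this reformulation, I would partition $\X_k$ according to the exact overlap size $\ell := |x \cap z_e|$. Since $|z_e| = k_2$ and $|x| = n$ with $k_2 \le n$, the overlap $\ell$ ranges over the integers from $k$ to $k_2$. For a fixed $\ell$, building an item $x$ with $|x \cap z_e| = \ell$ amounts to two independent choices: pick the $\ell$ elements of $x$ that lie inside $z_e$, in ${k_2 \choose \ell}$ ways, and pick the remaining $n - \ell$ elements from the $q - k_2$ elements outside $z_e$, in ${q - k_2 \choose n - \ell}$ ways. Hence exactly ${k_2 \choose \ell}{q - k_2 \choose n - \ell}$ items have overlap $\ell$, and these classes are disjoint as $\ell$ varies.

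Summing these cardinalities over $\ell = k, \dots, k_2$ then gives
\[
|\X_k| = \sum_{\ell = k}^{k_2} {k_2 \choose \ell}{q - k_2 \choose n - \ell},
\]
which is the claim. I do not anticipate a genuine obstacle here: the binomial ${q - k_2 \choose n - \ell}$ is understood to vanish whenever $n - \ell < 0$ or $n - \ell > q - k_2$, so no side conditions on $q$ are needed, and the remainder of the argument is immediate.
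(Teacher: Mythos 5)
Your proof is correct and follows essentially the same route as the paper: partition the $n$-subsets containing some $k$-subset of $z_e$ by the exact size $\ell$ of their intersection with $z_e$, count each class as ${k_2 \choose \ell}{q-k_2 \choose n-\ell}$, and sum over $\ell \in [k,k_2]$. Your explicit reformulation $\X_k = \{x : |x \cap z_e| \ge k\}$ and the remark that the classes are disjoint make the partition slightly more transparent than the paper's phrasing, but the underlying count is identical.
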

\begin{proof}
Recall $\X$ consists of size $n$ subsets of $E$. For a $k\in [1,k_2]$ we wish to choose $n$ size subsets of $E$ that contain a $z\subseteq z_e$, $|z| = k$. This equivalent to choosing a fixed $\ell\geq k$ size subset of $z_e$ and then choosing the remaining $n-\ell$ elements from the $q-k_2$ elements (not part of $z_e$) in $E$. For every $\ell \geq k$ we can choose $\ell$ size subset of $z_e$ in ${k_2 \choose \ell}$ ways, and for each such choice we can choose the remaining $n-\ell$ elements in ${q-k_2 \choose n-\ell}$ ways. Since, this holds for any $\ell \in [k,k_2]$, we have $|\X_{k}| = \sum_{\ell = k}^{k_2} {k_2 \choose \ell}{q-k_2 \choose n-\ell}$.
\end{proof}
\textbf{Constructing} v: The idea is to iteratively add elements in $\X$ to $\X_{v}^+$, that is, iteratively determine the $x \in \X$ such that $\sign(v(x)) = 1$. In the first round, we arbitrarily choose $\frac{3}{4}{q-k_2 \choose n-k_2}$ from $\X_{k_2}$ and add it to $\X_{v}^+$, and the remaining $\frac{1}{4}{q-k_2 \choose n-k_2}$ are added to $\X_{v}^-$. At round $k$, assume we have constructed a $v$ satisfying the above three properties for $k' > k$, that is, 
\begin{enumerate}
    \item if $x \in  \X_{v}^+$ then there exists a $z \subseteq z_e$ such that $z \subseteq x$.
    \item  $\X_{v}^+ \cap \X_{k'} = \frac{3}{4}(\sum_{\ell = k'}^{k_2} {k_2 \choose \ell}{q-k_2 \choose n-\ell})$, for every $k' \in [k+1,k_2]$.
    \item For every $z \subseteq z_e$, let $\X_z =\{x\in \X \mid z \subseteq x\}$. Then $|\X_v^+ \cap \X_{z}| = \frac{3}{4}{q-k' \choose n-k'}$, where $|z|=k'>k$.
\end{enumerate}

Hence, at round $k$, we have ${k_2 \choose k}{q-k_2 \choose n-k}$ elements in $\X_k$ which are not yet in $\X_{v}^+$ or $\X_{v}^-$. From these elements in $\X_k$, for every $k$ size subset $z \subseteq z_e$ we arbitrarily choose $\frac{3}{4}{q-k_2 \choose n-k}$ elements containing $z$ and add the remaining $\frac{1}{4}{q-k_2 \choose n-k}$ elements to $\X_{v}^-$. Now, observe that $v$ satisfies the first two properties for every $k'\in [k,k_2]$ after this procedure. 
We argue $v$ satisfies the third property for any $z \subseteq z_e$, such that $|z| = k$.
The $n$ size sets in $\X$ containing a $z \subseteq z_e$, such that $|z| = k$, can 
be partitioned into sets containing different $\ell >=k$ size subsets of $z_e$.
In particular, we have the following combinatorial equality
\[
{q-k' \choose n-k'} = \sum_{\ell=k'}^{k_2} {k_2-k' \choose \ell-k'}{q-k_2 \choose n-\ell}
\]
In the above expression, ${q-k_2 \choose n-\ell}$ corresponds to the number of $n$ size sets that contain only a
a specific $\ell\geq k'$ size subset of $z_e$. Since our iterative procedure ensures from each such partition at least
$\frac{3}{4}$ fraction of $x$ is added to $\X_v^+$, we have that $v$ satisfies the third property.

\textbf{Optimal} $h^*\in H_k$: From the construction of $v$, it is clear that the optimal $h^* \in H_k$ for the above constructed $v$, for any $k \in [1,k_2]$ satisfies the following: for every $z\in \Z$, $h^*(z) = 1$ if and only if there exists a $z' \subseteq z_e$, $|z'|=k$, and $z' \subseteq z$. Further as $D$ is the uniform distribution, for such an $h^*$:
\[
\errexp(h^*_{k}) =  \frac{1}{4 {q \choose n}} \sum_{\ell=k}^{k_2} {k_2 \choose \ell} {q-k_2 \choose n-\ell} \,.
\]

\end{proof}

\subadditiveink*
\begin{proof}

Let $h\in H_{\mathrm{SA}}$ with a corresponding $g: \Z \rightarrow \mathbb{R}$
such that $h(z) = \sign(g(z))$ for all $z\in \Z$.
Choose an $a_+ > \sum_{i=1}^{k_1} {n \choose i}$, and $a_- \in (0,1)$.
Define a weight function $\w$ on sets of size at most $k_1$ as follows:
\[
w(z) = \begin{cases}
       a_+ & \text{\textnormal{if }} \, |z|=k_1, \, h(z)=1  \\
        a_- & \text{\textnormal{o.w.}}  
\end{cases}
\]
Let $h'\in H_{k_1}$ be the function defined by the binary weight $\w$ as defined above.
We argue that for every $x\in \X$, if $h(\phi_h(x)) = h'(\phi_{h'}(x))$.
For every $x\in \X$, $h(\phi_h(x)) = 1$ if and only if there is a $z\in \Z$
and $z\subseteq x$ such that $h(z) = 1$. Since $g$ is sub-additive, we have
\begin{equation}
    0 \leq g(z) \leq \sum_{z' \subseteq z, z'\neq z, z'\in \Z} g(z')\, .
\end{equation}
A simple recursive argument implies $h(\phi_h(x)) = 1$ if and only if there 
is a $z\subseteq x$ such that $|z|=k_1$ and $h(z) = 1$, and hence $w(z) = a_+$.
Hence, from Theorem \ref{theorem: nice structure} this implies, $h(\phi_h(x)) = 1$ if and only if $h'(\phi_{h'}(x)) = 1$.
\end{proof}

\generalizationbound*
\begin{proof}
We first argue that the VC dimension of $H_k$ is at most ${q \choose k}$. 
Let $d = \sum_{i \in [1,k]} {n \choose i}$, index the vectors in $\{0,1\}^d$ by $z \subseteq E$ (the ground set), such that $|z|\leq k$. 
Then each $z\in \Z$ can be represented by a binary vector $e_z \in \{0,1\}^d$, with the entry indexed by a $z'$ being $1$ if and only if $z' \subseteq z$.
Further, let $w \in \{a_-,a_+\}^d$ be a binary weighted vector with $a_-$ and $a_+$ as in Def. \ref{def:k-order}.
Then from the definition of $H_k$, for each $h \in H_k$, there is a $w_h \in \{a_-,a_+\}^d$ such that 
a) $h(z) = \sign(\langle w, e_z\rangle)$ for all $z\in Z$, and
b) the entry of $w$ indexed by a $z'$ with $|z'| < k$ is $a_-$. From this we observe that the VC dimension of $H_k$ is at most ${q \choose k}$, 
since each $h\in H_k$ is decided by the realization of binary weights on entries indexed by the ${q \choose k}$ sets.
Now the first part of the theorem follows by noting that the first bound is the agnostic PAC generalization guarantee for an algorithm minimizing the empirical
error in the standard classification setting with VC dimension at most ${q \choose k}$.
To prove the second part, we have $Y\in F_k$, and hence the approximation error is zero, that is, $\errexp(h^*) = 0$ (from Lemma \ref{lem: 0 error charac}). Further, \textsc{Alg} minimizes the empirical error (Theorem \ref{theorem: minimizing empirical error}), and returns an $\hat{h}$ with zero empirical error. 
\end{proof}

\systemsmallk*
\begin{proof}
The $v$ is constructed as in the proof Lemma \ref{theorem: diminishing returns}. We recall notations from the proof of Lemma \ref{theorem: diminishing returns}: $z_e$ is a $k_2$ size subset. Further, in the proof of Lemma \ref{theorem: diminishing returns} we argued that for $k\in [1,k_2]$, $h^*_k$ is such that for all $z\in \Z$, $h^*(z) = 1$ if and only if there exists a $k$ size $z' \subseteq z$ which is also a subset of $z_e$. 

Now let $k,k' \in [1,k_2]$ such that $k<k'$. Since $D$ is the uniform distribution,
to show system's utility is more for $k$ compared to $k'$
it is sufficient to show that 
\[
\sum_{x \in \X}{\one{h^*_k(\phi_{h^*_k}(x))=1}} > \sum_{x\in \X}{\one{h^*_{k'}(\phi_{h^*_{k'}}(x))=1}}
\]
From the proof of Lemma \ref{theorem: diminishing returns} and Theorem \ref{theorem: nice structure}, it follows that
\[
\sum_{x \in \X}{\one{h^*_k(\phi_{h^*_k}(x))=1}} = \sum_{x \in \X}{\one{f_{h^*_k}(x)=1}} = \sum_{\ell = k}^{k_2} {k_2 \choose \ell}{q-k_2 \choose n-\ell}
\]
Similarly,
\[
\sum_{x\in \X}{\one{h^*_{k'}(\phi_{h^*_{k'}}(x))=1}} = \sum_{\ell = k'}^{k_2} {k_2 \choose \ell}{q-k_2 \choose n-\ell}
\]
Since $k < k'$, we have
\[
\sum_{x \in \X}{\one{f_{h^*_k}(x)=1}} = \sum_{\ell = k}^{k_2} {k_2 \choose \ell}{q-k_2 \choose n-\ell} > \sum_{\ell = k'}^{k_2} {k_2 \choose \ell}{q-k_2 \choose n-\ell}
\]
implying system's utility is more for $k$ compared to $k'$.
\end{proof}

\lowerkbetter*
\begin{proof}
In Lemma \ref{thm:small_k2_is_better}, we showed there exists a user with $v$ such that for all $k,k' \in [1,k_2]$ and $k<k'$, the system has better utility against the 
optimal choice function in $H_{k}$ than in $H_{k'}$.
Since the choice of $k$ the user can make is bounded by $k_2$, a lower $k_2$
maximizes the worst-case payoff to the system.
\end{proof}
\end{document}